\newcommand{\E}{\mathbb{E}}
\newcommand{\di}{\operatorname{diag}}
\newcommand{\tr}{\operatorname{Tr}}
\newcommand{\thetapop}{\theta^*_{\rm{pop}}}
\newcommand{\thetaperfok}{\theta_0^{\text{perfo}}}
\newcommand{\N}{\mathcal{N}}
\newcommand{\R}{\mathbb{R}}
\newcommand{\D}{\mathcal{D}}
\newcommand{\fixedriskeq}{\mathcal R_{\mathrm{eq}}}
\newcommand{\ofixedriskeq}{\overline{\mathcal R}_{\mathrm{eq}}}
\newcommand{\fixedriskeqs}{\mathcal R_{\mathrm{eq}}^*}
\newcommand{\lambdaeqs}{\lambda^*_{\mathrm{eq}}}
\newcommand{\lambdaeqsz}{\lambda^*_{\mathrm{eq}, D=0}}
\newcommand{\fixedriskeqa}{\mathcal R_{\mathrm{eq}, 1}}
\newcommand{\fixedriskeqb}{\mathcal R_{\mathrm{eq}, 2}}
\newcommand{\tfixedriskeqa}{\widetilde{\mathcal R}_{\mathrm{eq}, 1}}
\newcommand{\tfixedriskeqb}{\widetilde{\mathcal R}_{\mathrm{eq}, 2}}
\newcommand{\tfixedriskeq}{{\widetilde{\mathcal R}}_{\mathrm{eq}}}
\theoremstyle{plain}
\newtheorem{theorem}{Theorem}[section]
\newtheorem{lemma}[theorem]{Lemma}
\newtheorem{corollary}[theorem]{Corollary}
\theoremstyle{definition}
\newtheorem{assumption}[theorem]{Assumption}
\theoremstyle{remark}
\newcommand{\revised}[1]{\textcolor{black}{#1}}
\icmltitlerunning{Optimal Regularization for Performative Learning}
\begin{document}

\twocolumn[
  \icmltitle{Optimal Regularization for Performative Learning}

  \icmlsetsymbol{equal}{*}

  \begin{icmlauthorlist}
    \icmlauthor{Edwige Cyffers}{yyy}
    \icmlauthor{Alireza Mirrokni}{sch}
    \icmlauthor{Marco Mondelli}{sch}
  \end{icmlauthorlist}

  \icmlaffiliation{yyy}{CNRS, LAMSADE, Dauphine PSL}
  \icmlaffiliation{sch}{Institute of Science and Technology Austria,
Klosterneuburg, Austria}

  \icmlcorrespondingauthor{Edwige Cyffers}{edwige.cyffers@cnrs.fr}
  \icmlcorrespondingauthor{Marco Mondelli}{marco.mondelli@ist.ac.at}

  \icmlkeywords{Performative Learning, Regularization, High Dimension Linear Regression}

  \vskip 0.3in
]

\printAffiliationsAndNotice{}  %

\begin{abstract}
In performative learning, the data distribution reacts to the deployed model—for example, because strategic users adapt their features to game it—which creates a more complex dynamic than in classical supervised learning. One should thus not only optimize the model for the current data but also take into account that the model might steer the distribution in a new direction, without knowing the exact nature of the potential shift. We explore how regularization can help cope with performative effects by studying its impact in high-dimensional ridge regression. 
We show that, while performative effects worsen the test risk in the population setting, when moving to the over-parameterized regime where the number of features exceeds the number of samples, the optimal regularization in the presence of performativity helps reduce the variance in the estimated parameters, thereby improving performance.
Furthermore, we prove that the optimal regularization scales with the overall strength of the performative effect, making it possible to set the regularization in anticipation of this effect. We illustrate this finding through empirical evaluations of the optimal regularization parameter on both synthetic and real-world datasets.
\end{abstract}

\section{Introduction}
\looseness=-1
When machine learning predictions affect user outcomes, deployed models can induce shifts in the data distribution. These shifts may result from strategic user behavior---where individuals try to secure favorable outcomes such as loan approval or college admission \citep{pmlr-v130-bechavod21a, JMLR:v24:22-0131, pmlr-v202-wang23ap}---or from self-fulfilling prophecies, for example in economic forecasts, recommendation systems, or predictive policing \citep{morgenstern1928wirtschaftsprognose, pmlr-v81-ensign18a,Ursu2015}. Such distribution shifts can undermine predictive performance over time, amplifying bias and reducing model quality \citep{pmlr-v202-taori23a,pmlr-v235-pan24d}. Performative learning \citep{perdomo_performative} addresses this feedback loop by parameterizing the data distribution with the same parameter as the model. This allows optimization to account not only for the training loss but also for the steering of the data distribution.

Unfortunately, while optimizing model parameters is a classical problem in machine learning, estimating the performative effect on the distribution is generally infeasible, as the distribution is unknown to the learner. Several algorithms have been proposed to approximate this effect \citep{miller_outside, izzo2021learn, NEURIPS2024_7de66547}, typically by assuming that it depends on a small number of parameters in a sufficiently simple way that can be learned across the first few deployments. However, these methods are limited to relatively toy examples in small dimensions and may be impractical in high-dimensional settings. In particular, many approaches require numerous repeated deployments, alternating between loss minimization and distribution steering. Yet in practice, deployment often happens only once after full training. This makes repeated risk minimization (RRM) \citep{perdomo_performative}---where one trains until convergence before deployment, and the number of deployments is small---the default in many applications, even though it remains largely unaddressed by existing mitigation methods. This motivates a shift away from exact estimation of performative effects toward the study of principled choices of loss functions and models, and in particular regularization is a natural and tractable candidate. 

In this work, we study how regularization mitigates performative effects in repeated retraining. Unlike estimation-based methods, regularization does not depend on a precise characterization of the distribution shift, avoids their limitations, and introduces little computational overhead. Prior work suggests its potential benefits: \citet{perdomo_performative} proved that retraining converges to an optimal solution under assumptions tied to the strong convexity of the loss, which ridge regularization can enforce; more recently, \citet{NEURIPS2024_7de66547} showed that, in classification, the performative optimum can be interpreted as a regularized version of the non-performative problem, with numerical evidence that ridge penalties perform well in small-dimensional classification tasks. However, these results, as most of the performative literature, do not cover the high-dimensional regime. In particular, regularization may encourage reliance on spurious features \citep{bombari2025spuriouscorrelationshighdimensional} in higher dimension, especially when such features are reinforced by performativity.

To better understand these tradeoffs, we study the role of ridge regularization in linear regression in the presence of performativity and spurious features. We consider both \emph{(i)} the population regime, with enough data to recover exactly the unknown vector of regression coefficients at each deployment, and \emph{(ii)} the over-parameterized regime, where the number of data samples is a fixed fraction of the number of parameters. This last setting, though simple, captures behaviors relevant to deep learning, such as double descent \citep{doi:10.1073/pnas.1903070116,hastie2022surprises}, benign overfitting \citep{Bartlett_2020} and adversarial robustness \citep{fawzi2016analysisclassifiersrobustnessadversarial, ribeiro2023regularization}. In performative learning, it also brings the additional advantage that parameters and data live in the same space, simplifying the encoding of performative effects. The theoretical framework we develop enables us to provide strong evidence for the effectiveness of regularization under performativity, and to show how regularization should be scaled.
More precisely, our contributions are summarized below.
\begin{enumerate}[leftmargin=1em]
\item In the population regime, we characterize how the risk depends on magnitude and direction of the performative effect, as well as on spurious features (Theorem \ref{thm:pop}). We find that the optimal regularization is proportional to the strength of the performative effect and it mitigates the performance loss due to performativity: zero excess risk is achieved with identity covariance and constant entries of the performative vector, while the risk remains significant in the presence of a complex covariance structure and highly variable entries of the performative vector (Corollary \ref{cor:pop}).
\item In the proportional regime with random data, we establish a deterministic equivalent of the performative fixed point, depending only on population covariance and regularization (Theorem \ref{thm:over}). The analysis of this deterministic equivalent then unveils a remarkable phenomenology: for small noise variance, the optimal regularization moves in the same direction as the performative effect on predictive features, while it moves in the opposite direction as the performative effect on spurious features; remarkably, the optimally-regularized risk improves in the presence of a performative effect that reinforces existing trends. 
\item  We illustrate these behaviors on both synthetic data and real-world datasets (Housing, LSAC), showing empirically that our findings provide valuable insights beyond the assumptions in the theory %
and extend to other regularizers.
\end{enumerate}

\vspace{-.5em}
\section{Related work}
\vspace{-.5em}

\paragraph{Performative learning.} Performative learning was introduced by \citet{perdomo_performative}, who showed that retraining converges under assumptions including strong convexity. Subsequent works demonstrated that retraining can enable adaptation over time \citep{li2022, Drusvyatskiy2023, brown2022performative, wang2023constrainedoptimizationdecisiondependentdistributions} but can also fail dramatically \citep{miller_outside, izzo2021learn, NEURIPS2024_7de66547}. Several works evaluate performance on the initial distribution rather than the induced one \citep{demirel2024adjustingpretrainedbackbonesperformativity, Tsoy2025OnTI,pmlr-v202-taori23a}. Label shift is standard in domain adaptation \citep{label2, Cai2021label} and can encode performative effects, such as placebo effects or traffic prediction in \citep{nikolalabel, hardt2023performative}. The role of model choice has been studied in the related setting of collective action \citep{pmlr-v235-ben-dov24a}. Improved performance due to performativity was observed by \citet{pmlr-v130-bechavod21a}, and our work provides further evidence supporting this claim. Finally, performative learning effects tend to be harder to learn in high-dimensional settings, as noted by \citet{pmlr-v162-jagadeesan22a, bracale2025learningdistributionmapreverse}. Our work is the first to use tools from high-dimensional statistics to study performative learning, to our knowledge.

\vspace{-.7em}

\paragraph{High-dimensional regression and role of ridge regularization.} The high-dimensional setting where numbers of features and samples %
scale proportionally was considered by a rich line of work: the test error of ridgeless and ridge regression is characterized by \citet{hastie2022surprises, wu2020optimal,richards2021asymptotics,tsigler2023benign}; %
max-margin classification is studied by   \citet{montanari2019generalization,deng2022model}, 
model compression by \citet{chang2021provable}, distribution shift by \citet{patil2024optimal,mallinar2024minimumnorm}, transfer learning by  \citet{yang2020precise,song2024generalization} and learning from surrogate data by  \citet{kolossovtowards,jain2024scaling,rezaei2025high}.The role of ridge regularization has also been studied. \citet{hastie2022surprises} optimally tune the ridge penalty, while \citet{richards2021asymptotics} give conditions for the optimality of ridgeless interpolation. The sign of the optimal ridge penalty was studied for the standard in-distribution regression setup \citep{wu2020optimal, tsigler2023benign}, as well as out-of-distribution \citep{patil2024optimal}: these works give conditions under which the optimal ridge is negative, associating the phenomenon of negative optimal regularization to over-parameterization. Our paper shows that such a phenomenon occurs also in the population setting, due to performative effects.
The distribution of the empirical risk minimizer was established by  \citet{han2023distribution}. Leveraging this characterization, spurious correlations were studied by \citet{bombari2025spuriouscorrelationshighdimensional} and weak-to-strong generalization by \citet{ildizhigh}. We will also build on these tools to analyze the risk of repeated risk minimization.

\vspace{-.3em}
\section{Preliminaries and problem setup}
\vspace{-.3em}

In this section, we introduce our performative regression setting. We consider a sequence of model deployments $(\theta_k)_{k\ge 0}$, and let $\D(\theta)$ be the dataset generated in reaction to the deployment of $\theta$. At each deployment, $n$ new samples are collected, and the model is fully retrained. %
This setting, known as repeated risk minimization (RRM) \citep{perdomo_performative}, reflects real-world scenarios where deployments are costly and thus limited in number. It also aligns with the fact that convergence to the fixed point is fast and requires only a few iterations to reach equilibrium in practice.
We encode the performative effect as a shift in the label, where each feature’s contribution varies depending on an additional linear term in the model parameter. %

\begin{assumption}[Regression performative model]
For $\theta \in \R^p$, samples from $\D(\theta)$ are taken i.i.d.\ with features $x$ having zero mean and covariance $\Sigma$ %
drawn independently of $\theta$ and with the label $y$ given by
\vspace{-.3em}
\begin{equation}\label{eq:data}    
y = x^\top \thetapop + x^\top D \theta + w, \quad w \sim \N(0, \sigma^2).
\end{equation}
We assume $p = 2d$, $(\thetapop)^{\top} = (a^\top, 0)$ with $a$ having \revised{zero mean and} covariance $I_d/d$, and $D = \di(b, c)$ where $b, c \in \R^d$ with $\|b\|_{\infty}, \|c\|_{\infty} < 1$.
\label{assum:model}
\end{assumption}

This model generalizes the one-dimensional setting (Example 2.2) in \citet{perdomo_performative}, where labels follow a binomial distribution with parameter $\tfrac{1}{2} + x \theta^*  + x \bar{b} \theta$, for $\theta^* \in (0, \tfrac{1}{2})$ and $\bar{b} < \tfrac{1}{2}-\mu$. Focusing on label shifts is natural in regression: it keeps the feature distribution centered and unchanged across deployments (this can be enforced via pre-processing) despite performative effects, and it can encode scenario such as placebo effects \citep{nikolalabel}. We do not cover feature shifts. However, when a feature shift affects all data points in a regression task, replacing $x$ by $x+\theta$ can be absorbed by centering the data, and prior work provides numerical evidence that regularization remains beneficial when feature shifts affect only one class in binary classification~\cite{NEURIPS2024_7de66547}.  %

The performative term $x^\top D \theta$ enforces coordinate-wise effects. This is %
consistent with %
previous works \citep{NEURIPS2024_7de66547,izzo2021learn, hardt2023performative} and also %
close to the model 
$y = x^\top \thetapop + \mu^{\top} \theta + w$ studied by \citet{miller_outside}, where the performative effect does not depend on $x$ but only on a fixed vector $\mu$. Assuming linearity in $\theta$ is reasonable, as performative effects are expected to be moderate to avoid iterations to diverge. We specify in the rest of the paper when the fact that $D$ is diagonal is needed. Intuitively, diagonal coefficients can be interpreted directly as the modifications made by a strategic agent, depending on how the feature is used and the cost of modifying it. Most existing methods impose explicit constraints on the performative effect \citep{miller_outside}, and our setting is no more restrictive: we only require $\|b\|_\infty, \|c\|_\infty < 1$. %
We set the second half of $\thetapop$ to zero to represent spurious features, and $c$ captures the corresponding performative effect. %
This %
enables us to express correlations between predictive and spurious features via %
the block structure of the covariance
\vspace{-.3em}
\begin{equation*}\label{eq:block}
\Sigma=\begin{bmatrix}\Sigma_1&\Sigma_{12}\\ \Sigma_{12} &\Sigma_2\end{bmatrix},  
\end{equation*}
where $\Sigma_1$ denotes the covariance for the predictive part, $\Sigma_2$ the covariance for the spurious part, and $\Sigma_{12}$ the covariance between the two blocks. 
Under this setting, RRM corresponds to solving %
\begin{equation}
    \hspace{-.2em}\theta_{k} = \arg\min_{\theta\in\mathbb{R}^p}\left\{\frac{1}{2n}\sum_{i=1}^n \ell(x_i^{\scriptscriptstyle(k-1)}, y_i^{\scriptscriptstyle(k-1)}; \theta) \hspace{-.2em}+\hspace{-.2em}\frac{\lambda}{2}\|\theta\|_2^2\right\},
    \label{eq:ERM}
\end{equation}
where $\{(x_i^{(k-1)},y_i^{(k-1)})\}_{i=1}^n\stackrel{\mathrm{i.i.d.}}{\sim}\mathcal{D}(\theta_{k-1})$ and $\ell$ is the squared loss. This defines a recurring sequence in both population and over-parameterized regimes. In the population case, the sequence converges in parameter space to a fixed vector $\thetapop$  (\Cref{sec:pop}), while in the over-parameterized case the vector varies at each iteration but the excess risk still converges deterministically (\Cref{sec:over}).

We evaluate the test risk when the final model is deployed on the untouched distribution $\D(\theta = 0)$. %
Testing on $\D(\theta = 0)$ is particularly relevant for long-term fairness, as it prevents bias amplification over time \citep{pmlr-v81-ensign18a, pmlr-v202-taori23a} or steering the distribution toward undesirable regimes that decrease the risk by collapsing the data distribution to make it easier to predict (e.g., reducing entropy or producing a single possible label in a classification task) \citep{Tsoy2025OnTI,demirel2024adjustingpretrainedbackbonesperformativity}. This choice also enables testing whether regularization increases reliance on spurious features \citep{bombari2025spuriouscorrelationshighdimensional}, which one wants to avoid. 
We thus aim to minimize the following excess risk:
\vspace{-.3em}
\begin{equation}
\begin{aligned}
    \mathcal{R}(\Sigma, \theta, \thetapop) := &\E_{\D(\theta = 0)}\left[ (y - x^\top \theta)^2]\right] - \sigma^2 \\=&
    \|\Sigma^{1/2}(\theta-\thetapop)\|_2^2 , 
\end{aligned}
    \label{eq:risk}
\end{equation}
where %
we have subtracted the Bayes risk $\sigma^2$. %
In \Cref{sec:pop}, we analyze this risk in the population setting, where enough data is available to exactly recover the parameter vector and  the optimal solution is $\thetapop$ (as suggested by the notation). %
In \Cref{sec:over}, we then focus on the %
over-parameterized regime where $p>n$. %

We note that, in the population setting, testing on $\D(\theta)$ gives zero excess risk, thus trivializing the problem. We provide results for testing on $\D(\theta)$ in the over-parameterized setting at the end of Section \ref{sec:over}. 

\vspace{-.5em}
\section{Analysis in the population setting}
\label{sec:pop}
\vspace{-.5em}

In this section, we tackle the population regime where there are enough samples from $\D(\theta_k)$ at each deployment to compute exactly the next regressor, as would typically happen in a low-dimensional setting. The sequence $(\theta_k)_k$ is thus deterministically defined by
\vspace{-.3em}
\begin{equation}\label{eq:recpop}
\begin{aligned}
    \theta_k &= (\Sigma +\lambda I_p)^{-1} \E_{(x,y)\sim \D(\theta_{k-1})}[xy]\\ &=  (\Sigma +\lambda I_p)^{-1} (\Sigma \thetapop + \Sigma D \theta_{k-1}),
\end{aligned}
\end{equation}
where in the second equality we plug back the definition of the current data distribution in (\ref{eq:data}).

\begin{figure*}[h]
    \centering
    \begin{subfigure}{0.32\textwidth}
        \includegraphics[width=\linewidth]{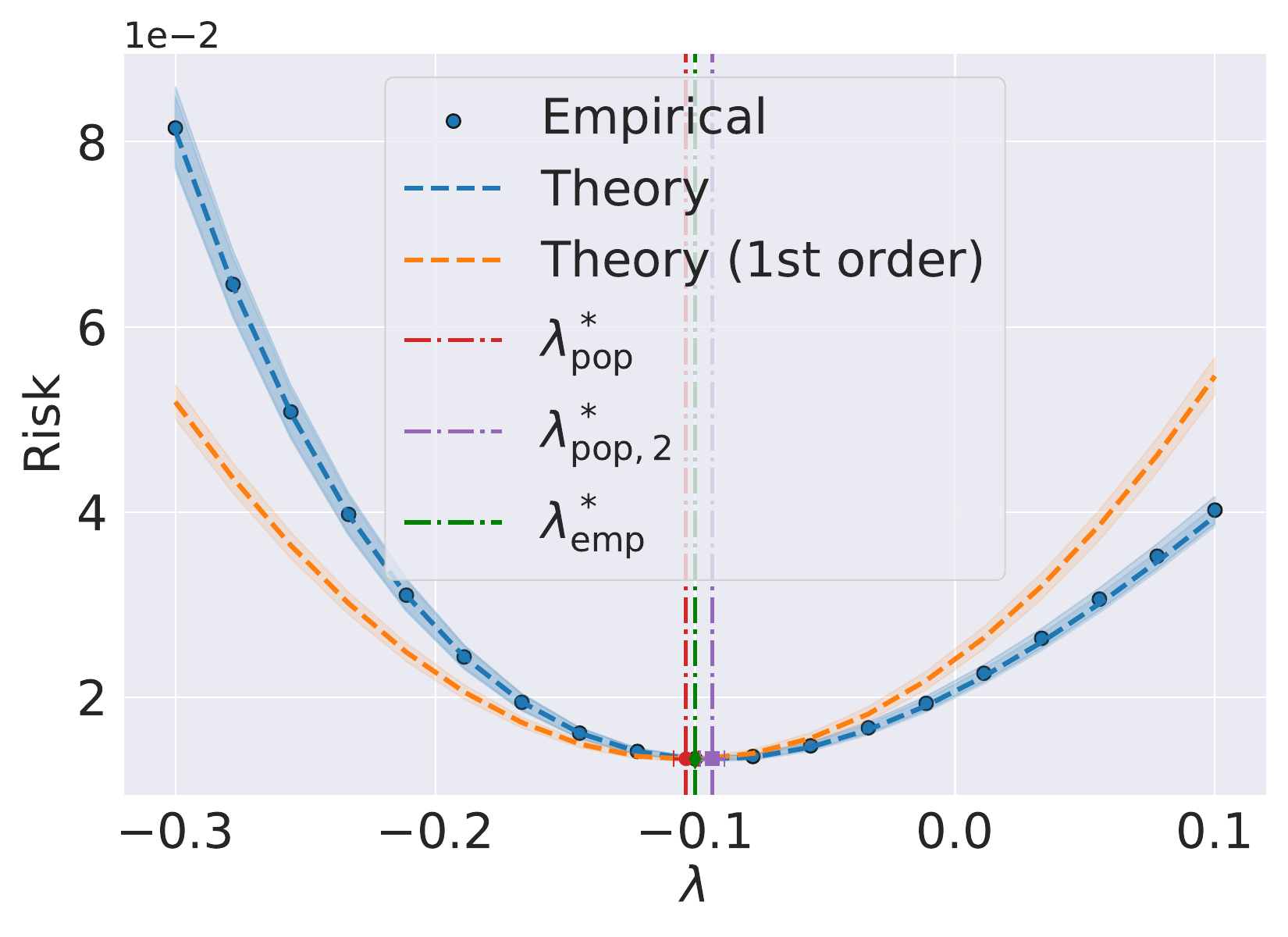}
        \caption{\(\bar b = -0.1\)}
        \label{fig:expandtheo}
    \end{subfigure}
    \hfill
    \begin{subfigure}{0.32\textwidth}
        \includegraphics[width=\linewidth]{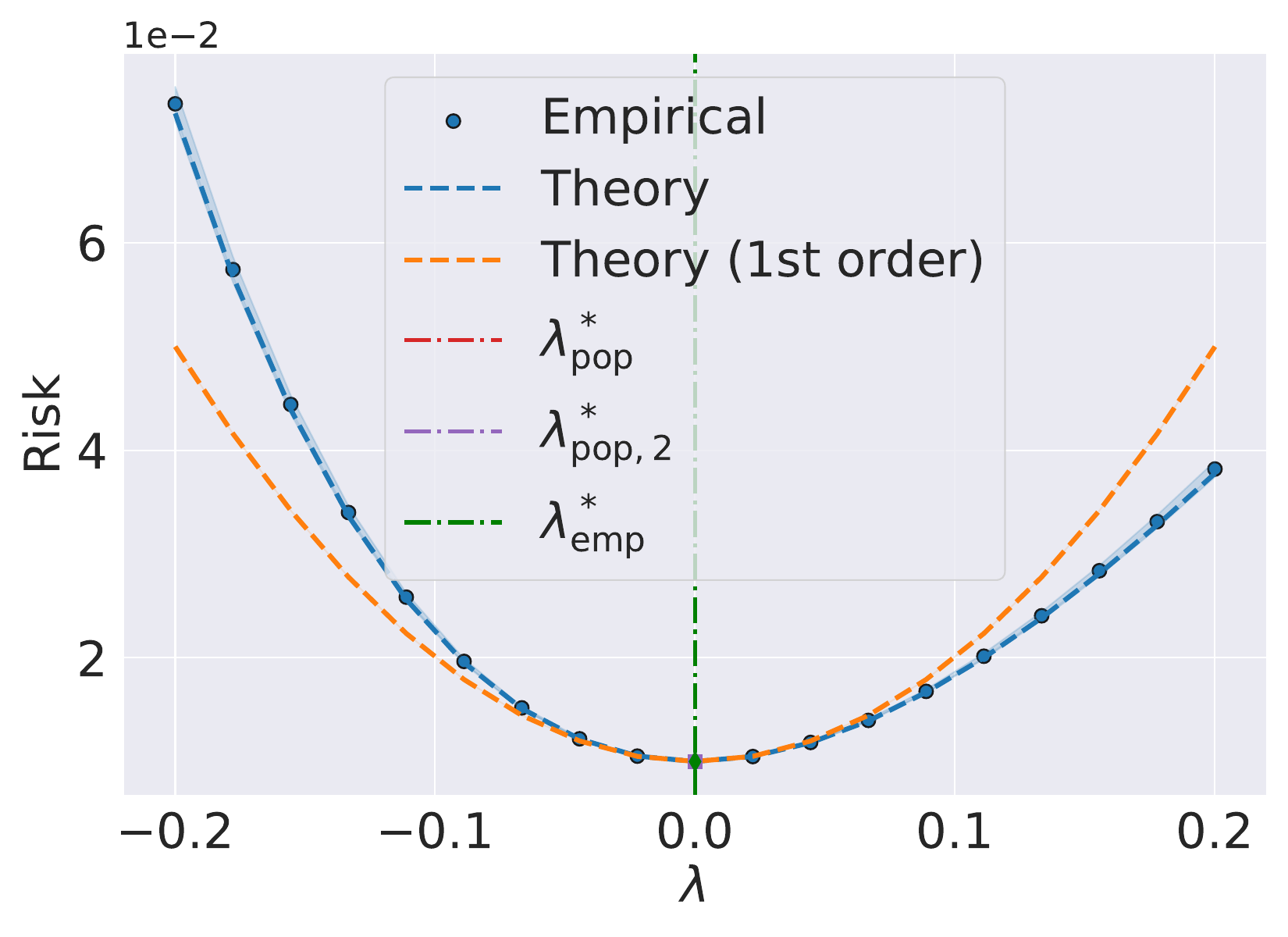}
        \caption{\(\bar b = 0\)}
        \label{fig:covariance}
    \end{subfigure}
    \hfill
    \begin{subfigure}{0.32\textwidth}
        \includegraphics[width=\linewidth]{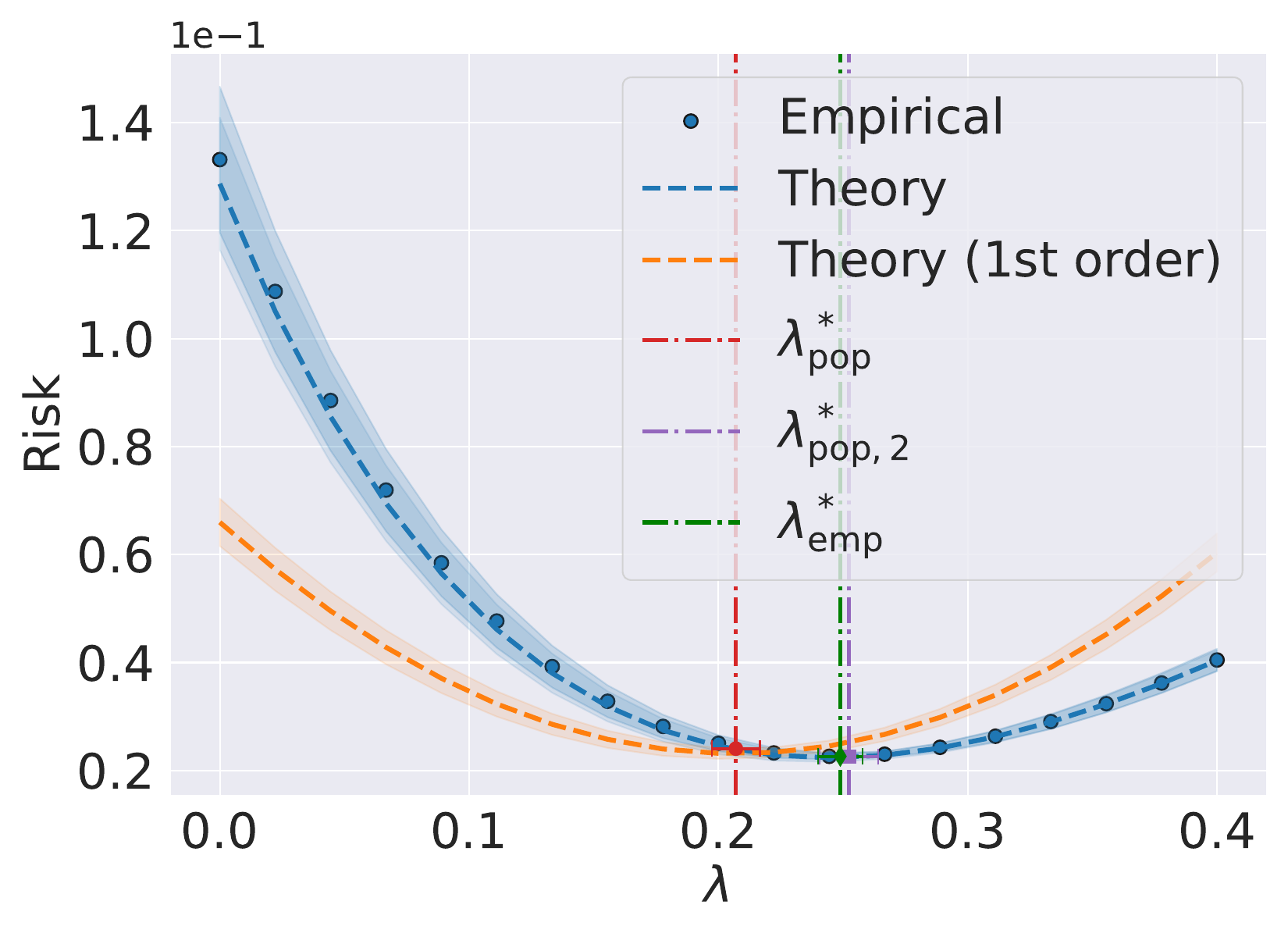}
        \caption{\(\bar b = 0.2\)}
        \label{fig:c}
    \end{subfigure}
\vspace{-.3em}
    \caption{Excess risk at the performative fixed point $\theta^{\infty}$ in (\ref{eq:fppop}), as a function of ridge regularization $\lambda$, for $d = 100$, $\Sigma=I_p$, entries of $b$ uniform in \(\left[\min\{0, 2 \bar b\}, \max\{0, 2 \bar b\} \right]\), $c=0$ and $\sigma = 0.1$. Empirical values (blue dots) are computed from 20 i.i.d.\ trials on $a$ and 5 i.i.d.\ trials on $b$, with error band at 1 standard deviation. Theoretical predictions (blue dashed curves) are from (\ref{eq:fppopavg}) and match perfectly empirical values. First-order approximations (orange dashed curves) are given by $\widetilde{\mathcal R}_{\rm pop}(D, \lambda, \Sigma)$ in (\ref{eq:popapex}) and still provide a good match when $\lambda$ is near-optimal. The green vertical line is the optimal regularization obtained by numerically optimizing the excess risk of $\theta^\infty$ ($\lambda^*_{\rm emp}$), the red one is the first-order approximation ($\lambda_{\rm pop}^*$ from (\ref{eq:optpopap1})) and the violet one the second-order approximation ($\lambda^*_{{\rm pop}, 2}$ minimizing (\ref{eq:pophigh})).}
    \label{fig:pop1}
\end{figure*}

\vspace{-.5em}

\paragraph{Excess risk at the performative fixed point.}
By unrolling (\ref{eq:recpop}), \revised{for any arbitrary (possibly non-diagonal) matrix $D$}, we have that the sequence $(\theta_k)_k$ converges at an exponential rate to the fixed point 
\vspace{-.3em}
\begin{equation}\label{eq:fppop}
\theta^{\infty} = (I_p +\lambda \Sigma^{-1} - D)^{-1}\thetapop .    
\end{equation}
The formal statement, including an explicit convergence rate, is deferred to Lemma \ref{lemma:cr} in Appendix \ref{app:pop}. %
By inserting (\ref{eq:fppop}) into (\ref{eq:risk}) and taking the expectation with respect to $\thetapop$, we have
\vspace{-.3em}
\begin{equation}\label{eq:fppopavg}
\begin{split}
\E_{\thetapop}\mathcal{R}(\Sigma, \theta^{\infty}, \thetapop) ={}&\E_{\thetapop}[(\thetapop)^\top A^\top \Sigma A \thetapop] \\&{}=\frac{1}{d} \tr\left[(A^\top \Sigma A)_{1}\right],
\end{split}
\end{equation} 
where we define $A := (\Sigma + \lambda I_p- \Sigma D )^{-1} \Sigma - I_p$, use $(\thetapop)^{\top} = (a^\top, 0)$ with $a$ having \revised{zero mean and} covariance $I_d/d$ and, given a $p\times p$ matrix $M$, denote by $(M)_1$ its top-left $d\times d$ block. %
This leads to the following approximation for the excess risk, proved in Appendix~\ref{app:pop}. 
\begin{theorem}[Excess risk -- population]\label{thm:pop}
    Let $F = D - \lambda \Sigma^{-1}$. Then, %
    we have %
\vspace{-.3em}
\begin{equation}\label{eq:popapex}
    \begin{split}
\E_{\thetapop}\mathcal{R}(\Sigma, \theta^{\infty}, \thetapop) =
\widetilde{\mathcal R}_{\rm pop}(D, \lambda, \Sigma)  + O(\|F\|_{\mathrm{op}}^2),\\
\widetilde{\mathcal R}_{\rm pop}(D, \lambda, \Sigma)  :=  \frac{1}{d}\tr[\di(b^2)\Sigma_1]  -  2 \lambda \bar{b}  +  \frac{1}{d}\lambda^2 \tr(S_1),
    \end{split}
\end{equation}
where $\|\cdot\|_{\mathrm{op}}$ denotes the operator norm, $\bar{b} := \frac{1}{d}\tr[\di(b)] =
\frac{1}{d}\sum_{i=1}^d b_i$, $b^2:=[b_1^2, \ldots, b_d^2]\in\mathbb R^d$ and $S_1 = (\Sigma_1 - \Sigma_{12}\Sigma_2^{-1} \Sigma_{21})^{-1}$ is the Schur complement of $\Sigma$.
\end{theorem}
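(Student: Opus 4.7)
Starting from (\ref{eq:fppopavg}), my plan is to rewrite $A$ in a form that makes the dependence on $F = D - \lambda \Sigma^{-1}$ transparent, expand by Neumann series, and read off the quadratic contribution via the block structure. First I observe
$\Sigma + \lambda I_p - \Sigma D = \Sigma(I_p + \lambda \Sigma^{-1} - D) = \Sigma(I_p - F)$,
so that $(\Sigma + \lambda I_p - \Sigma D)^{-1}\Sigma = (I_p - F)^{-1}$ and therefore $A = (I_p - F)^{-1} - I_p$. Under Assumption \ref{assum:model}, $\|D\|_{\mathrm{op}} < 1$, and for $\lambda$ small compared to $\lambda_{\min}(\Sigma)$ we have $\|F\|_{\mathrm{op}} < 1$, so the Neumann series $(I_p - F)^{-1} = \sum_{k \geq 0} F^k$ converges. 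Truncating gives $A = F + O(\|F\|_{\mathrm{op}}^2)$ in operator norm, which plugged into $A^\top \Sigma A$ and compared with $F^\top \Sigma F$ yields an error that remains $O(\|F\|_{\mathrm{op}}^2)$ (up to constants depending on $\|\Sigma\|_{\mathrm{op}}$) after taking a top-left block and tracing, using $|\tr[(R)_1]/d| \leq \|R\|_{\mathrm{op}}$.

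Second, since $D$ is symmetric (diagonal) and so is $\Sigma^{-1}$, I expand
$F^\top \Sigma F = (D - \lambda \Sigma^{-1})\,\Sigma\,(D - \lambda \Sigma^{-1}) = D\Sigma D - 2\lambda D + \lambda^2 \Sigma^{-1}$.
Using $D = \di(b,c)$ together with the block form (\ref{eq:block}), direct block multiplication gives $(D\Sigma D)_1 = \di(b)\,\Sigma_1\,\di(b)$ and $(D)_1 = \di(b)$, while the standard block-inverse (Schur-complement) identity identifies $(\Sigma^{-1})_1$ with $S_1 = (\Sigma_1 - \Sigma_{12}\Sigma_2^{-1}\Sigma_{21})^{-1}$. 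Combining,
$(F^\top \Sigma F)_1 = \di(b)\,\Sigma_1\,\di(b) - 2\lambda\,\di(b) + \lambda^2 S_1$.

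Third, I take the normalized trace and apply the cyclic property $\tr[\di(b)\Sigma_1 \di(b)] = \tr[\Sigma_1\,\di(b)^2] = \tr[\di(b^2)\Sigma_1]$, together with $\tr[\di(b)] = d\bar b$. This yields
$\frac{1}{d}\tr[(F^\top \Sigma F)_1] = \frac{1}{d}\tr[\di(b^2)\Sigma_1] - 2\lambda \bar b + \frac{\lambda^2}{d}\tr(S_1)$,
which is exactly $\widetilde{\mathcal R}_{\rm pop}(D,\lambda,\Sigma)$. Adding the error bound from the first step finishes the proof.

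The main obstacle, such as it is, lies in step one: confirming that the operator-norm remainder $A - F = O(\|F\|_{\mathrm{op}}^2)$ really does propagate to the stated $O(\|F\|_{\mathrm{op}}^2)$ bound on the scalar risk, with a constant that is uniform in $d$ and absorbs both $\|\Sigma\|_{\mathrm{op}}$ and the normalization $1/d$. The rest is an essentially mechanical block computation, and the role of $c$ (the performative effect on spurious coordinates) is noteworthy: it drops out entirely from the leading quadratic term because we are only tracing the top-left block, the intuition being that at first order a label shift on spurious directions does not affect the population regressor restricted to the informative subspace.
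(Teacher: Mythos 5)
Your proposal is correct and follows essentially the same route as the paper's proof: both rewrite $A = (I_p - F)^{-1} - I_p = \sum_{i\ge 1}F^i$, truncate to first order so that $A^\top\Sigma A \approx F\Sigma F = D\Sigma D - 2\lambda D + \lambda^2\Sigma^{-1}$, and then read off the top-left block trace using $(\Sigma^{-1})_1 = S_1$ and the cyclic property. Your added care about the error propagation through the trace and the uniformity in $d$ is a welcome bonus, but the argument is the same.
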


The matrix $F = D - \lambda \Sigma^{-1}$ naturally appears in the computation, and its norm can be bounded explicitly as a function of $b$, $c$, $\lambda$ and $\Sigma$ by applying  Weyl's inequality (see Lemma \ref{lemma:weyl} in Appendix \ref{app:pop}) and ensuring that the approximation $O(\|F\|_{\mathrm{op}}^2)$ is tighter than  $O(\max(\|b\|_{\infty}, \|c\|_{\infty}, \lambda)^2)$, since $b$ and $c$ can be partially canceled by $\lambda$. In fact, this cancellation occurs when $\lambda$ is near-optimal, resulting in an accurate approximation, see Figure \ref{fig:pop1}.

\vspace{-.5em}

\begin{figure*}[tbh]
    \centering
    \begin{subfigure}{0.32\textwidth}
        \includegraphics[width=\linewidth]{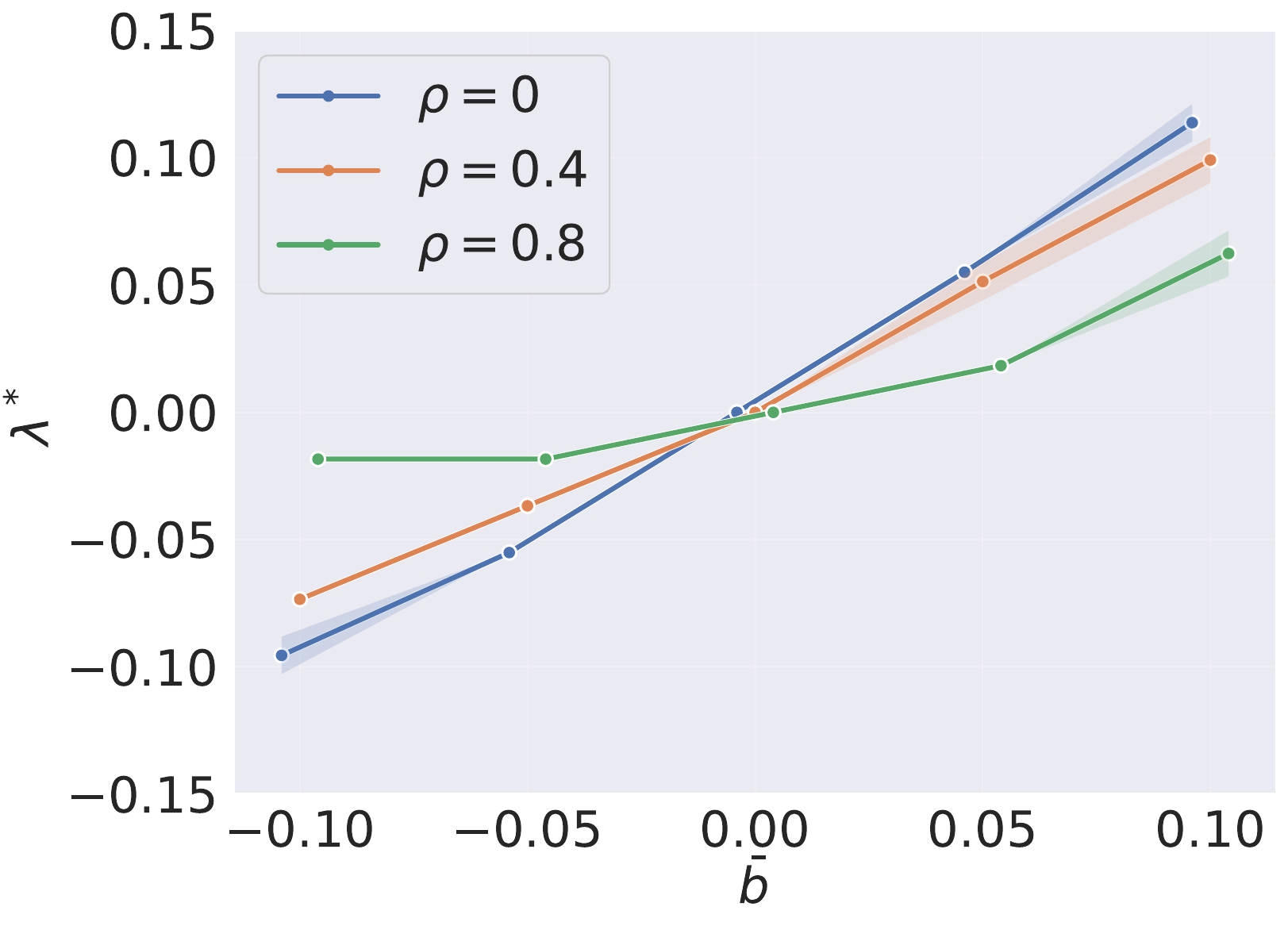}
        \caption{Spurious correlations}
        \label{fig:spurious}
    \end{subfigure}
    \hfill
    \begin{subfigure}{0.32\textwidth}
        \includegraphics[width=\linewidth]{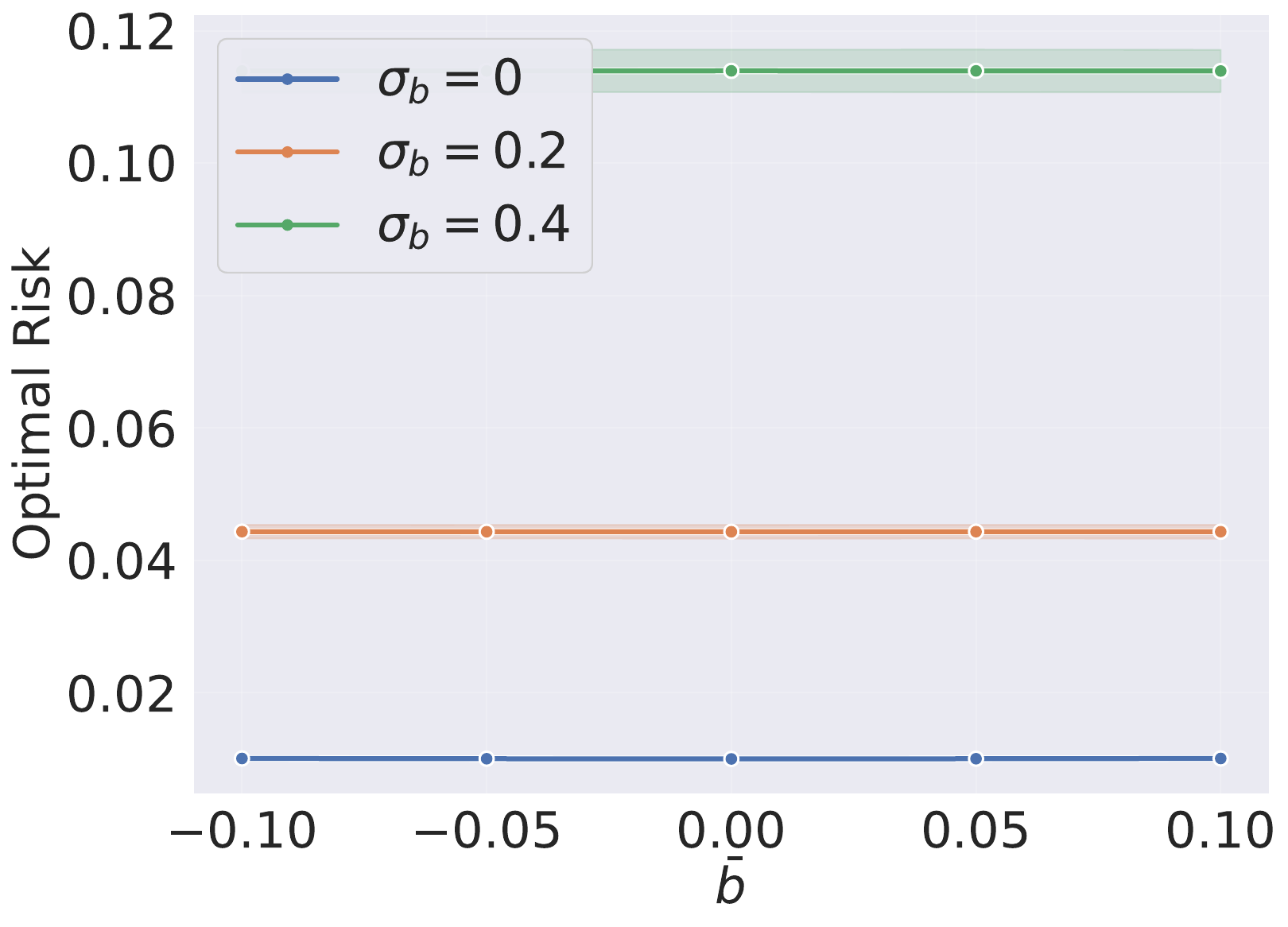}
        \caption{Variance of entries of $b$}
        \label{fig:variance}
    \end{subfigure}
    \hfill
    \begin{subfigure}{0.32\textwidth}
        \includegraphics[width=\linewidth]{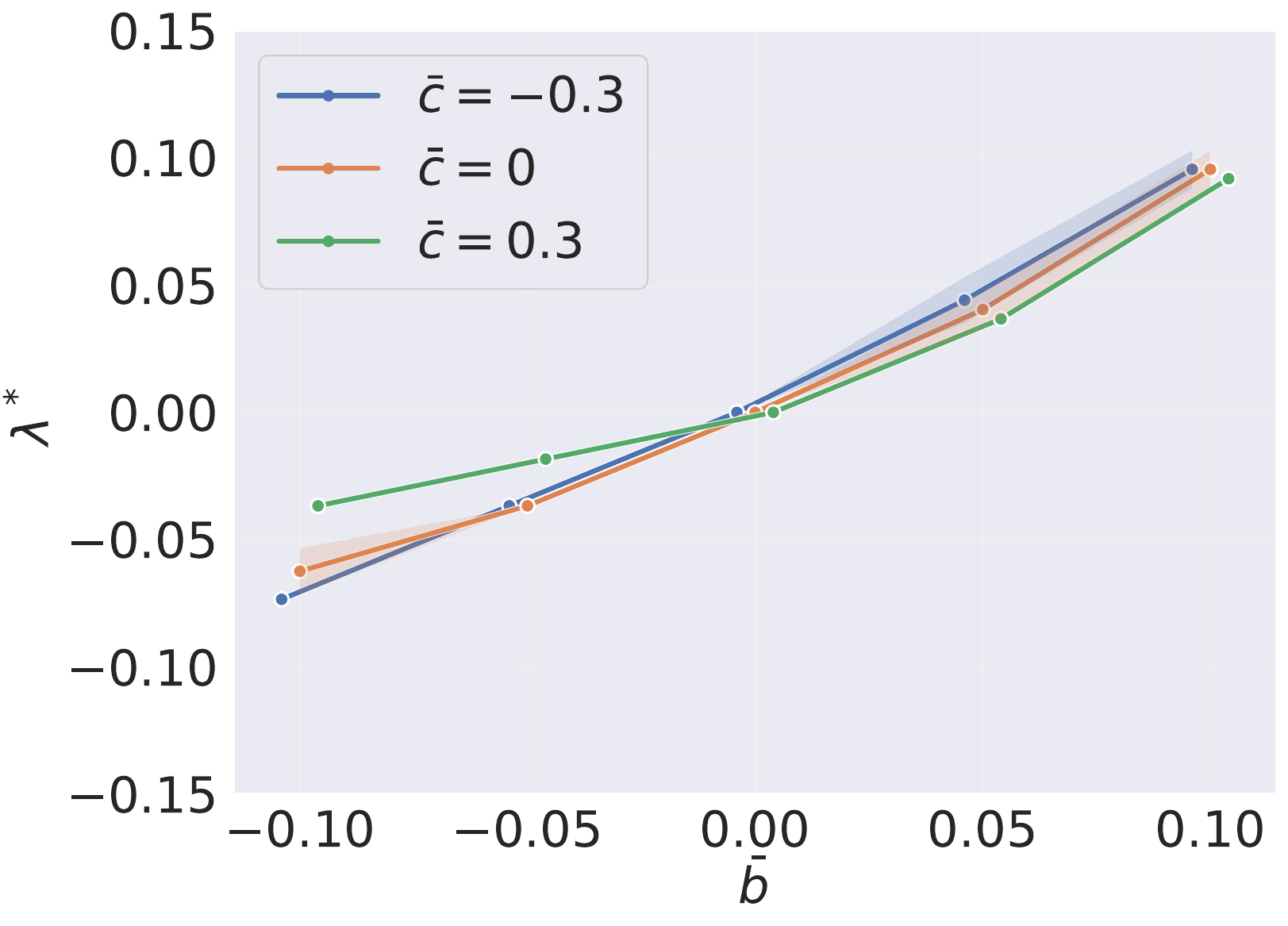}
        \caption{Spurious performative effect}
        \label{fig:spurperfo}
    \end{subfigure}
       \vspace{-.3em} \caption{Optimal regularization and risk for the performative fixed point $\theta^{\infty}$ in (\ref{eq:fppop}), with $d = 100$, $\Sigma_1=\Sigma_2=I_d$, $\Sigma_{12}=\rho I_d$. Values are computed from 20 i.i.d.\ trials on $a$ and 5 i.i.d.\ trials on $b$, with error band at 1 standard deviation. (a) Optimal regularization as a function of $\bar b$ for $\rho\in \{0, 0.4, 0.8\}$. The entries of $b$ are uniform in \(\left[\min\{0, 2 \bar b\}, \max\{0, 2 \bar b\} \right]\) and $c = 0$. (b) Optimal risk as a function of $\bar b$. Different curves correspond to
       different variances $\sigma_b^2$ of the entries of $b$, which  are uniform in \(\left[\bar b  - \sigma_b \sqrt{3}, \bar b + \sigma_b \sqrt{3} \right]\) for $\sigma_b\in\{0, 0.2, 0.4\}$. We pick $c=0$ and $\rho=0$. We note that, when 
       $\rho=0$, ${\mathcal R}_{\rm pop}^*(D, \Sigma)$ equals the empirical variance of the entries of $b$ and, as such, it does not depend on $\bar b$.  (c) Optimal regularization as a function of $\bar b$ for $\bar{c}\in \{-0.3, 0, 0.3\}$. The entries of $b$ are uniform in \(\left[\min\{0, 2 \bar b\}, \max\{0, 2 \bar b\} \right]\), the entries of $c$ are uniform in \(\left[\min\{0, 2 \bar c\}, \max\{0, 2 \bar c\} \right]\),  and $\rho = 0.5$.}
    \vspace{-1em}
    \label{fig:pop2}
\end{figure*}

\paragraph{Optimal regularization and optimally regularized risk.}

Leveraging the risk expression for small $F$ of Theorem \ref{thm:pop}, 
we next study the behavior of the optimal regularization and of the corresponding optimal risk. 
Formally, define
\vspace{-.3em}
\begin{equation}\label{eq:optpopdef}
\begin{split}
    \lambda^*_{\rm pop}(D, \Sigma):=\arg\min_{\lambda\in\mathbb R}\widetilde{\mathcal R}_{\rm pop}(D, \lambda, \Sigma),\\
        {\mathcal R}_{\rm pop}^*(D, \Sigma):=\min_{\lambda\in\mathbb R}\widetilde{\mathcal R}_{\rm pop}(D, \lambda, \Sigma).
\end{split}
\end{equation}
From (\ref{eq:popapex}), we note that $\widetilde{\mathcal R}_{\rm pop}(D, \lambda, \Sigma)$ is quadratic in $\lambda$, so the minimization in (\ref{eq:optpopdef}) can be solved explicitly, leading to the expressions below. %

\begin{corollary}[Optimal regularization -- population]\label{cor:pop}
    In the setting described above, we have %
\vspace{-.3em}
    \begin{equation}
        \begin{split}
    \label{eq:optpopap1}     
    \lambda^*_{\rm pop}(D, \Sigma) ={}& \frac{\bar{b}d}{\tr(S_1)}, \\  {\mathcal R}_{\rm pop}^*(D, \Sigma)={}&\frac{1}{d}\tr(\di(b^2)\Sigma_1) - \frac{\bar{b}^2d}{\tr(S_1)} .
    \end{split}
    \end{equation}
\end{corollary}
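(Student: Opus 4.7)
The plan is to note, following Theorem~\ref{thm:pop}, that $\widetilde{\mathcal R}_{\rm pop}(D,\lambda,\Sigma)$ is a one-dimensional quadratic polynomial in $\lambda$,
\[
\widetilde{\mathcal R}_{\rm pop}(D,\lambda,\Sigma) = \frac{1}{d}\tr[\di(b^2)\Sigma_1] \;-\; 2\bar b \,\lambda \;+\; \frac{\tr(S_1)}{d}\,\lambda^2,
\]
with the constant, linear, and quadratic coefficients all independent of $\lambda$. The corollary then reduces to a standard vertex-formula calculation.

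First I would argue that the leading coefficient $\tr(S_1)/d$ is strictly positive, so that the quadratic has a unique global minimizer on $\mathbb R$. By Assumption~\ref{assum:model} the covariance $\Sigma$ is positive definite, so its Schur complement $\Sigma_1 - \Sigma_{12}\Sigma_2^{-1}\Sigma_{21}$ is positive definite, and hence so is its inverse $S_1$; in particular $\tr(S_1) > 0$. This guarantees strict convexity of $\widetilde{\mathcal R}_{\rm pop}$ in $\lambda$.

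Then I would differentiate in $\lambda$ and set to zero, obtaining the first-order condition $-2\bar b + \frac{2\lambda}{d}\tr(S_1) = 0$, whose solution is $\lambda^*_{\rm pop}(D,\Sigma) = \bar b\, d/\tr(S_1)$, giving the first displayed identity of the corollary. Substituting this back into $\widetilde{\mathcal R}_{\rm pop}$, the linear term $-2\bar b\,\lambda^*_{\rm pop}$ and the quadratic term $(\lambda^*_{\rm pop})^2\tr(S_1)/d$ combine to $-\bar b^2 d/\tr(S_1)$, yielding the second displayed identity, ${\mathcal R}^*_{\rm pop}(D,\Sigma) = \frac{1}{d}\tr(\di(b^2)\Sigma_1) - \bar b^2 d/\tr(S_1)$.

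The argument is routine once Theorem~\ref{thm:pop} is in hand, so there is no substantive obstacle; the only point that requires justification beyond algebra is the positivity of $\tr(S_1)$, which ensures that the critical point is a minimum rather than a maximum and that $\lambda^*_{\rm pop}$ is well-defined.
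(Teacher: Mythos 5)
Your proposal is correct and follows exactly the route the paper takes (the paper simply states that the quadratic in $\lambda$ from Theorem~\ref{thm:pop} can be minimized explicitly, which is your vertex-formula calculation). Your added observation that $\tr(S_1)>0$ — because the Schur complement of a positive definite $\Sigma$ is positive definite — is a worthwhile justification that the paper leaves implicit, and your algebra for both $\lambda^*_{\rm pop}$ and ${\mathcal R}^*_{\rm pop}$ checks out.
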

These formulas call for several comments. First, the optimal regularization $\lambda^*_{\rm pop}(D, \Sigma)$ is proportional to the strength of the performative effect $\bar{b}$, see Figure  \ref{fig:spurious} (and also the location of the minima in Figure \ref{fig:pop1}). %
The fact that $\lambda^*_{\rm pop}(D, \Sigma)$ grows with $\bar b$ captures an effect common in practice: when performativity reinforces existing trends%
—corresponding to ``rich-get-richer'' phenomena, such as a feature becoming more important over successive deployments—the optimal regularizer increases and helps to limit this effect. Conversely, when the performative effect already mitigates the influence of some feature, the optimal solution calls for less regularization. In the population case, this corresponds to negative regularization (see Figure \ref{fig:expandtheo}) which, although less common, has also been studied in the literature \citep{wu2020optimal,tsigler2023benign,patil2024optimal}. Spurious correlations tend to reduce the optimal regularization, although their effect is mild, see Figure \ref{fig:spurious}. 

Second, the optimal risk ${\mathcal R}_{\rm pop}^*(D, \Sigma)$ is always positive and, thus, worse than in the non-performative scenario, where it is zero. More specifically, zero excess risk can only be reached with a non-zero $b$ if $\Sigma = I_p$ and $b$ is aligned with the all-1 vector, see the blue line in Figure \ref{fig:variance}. Intuitively, as regularization impacts all features equally, it better compensates performativity in this uniform case. If the variance in the entries of $b$ grows, then ${\mathcal R}_{\rm pop}^*(D, \Sigma)$ increases, see Figure \ref{fig:variance}. %
We finally note that ${\tilde{\mathcal R}}_{\rm pop}(D, \lambda, \Sigma)$ does not depend on $c$ and, in fact, the effect of $c$ is only visible at higher order, as seen in this formula, proven in Appendix \ref{app:pop}:
\begin{equation}\label{eq:pophigh}    
\begin{aligned}
&\E_{\thetapop}\mathcal{R}(\Sigma, \theta^{\infty}, \thetapop) =\frac{1}{d} \biggl(\hspace{-0.15em} -2\lambda^3\tr \bigl[\left( \Sigma^{-2}\right)_1\bigr] \hspace{-0.15em}+\hspace{-0.15em} \lambda^2 \bigl(\tr\!\left[S_1\right]\\& + 6 \tr \bigl[\di(b) S_1\bigr]\bigr) - \lambda \Bigl(2\tr\!\bigl[\di(b)\Sigma_1\di(b)S_1\bigr] \\&
         + 2\tr\!\bigl[\di(b)\Sigma_{12}\di(c)S_{21}\bigr]
       + 2 d \bar b + 4\tr[\di(b^2)] \Bigr)\\
    & + \tr\left[\di(b^2)\Sigma_1\right] + 2\tr\left[\di(b^3) \Sigma_1\right]\biggr) + O(\|F\|_{\mathrm{op}}^4),
\end{aligned}
\end{equation}
with $S_{21}^\top = -(\Sigma_1 - \Sigma_{12}\Sigma_2^{-1} \Sigma_{21})^{-1}\Sigma_{12} \Sigma_2^{-1}$. %
Figure \ref{fig:pop1} illustrates that the minimizer of this second-order approximation ($\lambda^*_{{\rm pop}, 2}$) is close to the minimizer obtained numerically ($\lambda^*_{\rm emp}$). While $c$ tends to steer the optimal regularizer in the opposite direction (less regularization in the case of a self-reinforcing performative effect), it does so only through the cross term $2\tr\!\left[\di(b)\Sigma_{12}\di(c)S_{21}\right]$, which also depends on $b$. \Cref{fig:spurperfo} shows that $c$ moves the optimal regularization in a direction opposite to its sign, but its effect remains rather limited.

\section{Analysis in the over-parameterized setting}
\label{sec:over}

Next, we consider the case where $n, p$ are both large and scale proportionally, with $p/n=\kappa>1$. All \emph{constants} (e.g., $R, M$) are intended to be positive values independent of $n, p$. For mathematical convenience, we opt for a different normalization w.r.t.\ (\ref{eq:ERM}), and the estimator $\theta_{k}$ is given by
\begin{equation}    
\theta_{k} = \arg\min_{\theta\in\mathbb{R}^p}\left\{\frac{1}{2p}\sum_{i=1}^n \ell\left(x_i^{(k-1)}, y_i^{(k-1)}; \theta\right) +\frac{\lambda}{2}\|\theta\|_2^2\right\}.
\end{equation}
Solving for $\theta_{k}$ yields
\begin{equation}\label{eq:thetak}    
\theta_{k} = \frac{1}{p} \left(\frac{1}{p} X^{\scriptscriptstyle(k-1)  \top} X^{\scriptscriptstyle(k-1)} + \lambda I_p\right)^{-1} \hspace{-1em} X^{\scriptscriptstyle(k-1) \top} y^{\scriptscriptstyle(k-1)},
\end{equation}
where $X^{(k-1)}=[x_1^{(k-1)}, \ldots, x_n^{(k-1)}]\in \mathbb R^{n\times p}$ and $y^{(k-1)}=[y_1^{(k-1)}, \ldots, y_n^{(k-1)}]\in\mathbb R^n$. Note that, when $\theta_k$ is given by \eqref{eq:thetak}, the risk $\mathcal{R}(\Sigma, \theta_k, \thetapop)$ as defined in \eqref{eq:risk} is a random quantity since the data $\{X^{(\ell)}\}_{\ell=1}^{k-1}$ and the noise contained in the labels $\{y^{(\ell)}\}_{\ell=1}^{k-1}$ are random. This makes it challenging to characterize optimal ridge penalty and optimally-tuned risk. To address the challenge, we first establish a \emph{deterministic} equivalent of the risk at the performative fixed point. We next optimize such deterministic equivalent and study the effect of performativity on the optimal regularization.  

\paragraph{Deterministic equivalent of the performative fixed point.} First, note that, if we regard the performative effect as small and aim at characterizing its effect on the fixed point up to the leading (first) order, it suffices to do two iterations of the recursion in \eqref{eq:thetak}. In fact, the labels $y^{(k-1)}$ are linear in $D\theta_{k-1}$, so we expect that, after two iterations, the performative fixed point is reached up to fluctuations of order $O(\|D\|_{\rm op}^2)$. Now, the risk after two iterations is still a random quantity, so we apply techniques from \cite{han2023distribution,ildizhigh} to derive a %
deterministic equivalent. The formal statement is below and the proof is deferred to Appendix \ref{app:pf}.
\begin{theorem}[Excess risk -- over-parameterized]\label{thm:over}
\revised{Let Assumption \ref{assum:model} hold with $x \sim \N(0, \Sigma)$.} Let $R>0$ be a constant s.t.\ $\|\thetapop\|_2, \|\theta_0\|_2\le R$. Assume that $\kappa, \sigma, \lambda\in (1/M, M)$ and $\|\Sigma\|_{\mathrm{op}},\ \|\Sigma^{-1}\|_{\mathrm{op}} \le M$ for some constant $M>1$. Then, there exists a constant $C=C\left(M, R\right)$ such that for any $\delta \in (0,1/2]$, with probability at least $1-Cpe^{-p\delta^{4}/C}$, %
\begin{equation}    
\left|\mathcal{R}(\Sigma, \theta_{2}, \thetapop)-\fixedriskeq\left(\Sigma, \thetapop, D, \lambda\right)\right|\le \delta+O(\|D\|_{\mathrm{op}}^2),
\end{equation}
where
\begin{equation}   \label{eq:defdet} 
\resizebox{.49\textwidth}{!}{$\displaystyle
\begin{aligned}
&\fixedriskeq\left(\Sigma, \thetapop, D, \lambda\right)
= 
\tau
\langle \thetapop, \Xi\left(\tau I_p
-2
\Xi
\Sigma^2 D
\right)\Sigma \Xi
\thetapop\rangle
\\
&
\hspace{-.6em}+\hspace{-.2em} \kappa
\tr\left[\Sigma^{2} \Xi^{2}\right]
\hspace{-.2em}\frac{
\sigma^{2}
\hspace{-.2em}+\hspace{-.2em} 
\tau^{2}
\langle \thetapop,\Xi 
\left(I_p + 2 \Xi
\Sigma D\right)\Sigma \Xi \thetapop\rangle
}{
p - \kappa
\tr\left[\Sigma^{2}\Xi^{2}\right]
},
\end{aligned}$}
\end{equation}
with $\Xi = (\Sigma + \tau I_p)^{-1}$ %
and $\tau$ is the unique solution of %
\begin{equation}   \label{eq:tau} 
\kappa^{-1} - \frac{\lambda}{\tau} = \frac1p \tr\left[(\Sigma + \tau I_p)^{-1}\Sigma\right].
\end{equation}
\end{theorem}

In words, Theorem \ref{thm:over} shows that the risk $\mathcal{R}(\Sigma, \theta_2, \thetapop)$ is well approximated by the quantity \revised{$\fixedriskeq\left(\Sigma, \thetapop, D, \lambda\right)$} defined in \eqref{eq:defdet}. We highlight that this quantity does not depend on the initialization $\theta_0$: up a fluctuation of order $O(\|D\|_{\mathrm{op}}^2)$, \emph{the risk has reached a fixed point} after two iterations. While the data (and, consequently, $\mathcal{R}(\Sigma, \theta_2, \thetapop)$) are random, $\fixedriskeq\left(\Sigma, \thetapop, D, \lambda\right)$ provides a \emph{deterministic equivalent} that depends only on the population covariance $\Sigma$, the ground-truth vector $\thetapop$, the matrix $D$ capturing the performative effect and the regularization $\lambda$.

We note that the result of Theorem \ref{thm:over} holds for a general matrix $D$ with bounded operator norm---not necessarily a diagonal $D$ as in Assumption \ref{assum:model}.
The assumptions ($\|\thetapop\|_2, \|\theta_0\|_2\le R$, $\kappa, \sigma, \lambda\in (1/M, M)$, $\|\Sigma\|_{\mathrm{op}},\ \|\Sigma^{-1}\|_{\mathrm{op}} \le M$) are all standard in the related literature \citep{han2023distribution,ildizhigh}. We could handle the ridgeless case $\lambda=0$ in a similar way to \cite{han2023distribution,ildizhigh}. However, this requires changing some details and we have opted to avoid the notation clutter, since our focus is on the effect of regularization. 
The assumption on the features $x$ being Gaussian can be also relaxed. In fact, the results of \cite{han2023distribution} (see Theorem 2.4 therein) hold for $\Sigma^{-1/2}x$ having independent, zero mean, unit variance and uniformly subgaussian entries. We prove a formal extension of Theorem \ref{thm:over} to sub-Gaussian data in Appendix \ref{app:extension}.

\begin{figure*}[t]
    \centering
    \begin{subfigure}{0.32\textwidth}
        \includegraphics[width=\linewidth]{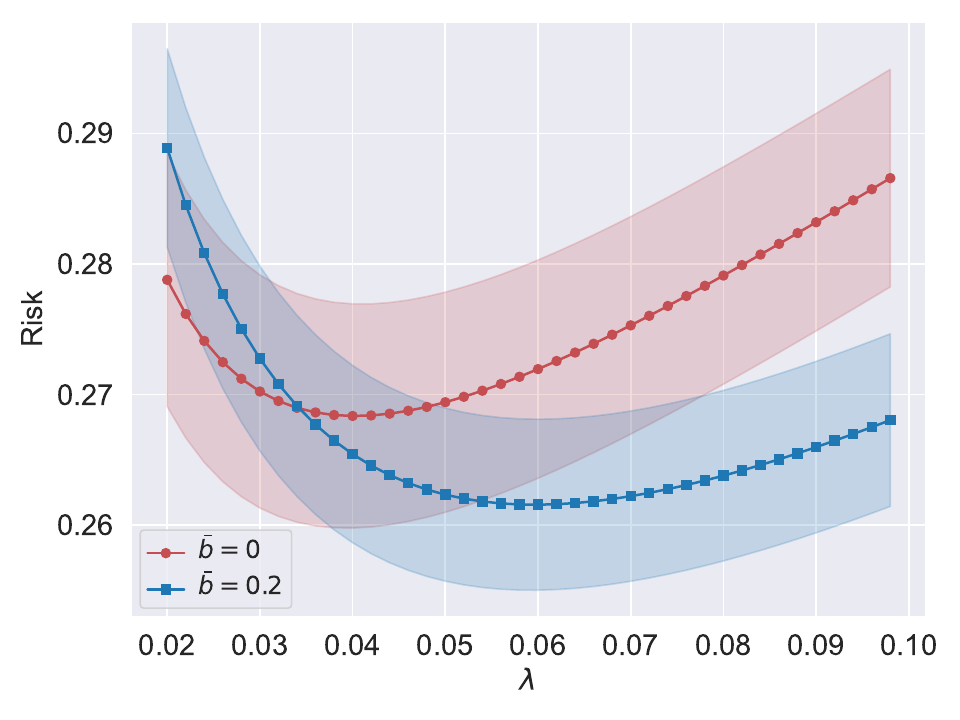}
        \caption{$\kappa=1.1$, $\sigma=0.2$, $\rho=0$}
        \label{fig:propa}
    \end{subfigure}
    \hfill
    \begin{subfigure}{0.32\textwidth}
        \includegraphics[width=\linewidth]{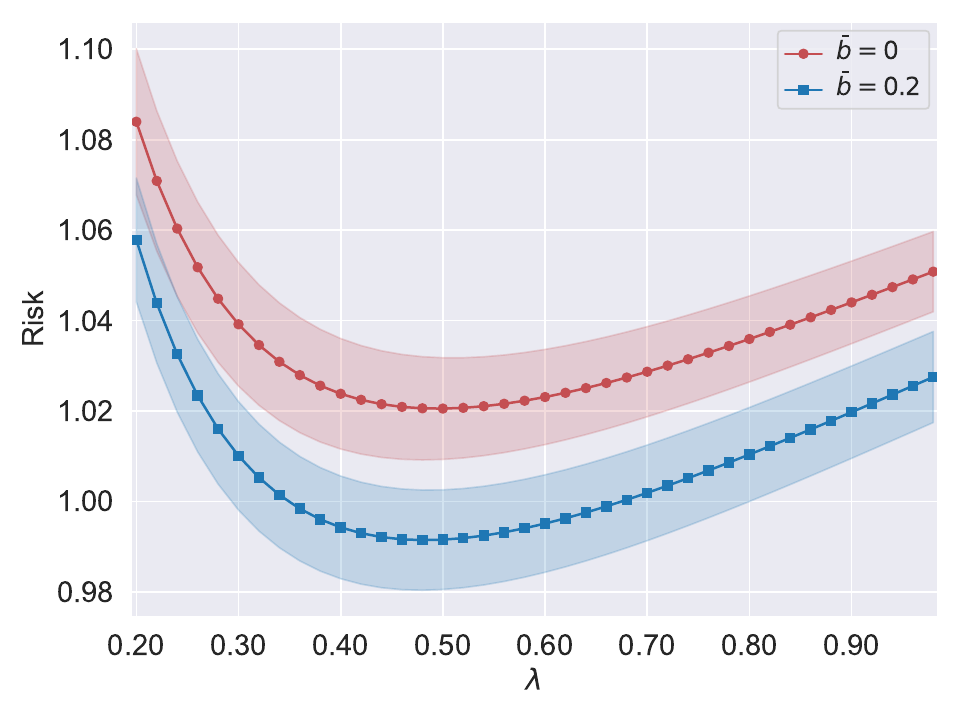}
                \caption{$\kappa=1.1$, $\sigma=0.7$, $\rho=0$}
        \label{fig:propb}
    \end{subfigure}
    \hfill
    \begin{subfigure}{0.32\textwidth}
        \includegraphics[width=\linewidth]{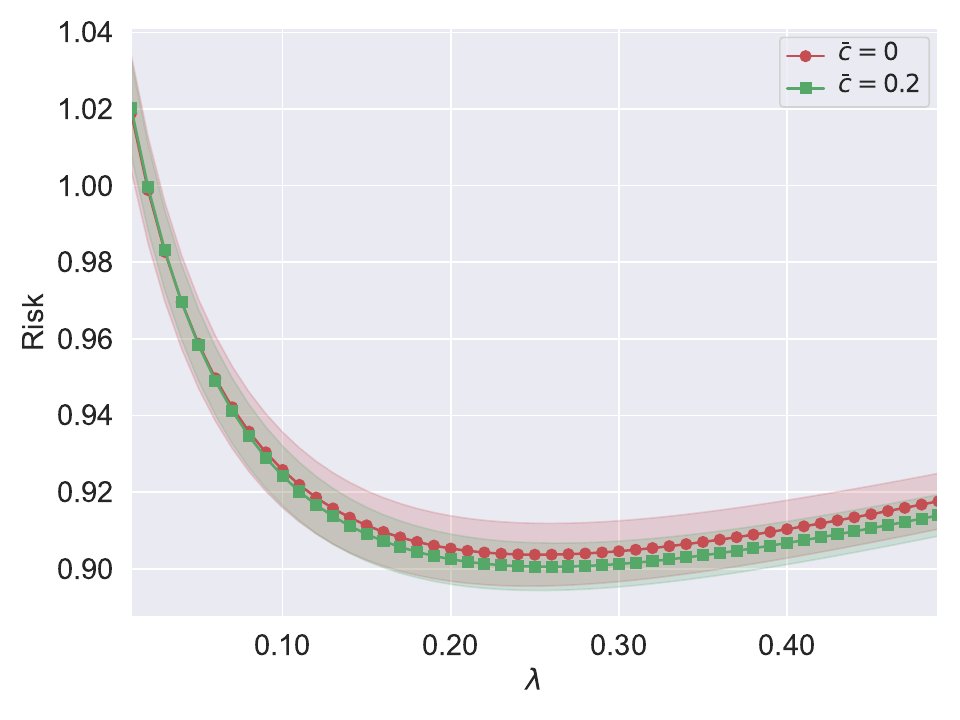}
        \caption{$\kappa=2$, $\sigma=0.5$, $\rho=0.5$}
        \label{fig:propc}
    \end{subfigure}
       \vspace{-.3em} \caption{Excess risk as a function of ridge regularization $\lambda$ with Gaussian data, for $n = 4000$, $\Sigma_1=\Sigma_2=I_d$, $\Sigma_{12}=\rho I_d$, entries of $b$ equal to $\bar b$, and entries of $c$ equal to $\bar c$.  Values are computed from 20 i.i.d.\ trials, with error band at 1 standard deviation. We perform $5$ steps of RRM to approximate the fixed point, as in the simulation setup of Section \ref{sec:num}. (a) In the low-noise regime ($\sigma=0.2$), taking $\bar b=0.2$ instead of $\bar b=0$ \emph{increases the optimal regularization} and \emph{reduces the optimal risk}. We set $\bar c=0$ to emphasize the dependence on $\bar b$. (b) In the large-noise regime ($\sigma=0.7$), taking $\bar b=0.2$ instead of $\bar b=0$ \emph{reduces both optimal regularization and optimal risk}. As in (a), we set $\bar c=0$. (c) Taking $\bar c=0.2$ instead of $\bar c=0$ \emph{reduces the optimal risk}, although the impact of $\bar c$ is less pronounced. We set $\bar b=0$ to emphasize the dependence on $\bar c$.
       }
    \vspace{-1em}
    \label{fig:prop}
\end{figure*}

\paragraph{Optimal regularization and optimally regularized risk.} Leveraging the characterization of Theorem \ref{thm:over}, we optimize the ridge regularization. We focus on the case $\Sigma=\begin{bmatrix}I_d&\rho I_d\\ \rho I_d&I_d\end{bmatrix}$ for small $\rho$ \revised{and require $D$ to be diagonal as in Assumption \ref{assum:model}}. While simplified, this setting captures the performative effect of both predictive and spurious features\revised{, which are mixed via the covariance matrix $\Sigma$}, leading to an interesting phenomenology. Lemma \ref{lemma:explicit} in Appendix \ref{app:pfequiv} computes $\mathbb E_{\thetapop}\fixedriskeq\left(\Sigma, \thetapop, D, \lambda\right)$, as well as the following expansion in $\rho$:
\vspace{-.3em}\begin{equation}\label{eq:expansion}
\begin{split}    
&\mathbb E_{\thetapop}\fixedriskeq\left(\Sigma, \thetapop, D, \lambda\right)=\widetilde{\mathcal R}(D, \lambda, \rho)+ O(\bar b\rho^2+ \rho^4),\\&
 \widetilde{\mathcal R}(D, \lambda, \rho):=\mathcal R_0(\lambda, \rho) + \bar b  A_1(\lambda) %
 + \bar c \rho^2 A_2(\lambda),
\end{split}
\end{equation}
with $\bar b=\tr[\di(b)]/d, \bar c=\tr[\di(c)]/d$. %
Explicit expressions for $\mathcal R_0(\lambda, \rho), A_1(\lambda)$ and $A_2(\lambda)$ are given in \eqref{eq:explexpr} in Appendix \ref{app:pfequiv}. We note that $\mathbb E_{\thetapop}\fixedriskeq\left(\Sigma, \thetapop, D, \lambda\right)$ is even in $\rho$, hence the odd powers of $\rho$ are absent from \eqref{eq:expansion}. Now, we define  optimal regularization and %
risk as 
\vspace{-.3em}\begin{equation}
\begin{split}
    \lambdaeqs(D, \rho):=\arg\min_{\lambda\ge 0}\widetilde{\mathcal R}(D, \lambda, \rho),\\
        \fixedriskeqs(D, \rho):=\min_{\lambda\ge 0}\widetilde{\mathcal R}(D, \lambda, \rho).
\end{split}
\end{equation}
Our goal is to characterize the performative effect on $\lambdaeqs(D, \lambda, \rho), \fixedriskeqs(D, \lambda, \rho)$ and, to do so, we compare these quantities to their values when $D=0$, defined as
\vspace{-.3em}\begin{equation}
    \lambdaeqsz(\rho):=\arg\min_{\lambda\ge 0}\mathcal R_0(\lambda, \rho),
        \fixedriskeqs(\rho)=\min_{\lambda\ge 0}\mathcal R_0(\lambda, \rho).
\end{equation}
This is formalized by the result below whose proof is deferred to Appendix \ref{app:pfequiv}.
\begin{theorem}[Optimal regularization -- over-parameterized]\label{thm:equiv}
In the setting described above, we have 
\vspace{-.3em}\begin{align}
     \begin{aligned}\lambdaeqs(D, \rho)&{}= \lambdaeqsz(\rho)+\bar b (B_1(\sigma, \kappa)+O(\rho^2)) \\&+\bar c \rho^2( C_1(\sigma, \kappa)+O(\rho^2))+O(\bar b^2+\bar c^2)
     ,\label{eq:thmequivl}\end{aligned}\\
    \begin{aligned}\fixedriskeqs(D, \rho)&{}= \fixedriskeqs(\rho)+\bar b (B_2(\sigma, \kappa)+O(\rho^2))\\&+\bar c \rho^2( C_2(\sigma, \kappa)+O(\rho^2))+O(\bar b^2+\bar c^2),
    \end{aligned}\label{eq:thmequivR}
\end{align}
where the functions $B_1(\sigma, \kappa), B_2(\sigma, \kappa), C_1(\sigma, \kappa), C_2(\sigma, \kappa)$ depend only on $\sigma, \kappa$ and they are explicitly given in  \eqref{eq:lfor}-\eqref{eq:Rfor}. Furthermore, these functions satisfy
\vspace{-.3em}\begin{align}
    B_1(\sigma, \kappa)&\ge 0 \quad \text{for } 0\le \sigma\le \sigma_{B_1}(\kappa), \,\,\kappa>1,\label{eq:relations1a}\\ B_1(\sigma, \kappa)&\le 0 \quad \text{for }  \sigma> \sigma_{B_1}(\kappa), \,\,\kappa>1,\label{eq:relations1b}\\
C_1(\kappa,\sigma)&\le 0 \quad \text{for } \sigma\ge 0, \kappa\ge 2,\label{eq:relations2}\\
B_2(\kappa,\sigma)&\le 0 \quad \text{for } \sigma\ge 0, \kappa> 1,\label{eq:relations3}\\
C_2(\kappa,\sigma)&\le 0 \quad \text{for } \sigma\ge 0, \kappa> 1,\label{eq:relations4}
\end{align}
with $\sigma^2_{B_1}(\kappa)=1/2-7\kappa^{-1}/18 +O(\kappa^{-2})$.
\end{theorem}

In words, \eqref{eq:thmequivl} gives a quantitative comparison between optimal regularization with performative effect ($\lambdaeqs(D, \rho)$) and without it ($\lambdaeqsz(\rho)$). Similarly, \eqref{eq:thmequivR} compares optimally-regularized risks $\fixedriskeqs(D, \rho)$ and $\fixedriskeqs(\rho)$ respectively with and without performativity. The study of the signs of the  auxiliary functions $B_1(\sigma, \kappa), B_2(\sigma, \kappa), C_1(\sigma, \kappa), C_2(\sigma, \kappa)$ leads to the considerations below:
\begin{itemize}[leftmargin=1em]
    \item \Cref{eq:relations1a,eq:relations1b} imply that \emph{(i)} if the noise variance $\sigma^2$ is small, then the optimal regularization moves in the same direction as the performative effect on the predictive features; and \emph{(ii)}  if the noise variance is large, the effect is reversed and the optimal regularization moves in the opposite direction to the performative effect. From a Bayesian perspective, informally, this acts as the noise level controlling the model's confidence: if the noise variance increases, the model moves back towards its "prior", and thus the shift in the regularization due to performativity decreases. This is illustrated in Figures \ref{fig:propa} and \ref{fig:propb}.      
    \item \Cref{eq:relations2} implies that, when $\kappa\ge 2$, the optimal regularization moves in the opposite direction to the performative effect on the spurious features. This effect is however significantly attenuated by the factor $\rho^2$ multiplying $\bar c$ in \eqref{eq:thmequivl} and, as such, it is hardly noticeable both with Gaussian data (as considered in this section) and in real-world settings (as considered in Section \ref{sec:num}). %
    \item \Cref{eq:relations3} implies that, when performativity reinforces existing trends ($\bar b>0$), the optimally-regularized risk improves in the presence of a performative effect on the predictive features. This occurs regardless of the size of the noise variance, and it is illustrated in Figures \ref{fig:propa} and \ref{fig:propb}. Instead, when performativity dampens existing trends ($\bar b<0$), the effect is reversed and the optimal risk worsens.
    \item Finally, \Cref{eq:relations4} implies that the dependence of the optimally-regularized risk on the performative effect on the spurious features is analogous: the optimal risk decreases when $\bar c>0$, and increases when $\bar c<0$. However, as for $\lambdaeqs(D, \rho)$, the impact of performativity on $\fixedriskeqs(D, \rho)$ is less pronounced for the spurious features, due to the factor $\rho^2$ multiplying $\bar c$ in \eqref{eq:thmequivR}. This is illustrated in Figure \ref{fig:propc}.
\end{itemize}

\paragraph{\revised{Evaluating the model on the shifted distribution.}} By following similar steps, we can also provide an analysis in the over-parameterized setting of the model tested on the shifted distribution (i.e., on the same distribution used to train it). We defer the details to Appendix \ref{app:test-bis} and summarize the main results below. Theorem \ref{thm:over2} is the equivalent of Theorem \ref{thm:over}, and it provides a deterministic equivalent of the performative fixed point. Lemma \ref{lemma:explicit-bis} specializes the risk expression from Theorem \ref{thm:over2} to a covariance of the form $\Sigma=\begin{bmatrix}I_d&\rho I_d\\ \rho I_d&I_d\end{bmatrix}$, and it is the equivalent of Lemma \ref{lemma:explicit} (which gives \eqref{eq:expansion}). Next, we provide expressions for the optimal $\tau$ in Lemma \ref{lemma:taustar-bis}, for the optimal regularization parameter in \eqref{eq:lambdastar-bis}-\eqref{eq:lfor-bis}, and for the optimal risk in Corollary \ref{cor:risk-bis}. Lemma \ref{lemma:B1-bis} shows that the optimal regularization moves in the same direction as the performative effect on the predictive features, as it is the case for testing over $\mathcal D(\theta=0)$ provided that the noise variance is  enough. Finally, Lemmas \ref{lemma:B2-bis} and \ref{lemma:C2-bis} show that the optimally regularized risk worsens in the presence of a performative effect (either on the predictive or on the spurious features) that reinforces existing trends ($\bar b, \bar c>0$). This is in contrast with the behavior of the optimally regularized risk tested over $\mathcal D(\theta=0)$, which instead decreases when either $\bar b>0$ or $\bar c>0$. 

\vspace{-.5em}

\section{Numerical experiments}\label{sec:num}

\vspace{-.5em}

\begin{figure*}[h!tb]
    \centering
    \begin{subfigure}{0.32\textwidth}
        \includegraphics[width=\linewidth]{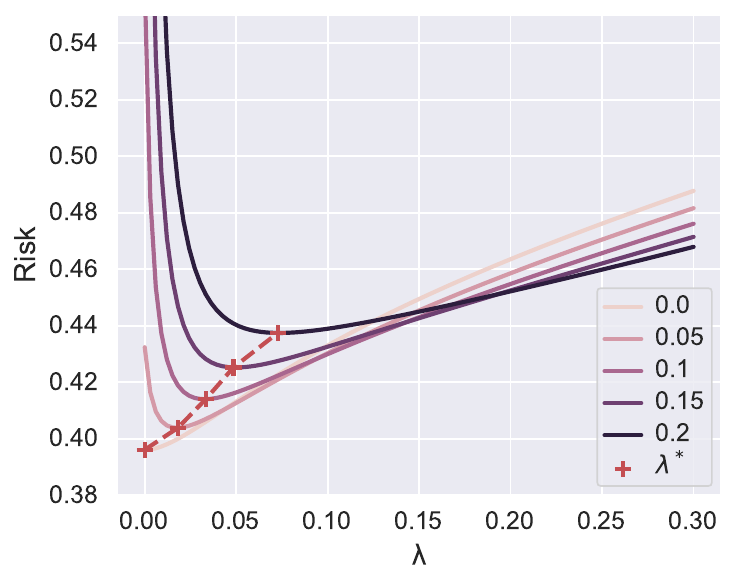}
        \caption{Housing ($n=4000$)}
        \label{fig:housing}
    \end{subfigure}
    \hfill
    \begin{subfigure}{0.32\textwidth}
        \includegraphics[width=\linewidth]{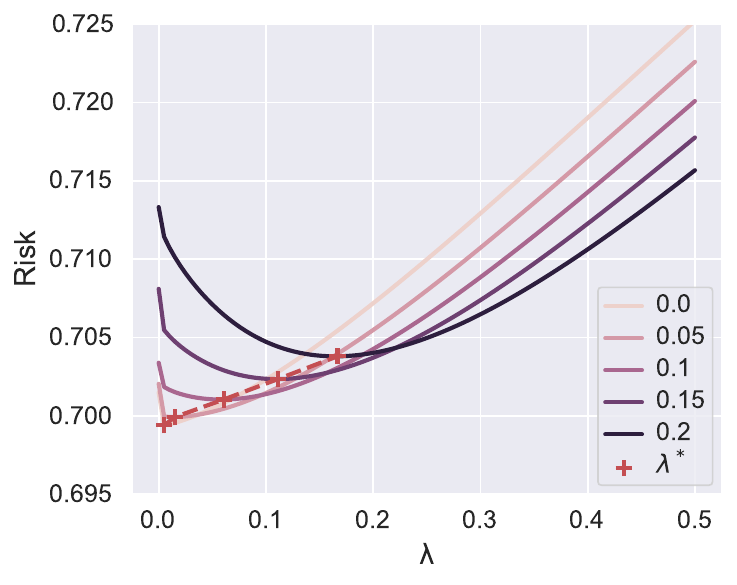}
        \caption{LSAC ($n=4000$)}
        \label{fig:LSACmany}
    \end{subfigure}
    \hfill
    \begin{subfigure}{0.32\textwidth}
        \includegraphics[width=\linewidth]{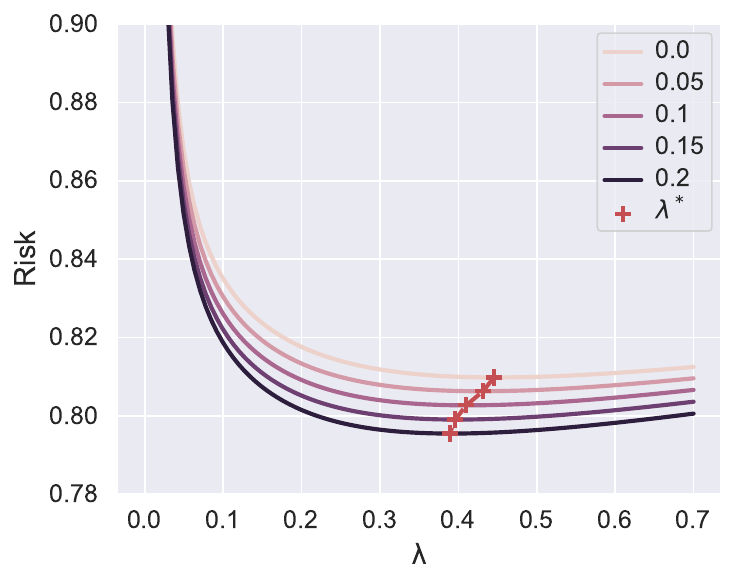}
        \caption{LSAC ($n=100$)}
        \label{fig:LSACfew}
    \end{subfigure}
       \vspace{-.3em} \caption{Excess risk as a function of ridge regularization $\lambda$ in real-world datasets (Housing, LSAC). Different curves (in different colors) correspond to different values of $\bar{b} \in \{0, 0.05, 0.1, 0.15, 0.2\}$, and we connect with a red dashed line the optima of the risk for various choices of $\bar b$. %
       The plots in (a)-(b) use $n=4000$ data points at each training step, which corresponds to the population setting ($n \gg d$); the plot in (c) uses $n=100$, a value closer to the number of features $d=22$.}
    \vspace{-1em}
    \label{fig:real}
\end{figure*}

In this section, we test the effect of regularization and performative shifts on real data. Since no dataset currently provides a real performative shift, the shift must be encoded synthetically. In practice, we take a real-world dataset, randomly split the samples across time steps, train a model on one split, compute the parameter $\theta$, and then shift the samples of the next split according to the theoretical model. This methodology follows previous work \citep{perdomo_performative, hardt2023performative, zezulka_performativity_2023}. These experiments allow us to test whether the theory remains predictive when \emph{(i)} the data is not i.i.d.\ random and $\thetapop$ is fixed by the task, when \emph{(ii)} the true relationship between the feature and the target is likely non-linear, and when \emph{(iii)} the regularization is not the ridge penalty. The code to reproduce the experiments is available at \url{https://github.com/totilas/regularization-vs-perf}.

We consider two datasets. First, we use the Housing dataset,\footnote{\url{https://www.openml.org/d/823}} where the goal is to predict house prices from housing features and local demographics. We follow the methodology of \citet{NEURIPS2024_7de66547} to choose performative features. The dataset has $8$ features and $20{,}640$ datapoints, which we split into five folds: four for training and one for test. Four training steps suffice experimentally to reach the fixed point, which is consistent with the theory, where the first-order effect stabilizes after only two iterations.  
Second, we use the Law School Admission Council (LSAC) dataset,\footnote{\url{https://storage.googleapis.com/lawschool_dataset/bar_pass_prediction.csv}} where the default task is to predict bar passage from demographic features and previous grades. We change the target to GPA to maintain a regression task, and randomly choose features affected by performativity. After dropping redundant columns or those too correlated with GPA, the dataset has $22$ features and $20{,}427$ samples, which again we split in five folds. We report detailed pre-processing, parameters and data covariance in Appendix \ref{app:real}.

When $n=4000$, both for the Housing (Figure \ref{fig:housing}) and the LSAC (Figure \ref{fig:LSACmany}) dataset, we note that \emph{(i)} the optimal regularizer increases proportionally to $\bar{b}$, and \emph{(ii)} the optimally-regularized risk becomes worse as $\bar{b}$ grows. This can be attributed to the fact that $n\gg d$, and it is consistent with our theoretical results in the population setting (Corollary \ref{cor:pop}). In contrast, when training with very few samples on LSAC (\Cref{fig:LSACfew}), the behavior of the regularized risk follows the predictions of Theorem \ref{thm:over} for the proportional setting in the large-noise regime: as $\bar{b}$ grows, the optimal regularizer gets smaller and the risk improves. 
Note that, even if Figures \ref{fig:LSACmany} and \ref{fig:LSACfew} consider the same dataset, the ranges of excess risk and regularizer are not the same due to the different sample sizes. We did not find numerical evidence for the role of $c$, suggesting that its effect may be dominated by data noise, consistent with our theoretical findings on the limited impact of spurious features.

In Appendix \ref{sec:fifig} we showcase the results of the same experiments when the ridge regularization is replaced with \emph{(i)} dropout (\Cref{fig:dropout}), \emph{(ii)} Lasso regularization (\Cref{fig:lasso}), and \emph{(iii)} elastic net (\Cref{fig:elasticnet}). This demonstrates that the relationship between regularization and performative strength persists beyond ridge regression.

To further assess whether our findings extend beyond linear models, we also run an experiment with neural networks. We follow the strategic classification setting of~\citet{pmlr-v206-mofakhami23a}, using the \texttt{GiveMeSomeCredit} dataset through the \texttt{whynot} package. In this setting, the strength of the performative effect is controlled by a parameter $\delta$: when a data point would receive a negative classification under the previous model, it may strategically modify its manipulable features by copying features from another data point, with probability depending on $\delta$. We build on the publicly available code of~\citet{pmlr-v206-mofakhami23a}, use the largest neural network reported there, and keep the preprocessing, learning rate, and other hyperparameters unchanged. For each value of $\delta$, we run the dynamics until convergence for several values of the $\ell_2$ regularization parameter $\lambda$, sharing randomness across values of $\delta$ and $\lambda$. Each setting is run several times, and we report the final test accuracy with standard deviations. As shown in \Cref{fig:nn-reg}, the same qualitative behavior appears: $\ell_2$ regularization mitigates the sharp drop in accuracy induced by the performative shift, and the optimal amount of regularization increases with the strength of the performative effect.

\begin{figure}[thb]
    \centering
    \includegraphics[width=\columnwidth]{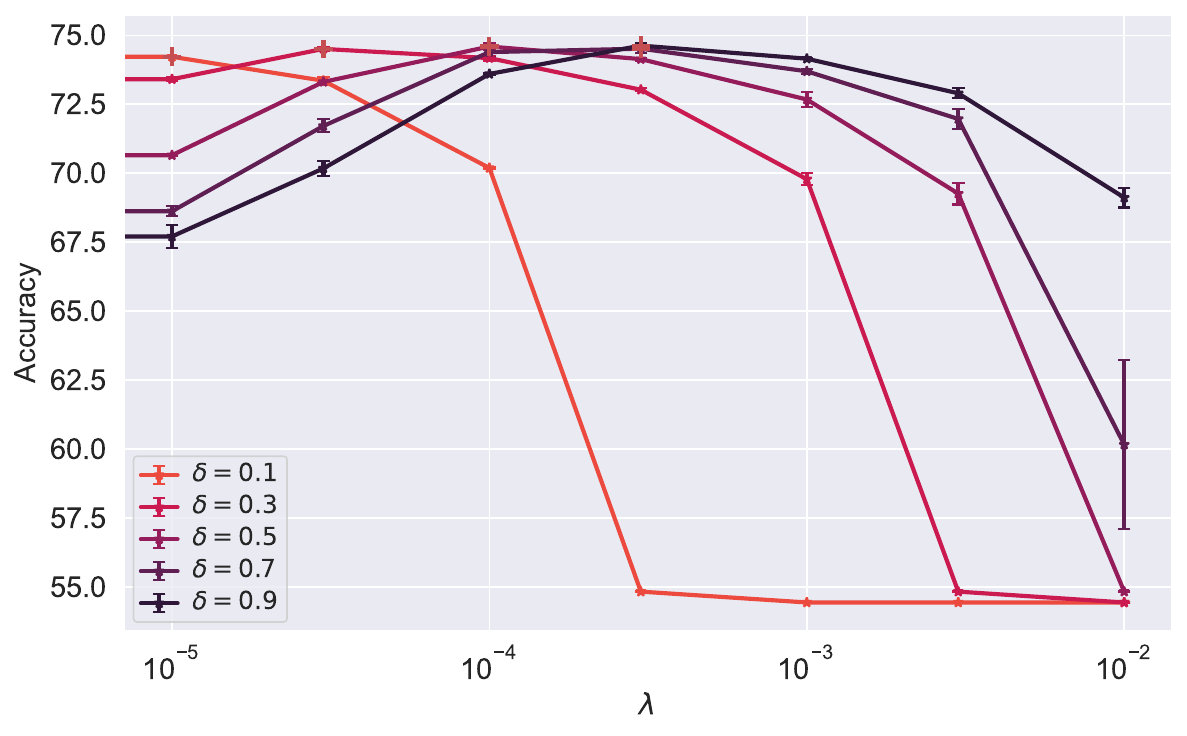}
    \caption{Final test accuracy of a neural network under performative shifts of increasing strength $\delta$, as a function of the $\ell_2$ regularization parameter $\lambda$. Regularization mitigates the loss in accuracy caused by the performative shift, and the best regularization level increases with $\delta$.}
    \label{fig:nn-reg}
\end{figure}

\vspace{-.5em}
\section{Conclusions}
\vspace{-.5em}

\looseness-1In this work, we demonstrate that regularization and performative effect are strongly related, as one can partially cancel out the other. In the population regime, the excess risk is worsened by performative effects. However, optimal ridge regularization mitigates this issue, especially when the data is isotropic and the entries of the vector modeling performativity have little variability. %
In the proportional regime, we provide a deterministic equivalent of the performative fixed point for random data. This in turn unveils a remarkable phenomenology: in contrast with the population setting, the optimal risk improves when performativity reinforces existing trends; furthermore, the optimal regularization follows the direction of the performative effect on the predictive features when the noise is small, while it goes in the opposite direction when the noise is large. Although the theoretical results focus on random data and a linear target model, our experiments on real-world data follow the theoretical predictions, suggesting their generality. Overall, these findings indicate that regularization could help in a wider range of scenarios, which we leave for future work. Beyond studying more complex data or models, interesting directions include the impact of other forms of regularization, such as early stopping or pruning, to mitigate performative effects.

\section*{Impact Statement}

This work studies performative learning, which contributes to the study of the long-term impact of machine learning when deployed. Taking into account the risk of modifying the data distribution is likely to improve the impact of machine learning. However, the two evaluations we propose, namely evaluation on the initial distribution and evaluation on the final distribution, can both lead to unfortunate situations. Evaluating on the initial distribution might amount to turning a blind eye to the evolution of society, while the second might encourage intentionally steering the data distribution toward less diverse distributions that are easier to tackle. We mitigate this risk by deriving results for both regimes, which allows readers to pick the setting that makes sense for a given use case, in the absence of alternative evaluation criteria currently proposed for performative learning.

The choice to use high-dimensional tools to study performative learning should open interesting research directions and contribute to widening the tools available to study performativity. High-dimensional theory has proven useful for better explaining deep learning in classical machine learning, and we hope for similar benefits in the context of performative learning.

The focus on regularization offers rather practical and actionable takeaways from the paper, advocating for regularization as a convenient and agnostic mitigation for performativity. We trust readers to understand that regularization is only one tool among many for building more resilient models.

\section*{Acknowledgments}

This research was funded in whole or in part by the Austrian Science Fund (FWF) 10.55776/COE12. For the purpose of open access, the authors have applied a CC BY public copyright license to any Author Accepted Manuscript version arising from this submission. Alireza Mirrokni was an intern at ISTA while working on this project. Edwige Cyffers is supported by the National Research Agency under France 2030, reference “ANR-23-IACL-0008” and this work was partially done while visiting the Simons Institute for the Theory of Computing.

\section*{Conflict of Interest}
The authors have no conflict of interest.

\bibliography{example_paper}
\bibliographystyle{icml2026}

\newpage
\appendix
\onecolumn

\section{Additional figures}
\label{sec:fifig}

\begin{figure}[htb]
    \centering
    \begin{subfigure}{0.32\textwidth}
        \includegraphics[width=\linewidth]{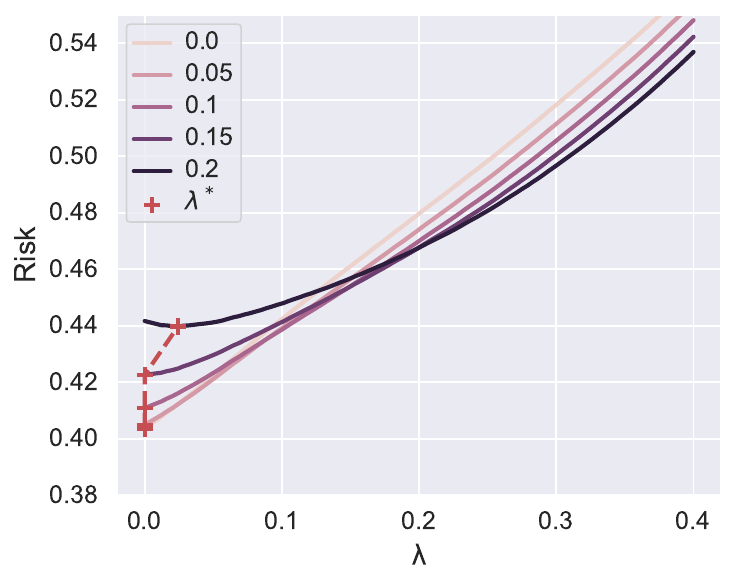}
        \caption{Housing ($n=4000$)}
        \label{fig:housingdropout}
    \end{subfigure}
    \hfill
    \begin{subfigure}{0.32\textwidth}
        \includegraphics[width=\linewidth]{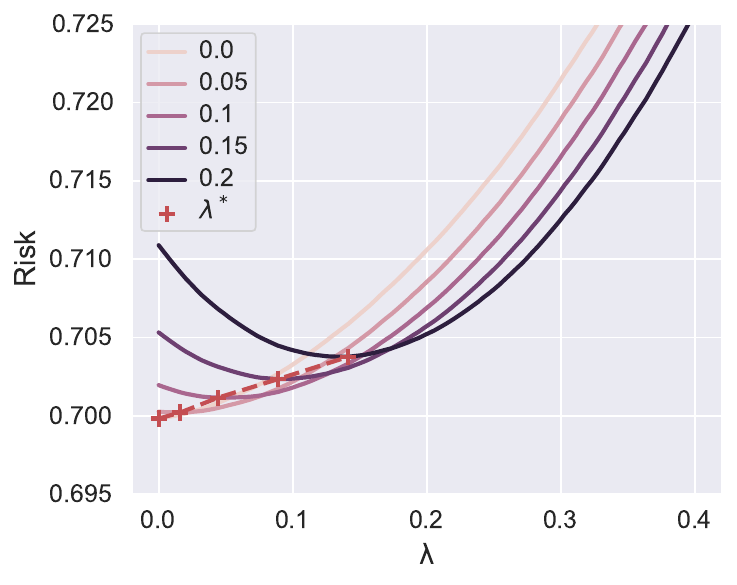}
        \caption{LSAC ($n=4000$)}
        \label{fig:LSACmanydropout}
    \end{subfigure}
    \hfill
    \begin{subfigure}{0.32\textwidth}
        \includegraphics[width=\linewidth]{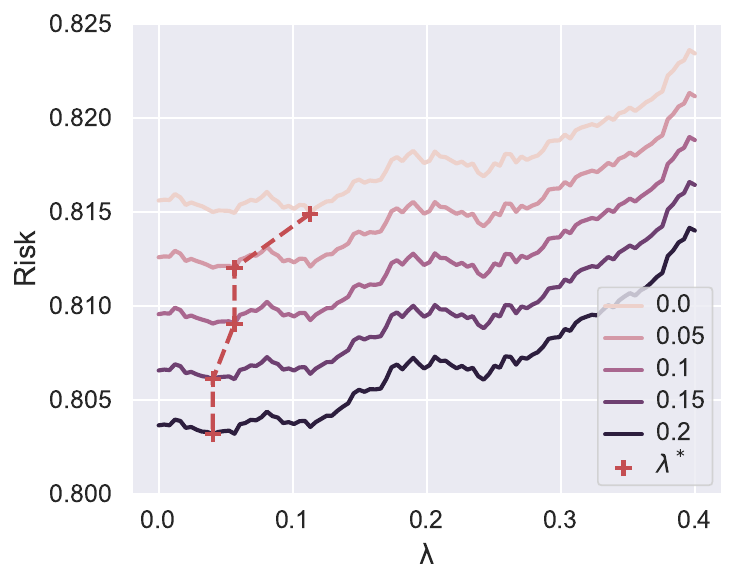}
        \caption{LSAC ($n=100$)}
        \label{fig:LSACfewdropout}
    \end{subfigure}
     \caption{Same experiments as in \Cref{fig:real}, using dropout regularization instead of ridge. Although dropout induces a more complex form of regularization (even for linear regression~\cite{dropoutversusl2}), the findings are similar. For the Housing dataset in the plot (a), dropout provides benefits only for large performative effects, which can be explained by the small dimension $d=8$. For LSAC in the plots (b)-(c), the observed effects are the same as for ridge regularization. Finally, for LSAC in the plot (c), being in the proportional setting with large noise, the optimal dropout rate increases with the strength of the performative effect and the optimal risk get smaller, as it was the case for ridge regularization.
     }
    \label{fig:dropout}
\end{figure}

\begin{figure}[htb]
    \centering
    \begin{subfigure}{0.32\textwidth}
        \includegraphics[width=\linewidth]{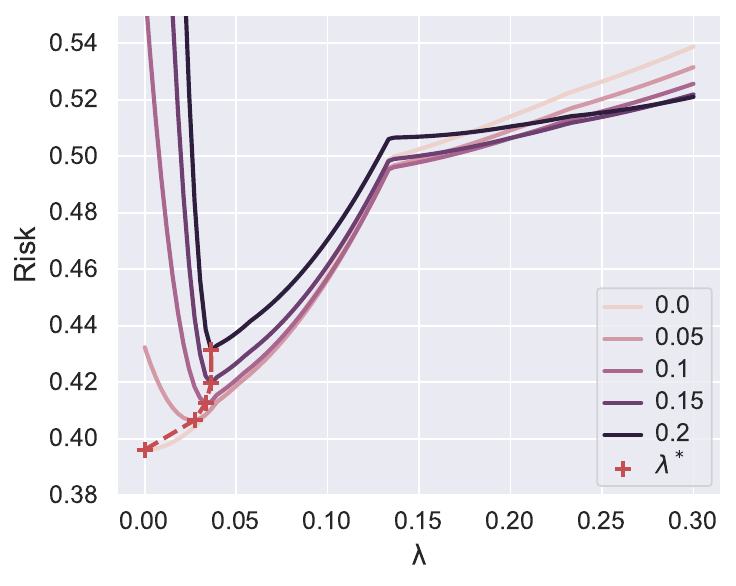}
        \caption{Housing ($n=4000$)}
        \label{fig:housinglasso}
    \end{subfigure}
    \hfill
    \begin{subfigure}{0.32\textwidth}
        \includegraphics[width=\linewidth]{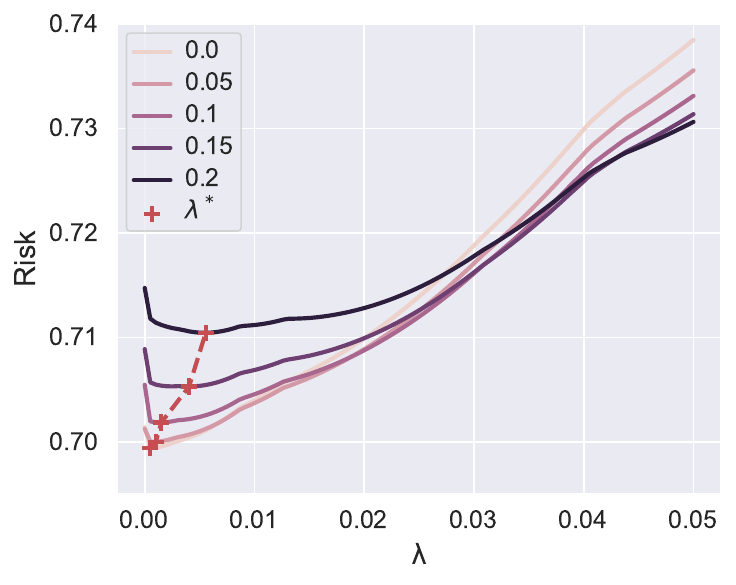}
        \caption{LSAC ($n=4000$)}
        \label{fig:LSACmanylasso}
    \end{subfigure}
    \hfill
    \begin{subfigure}{0.32\textwidth}
        \includegraphics[width=\linewidth]{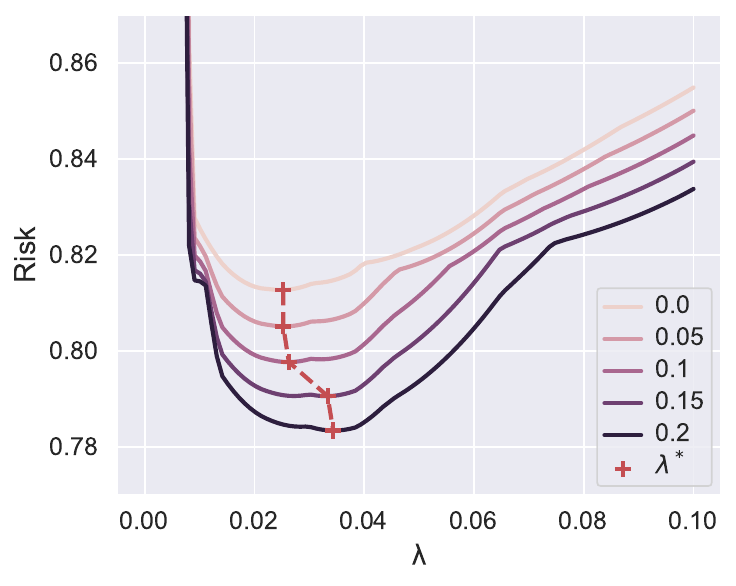}
        \caption{LSAC ($n=100$)}
        \label{fig:LSACfewlasso}
    \end{subfigure}
       \caption{Same experiments as in \Cref{fig:real}, using Lasso regularization instead of ridge. Similar conclusions hold: the performative effect worsens performance in the population regime and helps in the proportional regime; the optimal regularizer continues to be non-decreasing with the strength of the performative effect in the population regime. The optimal regularizer seems close to constant in the proportional regime, potentially due to the number of features being too small to observe a dependency on the Lasso regularization. Indeed, with $22$ features, the support of $\theta$ cannot change smoothly as the regularization increases.}
    \label{fig:lasso}
\end{figure}
\begin{figure}[htb]
    \centering
    \begin{subfigure}{0.32\textwidth}
        \includegraphics[width=\linewidth]{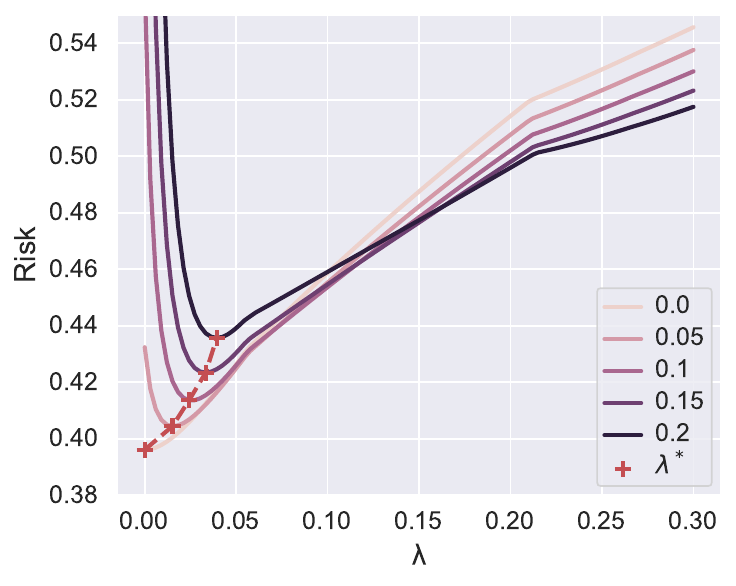}
        \caption{Housing ($n=4000$)}
        \label{fig:housingelasticnet}
    \end{subfigure}
    \hfill
    \begin{subfigure}{0.32\textwidth}
        \includegraphics[width=\linewidth]{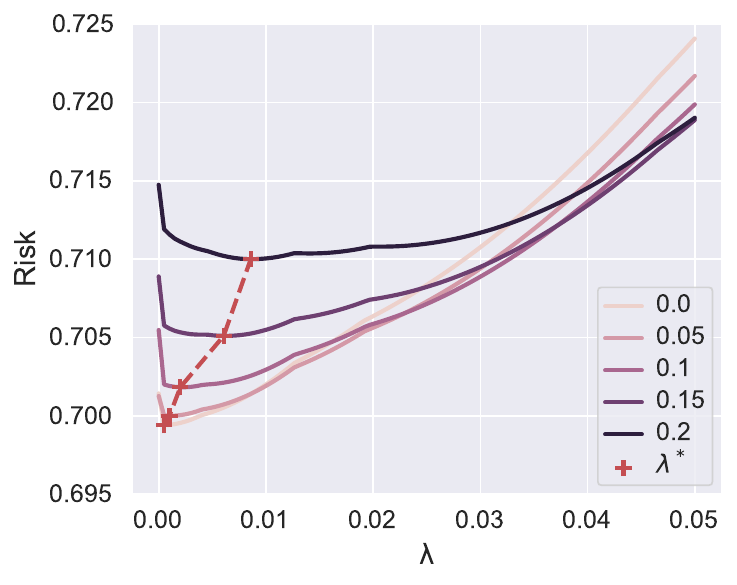}
        \caption{LSAC ($n=4000$)}
        \label{fig:LSACmanyelasticnet}
    \end{subfigure}
    \hfill
    \begin{subfigure}{0.32\textwidth}
        \includegraphics[width=\linewidth]{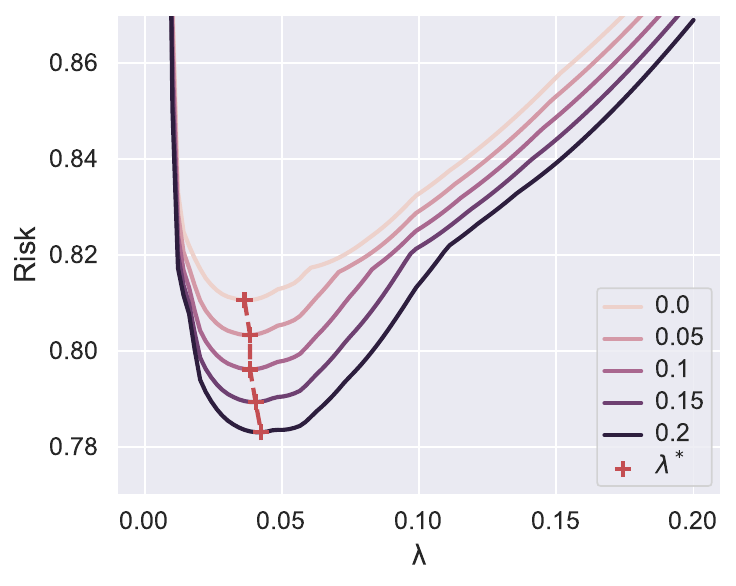}
        \caption{LSAC ($n=100$)}
        \label{fig:LSACfewelasticnet}
    \end{subfigure}
 \caption{Same experiments as in \Cref{fig:real}, using an elastic net regularization with an equal ratio between $\ell_1$ and $\ell_2$ penalties instead of ridge. As intuition suggests, the results lie between those obtained with ridge and Lasso regularization.}
    \label{fig:elasticnet}
\end{figure}

\newpage

\section{Additional proofs for \Cref{sec:pop}}
\label{app:pop}

This appendix contains the missing proofs for \Cref{sec:pop}. %
We start with the convergence at exponential rate to the fixed point $\theta^{\infty}$ in (\ref{eq:fppop}). Then, we prove Theorem \ref{thm:pop} giving the first-order approximation of the risk, as well as the expression in (\ref{eq:pophigh}) giving the higher-order approximation of the risk. Finally, we prove the upper bound on $\|F\|_{\mathrm{op}}$ in (\ref{eq:opnF}).

\begin{lemma}\label{lemma:cr}
    The sequence $(\theta_k)_k$ converges to the fixed point \[\theta^{\infty} = (I_p +\lambda \Sigma^{-1} - D)^{-1}\thetapop.\]
  Moreover, for any \(\varepsilon \in (0, 1)\), if we start at \(\theta_0 = 0\), after at most 
     \[k_{\varepsilon} = \left\lceil \frac{\ln \left(1/\varepsilon\right)}{\ln\left(1/\left(\frac{\|\Sigma\|_{\mathrm{op}}}{\|\Sigma\|_{\mathrm{op}} + \lambda} \max\left\{\|b \|_\infty, \|c \|_{\infty}\right\}\right)\right)}\right\rceil\] iterations, the relative error $
        \frac{\|\theta^{k_{\varepsilon}} - \theta^{\infty}\|_2}{\|\theta^{\infty}\|_2}$
        is smaller than $\varepsilon$.

\end{lemma}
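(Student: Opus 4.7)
The plan is to rewrite the recursion (\ref{eq:recpop}) as an affine iteration $\theta_k = A\theta_{k-1}+c$ with $A := (\Sigma+\lambda I_p)^{-1}\Sigma D$ and $c := (\Sigma+\lambda I_p)^{-1}\Sigma \thetapop$, exhibit the fixed point as the unique solution of $(I_p-A)\theta^\infty = c$, and then establish that $A$ is a strict contraction in operator norm so that convergence is geometric.

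For the fixed point, I would use the factorization $I_p - A = (\Sigma+\lambda I_p)^{-1}(\Sigma+\lambda I_p - \Sigma D)$ to obtain $\theta^\infty = (\Sigma+\lambda I_p-\Sigma D)^{-1}\Sigma\thetapop$, matching the stated expression (interpreting $M$ as $\Sigma D$) and also recovering the form in (\ref{eq:fppop}) after left-multiplying by $\Sigma^{-1}$. For the contraction, submultiplicativity gives $\|A\|_{\mathrm{op}}\le \|(\Sigma+\lambda I_p)^{-1}\Sigma\|_{\mathrm{op}}\,\|D\|_{\mathrm{op}}$. Since $\Sigma$ and $\Sigma+\lambda I_p$ share an eigenbasis and the scalar map $\mu\mapsto \mu/(\mu+\lambda)$ is monotone increasing in $\mu$ for $\lambda\ge 0$, the first factor equals $\|\Sigma\|_{\mathrm{op}}/(\|\Sigma\|_{\mathrm{op}}+\lambda)$; the second equals $\max(\|b\|_\infty,\|c\|_\infty)<1$ because $D$ is diagonal and by Assumption \ref{assum:model}, so the product is strictly less than one.

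Subtracting the fixed-point identity $\theta^\infty = A\theta^\infty + c$ from the recursion and iterating gives $\theta_k - \theta^\infty = A^k(\theta_0-\theta^\infty)$. Specializing to $\theta_0 = 0$ and bounding $\|A^k\|_{\mathrm{op}}\le \|A\|_{\mathrm{op}}^k$ yields $\|\theta_k-\theta^\infty\|_2/\|\theta^\infty\|_2\le \|A\|_{\mathrm{op}}^k$. Solving $\|A\|_{\mathrm{op}}^k\le \varepsilon$ for the smallest integer $k$ and taking the ceiling produces exactly the stated $k_\varepsilon$. No step is genuinely hard: the only point requiring some care is justifying the exact equality $\|(\Sigma+\lambda I_p)^{-1}\Sigma\|_{\mathrm{op}} = \|\Sigma\|_{\mathrm{op}}/(\|\Sigma\|_{\mathrm{op}}+\lambda)$ via simultaneous diagonalization, rather than settling for a looser submultiplicative bound, since that precise expression is what appears inside the logarithm in the conclusion of the lemma.
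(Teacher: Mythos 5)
Your proposal is correct and follows essentially the same route as the paper's proof: the same affine iteration $\theta_k = T\theta_{k-1}+c$ with $T=(\Sigma+\lambda I_p)^{-1}\Sigma D$, the same fixed-point factorization, the same error recursion $e_k = T^k e_0$, and the same operator-norm bound $\|T\|_{\mathrm{op}}\le \frac{\|\Sigma\|_{\mathrm{op}}}{\|\Sigma\|_{\mathrm{op}}+\lambda}\max\{\|b\|_\infty,\|c\|_\infty\}$. Your added care in justifying $\|(\Sigma+\lambda I_p)^{-1}\Sigma\|_{\mathrm{op}}=\|\Sigma\|_{\mathrm{op}}/(\|\Sigma\|_{\mathrm{op}}+\lambda)$ via simultaneous diagonalization, and in reading the statement's $M$ as $\Sigma D$, only makes the argument cleaner than the paper's.
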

\begin{proof}
Denoting $T = (\Sigma + \lambda I_p)^{-1} \Sigma D$, the recurrence relation is 
\[\theta^k = T \theta^{k-1} + (\Sigma + \lambda I_p)^{-1} \Sigma \thetapop.  \]
When going to the limit, $\sum_i T^i \rightarrow (I_p - T)^{-1}$. The convergence requires the matrix $T$ to have smaller eigenvalues than one, which is guaranteed by  $\|b\|_{\infty}$ and $\|c\|_{\infty}$ being smaller than one. Thus, we have 
\[\theta^{\infty} = (I_p - T)^{-1}(\Sigma + \lambda I_p)^{-1} \Sigma \thetapop. \]
Noticing that $I_p = (\Sigma + \lambda I_p)^{-1}(\Sigma + \lambda I_p)$ and using the definition of $T$ gives the expression of $\theta_{\infty}$.

Let $e_k=\theta^{k}-\theta^\infty$. Using $\theta^\infty=T\theta^\infty+(\Sigma+\lambda I_p)^{-1}\Sigma\,\thetapop$, we have
\[
e_k
= \theta^{k}-\theta^\infty
= T\theta^{k-1}+(\Sigma+\lambda I_p)^{-1}\Sigma \thetapop - \theta^\infty
= T(\theta^{k-1}-\theta^\infty)
= T e_{k-1}.
\]
Thus,
\[
e_k=T^k e_0 = -T^{k} \theta^\infty \implies \|e_k\|_2 \le \|T\|^k_{\mathrm{op}} \| \theta^\infty \|_2 \implies \frac{\|e_k\|_2}{\|\theta^\infty\|_2} =  \frac{\|\theta^{k}-\theta^\infty\|_2}{\|\theta^\infty\|_2} \le  \|T\|_{\mathrm{op}}^k. 
\]
Consequently, $\|T\|^k\le\varepsilon$ suffices, i.e.,
$k \ge \ln(1/\varepsilon)/\ln\left(1/\|T\|_{\mathrm{op}}\right)$. We finally note that
\begin{equation*}
    \begin{split}
        \|T\|_{\mathrm{op}} &\le \|(\Sigma+\lambda I_p)^{-1}\Sigma\|_{\mathrm{op}}\|D\|_{\mathrm{op}}
= \frac{\|\Sigma\|_{\mathrm{op}}}{\|\Sigma\|_{\mathrm{op}} + \lambda} \max\left\{\|b \|_\infty, \|c \|_{\infty}\right\}.
    \end{split}
\end{equation*}
Combining the last two inequalities gives the wanted convergence rate.
\end{proof}

\begin{proof}[Proof of Theorem \ref{thm:pop} and of the higher-order approximation in (\ref{eq:pophigh})]
    We start by computing the Taylor expansion:
    \begin{align*}
        A = (\Sigma + \lambda I_p - \Sigma D)^{-1} \Sigma - I_p
        = (I_p - (D-\lambda \Sigma^{-1}))^{-1} - I_p 
        = \sum_{i = 1}^{\infty} (D- \lambda\Sigma^{-1})^i
        = \sum_{i = 1}^{\infty} F^i.
    \end{align*}
    Let us define 
    \(A^{(k)} = \sum_{i=1}^{k} (D- \lambda\Sigma^{-1})^i \). For the two first orders, we have:
    \[ A^{(1) \, \top} \Sigma A^{(1)} = (D- \lambda\Sigma^{-1}) \Sigma (D- \lambda\Sigma^{-1}) = D\Sigma D - 2\lambda D + \lambda^2 \Sigma^{-1}.
    \]
    This is independent of $c$ and gives the simple formula
    \[R^{(1)}(\lambda) = \frac{1}{d} \tr(\di(b^2)\Sigma_1) - 2 \lambda \bar b +\frac{1}{d} \lambda^2 \tr(S_1), \]
    where $\bar{b} := \frac{1}{d}\tr[\di(b)] =
\frac{1}{d}\sum_{i=1}^d b_i$, $b^2:=[b_1^2, \ldots, b_d^2]\in\mathbb R^d$ and $S_1 = (\Sigma_1 - \Sigma_{12}\Sigma_2^{-1} \Sigma_{21})^{-1}$ denotes the Schur complement of $\Sigma$.
    We go further in the expansion to recover (\ref{eq:pophigh}):
    \begin{align*}
        A^{(2) \,\top}\Sigma A^{(2)}
        & = 
         A^{(1) \, \top} \Sigma A^{(1)} + A^{(1)\,\top} \Sigma \left(A^{(2)} - A^{(1)} \right) + \left(A^{(2)} - A^{(1)} \right)^\top \Sigma A^{(1)}  \\&\quad + \left(A^{(2)} - A^{(1)} \right)^\top \Sigma \left(A^{(2)} - A^{(1)} \right)
        \\& =  D \Sigma D + D^2 \Sigma D + D \Sigma D^2  - \lambda\left[ D\Sigma D \Sigma^{-1}
        + \Sigma^{-1} D \Sigma D + 2D + 4D^2\right] \\
        &\quad + \lambda^2\left[\Sigma^{-1}
        + 3(\Sigma^{-1} D + D \Sigma^{-1}) \right] -2 \lambda^3 \Sigma^{-2} + O\left(\|F\|_{\mathrm{op}}^4\right).
    \end{align*}
The final formula results from taking the trace of the first block. We write the matrix product block per block to prove that $$\tr\bigl[(D\Sigma D \Sigma^{-1}
        + \Sigma^{-1} D \Sigma D)_1\bigr] = 2\tr\!\bigl[\di(b)\Sigma_1\di(b)S_1\bigr]
         + 2\tr\!\bigl[\di(b)\Sigma_{12}\di(c)S_{21}\bigr],$$
         where $S_{21}^\top= -(\Sigma_1 - \Sigma_{12}\Sigma_2^{-1} \Sigma_{21})^{-1}\Sigma_{12} \Sigma_2^{-1}$. This concludes the proof.
\end{proof}

\begin{lemma}\label{lemma:weyl}
Let $F=D-\lambda\,\Sigma^{-1}$. Then, we have that
\begin{equation}\label{eq:fopn1}
\|F\|_{\mathrm{op}} \le 
\max\left(
\left| \max_{1\le i \le d}\{b_i, c_i\} - \frac{\lambda}{\lambda_{\max}(\Sigma)} \right|,
\left| \min_{1\le i \le d}\{b_i, c_i\} - \frac{\lambda}{\lambda_{\min}(\Sigma)} \right|
\right).
\end{equation}
\end{lemma}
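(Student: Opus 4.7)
The plan is to write $F$ as a sum of two symmetric matrices and apply Weyl's inequality to control its eigenvalues, then convert the resulting spectral window into a bound on the operator norm via the identity $\|F\|_{\mathrm{op}} = \max_k |\lambda_k(F)|$ that holds for symmetric $F$.

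First, I would identify the extreme eigenvalues of each summand of $F = D + (-\lambda\,\Sigma^{-1})$. Since $D = \di(b, c)$ is diagonal with entries $b_1,\dots,b_d,c_1,\dots,c_d$, its largest and smallest eigenvalues are $\lambda_{\max}(D) = \max_i\{b_i, c_i\}$ and $\lambda_{\min}(D) = \min_i\{b_i, c_i\}$. For $\lambda \ge 0$, the matrix $-\lambda\,\Sigma^{-1}$ has spectrum $\{-\lambda/\mu : \mu \in \mathrm{spec}(\Sigma)\}$, so its largest and smallest eigenvalues are $-\lambda/\lambda_{\max}(\Sigma)$ and $-\lambda/\lambda_{\min}(\Sigma)$, respectively.

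Applying Weyl's inequality in the forms $\lambda_{\max}(A+B) \le \lambda_{\max}(A) + \lambda_{\max}(B)$ and $\lambda_{\min}(A+B) \ge \lambda_{\min}(A) + \lambda_{\min}(B)$ with $A = D$ and $B = -\lambda\,\Sigma^{-1}$, I obtain
\begin{equation*}
L := \min_i\{b_i, c_i\} - \frac{\lambda}{\lambda_{\min}(\Sigma)} \,\le\, \lambda_k(F) \,\le\, \max_i\{b_i, c_i\} - \frac{\lambda}{\lambda_{\max}(\Sigma)} =: U
\end{equation*}
for every index $k$. The elementary observation that any $x\in[L, U]$ satisfies $|x| \le \max(|L|, |U|)$, combined with $\|F\|_{\mathrm{op}} = \max_k |\lambda_k(F)|$, then yields exactly the claimed inequality.

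The argument is essentially routine and I do not anticipate a real obstacle; the only delicate point worth flagging is the sign of $\lambda$. For $\lambda < 0$ the extreme eigenvalues of $-\lambda\,\Sigma^{-1}$ swap, so the Weyl bounds would pair $\lambda_{\max}(\Sigma)$ and $\lambda_{\min}(\Sigma)$ with $\min_i\{b_i,c_i\}$ and $\max_i\{b_i,c_i\}$ in the opposite way. This case can be handled by repeating the same argument with the roles reversed, and in the regime used after Theorem~\ref{thm:pop}, where $\lambda$ is comparable to the small quantity $\bar b$, the positive-$\lambda$ case is the relevant one.
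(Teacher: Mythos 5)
Your proposal is correct and follows essentially the same route as the paper: Weyl's inequality applied to the sum $D + (-\lambda\Sigma^{-1})$ to trap the spectrum of $F$ in an interval $[L,U]$, followed by $\|F\|_{\mathrm{op}} = \max_k|\lambda_k(F)| \le \max(|L|,|U|)$. Your caveat about the sign of $\lambda$ is a fair observation that the paper's proof silently assumes $\lambda \ge 0$ (relevant since the paper does discuss negative optimal regularization), but it does not change the substance of the argument.
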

\begin{proof}
By Weyl's inequalities for Hermitian matrices,
\begin{align*}
\lambda_{\max}(F) &\le \lambda_{\max}(D) - \lambda \lambda_{\min}(\Sigma^{-1}) 
= \max_{1\le i \le d}\{b_i, c_i\} - \frac{\lambda}{\lambda_{\max}(\Sigma)},\\
\lambda_{\min}(F) &\ge \lambda_{\min}(D) - \lambda \lambda_{\max}(\Sigma^{-1})
= \min_{1\le i \le d}\{b_i, c_i\} - \frac{\lambda}{\lambda_{\min}(\Sigma)}.
\end{align*}
Therefore, we have
\[
\begin{aligned}
\|F\|_{\mathrm{op}} &= \max\left\{ |\lambda_{\max}(F)|,|\lambda_{\min}(F)| \right\}
\\&\le \max\left(
\left| \max_{1\le i \le d}\{b_i, c_i\} - \frac{\lambda}{\lambda_{\max}(\Sigma)} \right|,
\left| \min_{1\le i \le d}\{b_i, c_i\} - \frac{\lambda}{\lambda_{\min}(\Sigma)} \right|
\right),
\end{aligned}
\]
and the equality happens if and only if \(\Sigma\) and \(D\) are simultaneously diagonalizable.
\end{proof}
Finally, we can rewrite this result as 
\begin{equation}   \label{eq:opnF} 
\|F\|_{\mathrm{op}} \le 
\max\left(
\begin{split}
\left| \max_{1\le i \le d}\{b_i, c_i\} - \frac{\lambda}{\|\Sigma\|_{\mathrm{op}}} \right|,\\
\left| \min_{1\le i \le d}\{b_i, c_i\} - \frac{\lambda}{\lambda_{\min}(\Sigma)} \right|
\end{split}
\right),
\end{equation}
where $\lambda_{\min}(\Sigma)$ is the smallest eigenvalue of $\Sigma$, since $\lambda_{\max}(\Sigma)=\|\Sigma\|_{\mathrm{op}}$, due to $\Sigma$ being a covariance matrix and, hence, positive semidefinite.

\section{Proof of Theorem \ref{thm:over}}\label{app:pf}

\paragraph{Deterministic equivalent for $\mathcal R_{1}(\Sigma, \theta_{1}, \thetapop)$.}

Let $\mathcal R_k(\Sigma, \theta_k, \thetapop)$ be the excess risk of the estimator $\theta_k$ given by \eqref{eq:thetak}, i.e.,
\[
    \mathcal R_k(\Sigma, \theta_k, \thetapop) = \left\|\theta_k - \thetapop \right\|_\Sigma^2. 
\]
Having fixed the initialization $\theta_0$, the only randomness in $\mathcal R_{1}(\Sigma, \theta_{1}, \thetapop)$ comes from $(X^{(0)}, y^{(0)})$. This corresponds to the setting in which one trains from the (deterministic) vector of regression coefficients $\thetapop+D\theta_0$. The following lemma gives %
a deterministic equivalent for $\mathcal R_{1}(\Sigma, \theta_{1}, \thetapop)$, conditional on $\theta_0$.

\begin{lemma}\label{lemma:1step}
\revised{Let Assumption \ref{assum:model} hold.} Let $R>0$ be a constant such that $\thetapop, \theta_0\in B_p(R)$. Assume that $\kappa, 
\sigma, \lambda \in (1/M, M)$ and $\|\Sigma\|_{\mathrm{op}},\ \|\Sigma^{-1}\|_{\mathrm{op}} \le M$ for some constant $M>1$. Then, there exists a constant $C=C\left(M, R\right)$ such that, for any $\delta \in (0, 1/2]$, the following holds %
\begin{equation}\label{eq:det1}    
\sup_{\thetapop, \theta_0\in B_p(R)}
\Pr\left(\left|\mathcal R_{1}(\Sigma, \theta_{1}, \thetapop)-\fixedriskeq^{(1)}\left(\Sigma, \theta_0, \thetapop\right)\right|\ge \delta\right)
\le Cpe^{-p\delta^{4}/C},
\end{equation}
\revised{with probability at least $1-Cpe^{-p\delta^{4}/C}$,} where
\begin{equation} \label{eq:R1eq}   
\begin{aligned}
\fixedriskeq^{(1)}\left(\Sigma, \theta_0, \thetapop\right)
&= \left\|\left( \Sigma + \tau I_p \right)^{-1} \Sigma (\thetapop+D\theta_0) - \thetapop  \right\|_\Sigma^2 \\&\quad + \kappa \tr\left[ \Sigma^2 \left( \Sigma + \tau I_p \right)^{-2} \right]\frac{   \sigma^2 + \tau^2 \left\| \left( \Sigma + \tau I_p \right)^{-1} (\thetapop+D\theta_0) \right\|_\Sigma^2  }{ p - \kappa \tr\left[ \Sigma^2 \left( \Sigma + \tau I_p \right)^{-2} \right] },
\end{aligned}
\end{equation}
and $\tau$ is the unique solution of \eqref{eq:tau}. 
\end{lemma}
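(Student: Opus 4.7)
The plan is to reduce the one-step performative risk to a classical ridge regression with train/test mismatch, and then to invoke the non-asymptotic deterministic equivalent of \cite{ildizhigh}, Theorem~3 (itself built on the anisotropic local law of \cite{han2023distribution}). Conditional on $\theta_0$, Assumption \ref{assum:model} writes the first-deployment labels as $y^{(0)}_i = (x^{(0)}_i)^\top \beta + w_i$ with $\beta := \thetapop + D\theta_0$ \emph{deterministic}, $w_i \sim \mathcal N(0,\sigma^2)$ i.i.d.\ and independent of $X^{(0)}$. Hence $\theta_1$ in \eqref{eq:thetak} is the ordinary ridge estimator built from samples with ground-truth vector $\beta$, noise variance $\sigma^2$ and penalty $\lambda$ (under the $1/p$ normalization); the only non-standard feature is that the test target in $\mathcal R_1(\Sigma,\theta_1,\thetapop)$ is $\thetapop = \beta - D\theta_0$, not $\beta$. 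The bound $\|\beta\|_2 \le (1+\|D\|_{\mathrm{op}})R \le 2R$ (using $\|D\|_{\mathrm{op}} < 1$ from Assumption \ref{assum:model}) keeps $\beta$ in a ball of $O(R)$ radius, which is what allows a uniform version of the deterministic equivalent to apply.

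Setting $\widehat\Sigma := X^{(0)\top}X^{(0)}/p$, $A := (\widehat\Sigma + \lambda I_p)^{-1}\widehat\Sigma$ and $B := (\widehat\Sigma + \lambda I_p)^{-1} X^{(0)\top}/p$, I would write $\theta_1 = A\beta + B w^{(0)}$ and split the risk as
\[
\mathcal R_1(\Sigma,\theta_1,\thetapop) = \|A\beta - \thetapop\|_\Sigma^2 + 2\langle A\beta - \thetapop,\, Bw^{(0)}\rangle_\Sigma + w^{(0)\top} B^\top\Sigma B\, w^{(0)}.
\]
On the $1-e^{-cp}$ event that $\|X^{(0)}\|_{\mathrm{op}}$ and $\|(\widehat\Sigma + \lambda I_p)^{-1}\|_{\mathrm{op}}$ are $O(1)$ (standard Gaussian-matrix concentration, using $\|\Sigma\|_{\mathrm{op}}, \|\Sigma^{-1}\|_{\mathrm{op}} \le M$ and $\kappa, \lambda \in (1/M, M)$), the Hanson--Wright inequality applied conditionally on $X^{(0)}$ will force the cross term to concentrate at $0$ and the quadratic noise term to concentrate at its conditional mean $\sigma^2\tr(\Sigma BB^\top)$, both with sub-exponential tails of the form $\exp(-cp\delta^2)$ that are comfortably absorbed into the claimed $Cp\exp(-p\delta^4/C)$.

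It then remains to replace the two $X^{(0)}$-measurable quantities $\|A\beta - \thetapop\|_\Sigma^2$ and $\sigma^2\tr(\Sigma BB^\top)$ by their deterministic equivalents; Theorem~3 of \cite{ildizhigh}, instantiated with train coefficient $\beta$ and test target $\thetapop$, delivers exactly this. The quadratic form $\|A\beta - \thetapop\|_\Sigma^2$ lifts to its mean $\|(\Sigma + \tau I_p)^{-1}\Sigma\beta - \thetapop\|_\Sigma^2$ plus a self-induced-fluctuation term $\kappa\tr[\Sigma^2(\Sigma+\tau I_p)^{-2}]\,\tau^2\|(\Sigma+\tau I_p)^{-1}\beta\|_\Sigma^2 / \bigl(p - \kappa\tr[\Sigma^2(\Sigma+\tau I_p)^{-2}]\bigr)$, and the noise term lifts to $\kappa\tr[\Sigma^2(\Sigma+\tau I_p)^{-2}]\sigma^2 / \bigl(p - \kappa\tr[\Sigma^2(\Sigma+\tau I_p)^{-2}]\bigr)$, with $\tau$ the unique solution of \eqref{eq:tau}; the denominator correction is the standard noise-inflation factor obtained by implicitly differentiating the Silverstein-type fixed point. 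Summing the two pieces recovers exactly $\fixedriskeq^{(1)}$ of \eqref{eq:R1eq}. The main obstacle is quantitative: a naive bilinear-form concentration only yields $\exp(-p\delta^2)$, whereas here the sharp $\exp(-p\delta^4/C)$ rate, \emph{uniform} in $(\thetapop,\theta_0)\in B_p(R)$, is needed. This is precisely what the anisotropic local law of \cite{han2023distribution} provides, by simultaneously controlling all bilinear forms $u^\top(\widehat\Sigma + \lambda I_p)^{-1}v$ on bounded subsets at rate $p^{-1/4}$ with constants polynomial in $M$; that package, already adapted to mismatched ridge risks in \cite{ildizhigh}, is what makes the chain of approximations above non-asymptotic and yields the stated bound.
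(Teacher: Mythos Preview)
Your proposal is correct and relies on the same external input as the paper: both reduce the one-step performative situation to ridge regression with train coefficient $\beta=\thetapop+D\theta_0$ and mismatched test target $\thetapop$, and both ultimately invoke Theorem~3 of \cite{ildizhigh} (and the underlying anisotropic local law of \cite{han2023distribution}) to obtain the uniform $Cpe^{-p\delta^4/C}$ bound.

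The presentations differ, however. You split $\mathcal R_1$ into signal, cross, and noise terms, handle the noise pieces by Hanson--Wright, and then apply deterministic equivalents to $\|A\beta-\thetapop\|_\Sigma^2$ and $\sigma^2\tr(\Sigma BB^\top)$ separately. The paper instead applies the theorem once to the full risk, obtaining the stochastic representation $X^{(1)}(\Sigma,\beta,g^{(1)})=(\Sigma+\tau I_p)^{-1}\Sigma\bigl[\beta+\Sigma^{-1/2}\gamma^{(1)}g^{(1)}/\sqrt{p}\bigr]$ with $g^{(1)}\sim\mathcal N(0,I_p)$ and $(\gamma^{(1)})^2=\kappa(\sigma^2+\tfixedriskeq^{(1)}(\Sigma,\beta,\beta))$; it then computes $\mathbb E_{g^{(1)}}\|X^{(1)}-\thetapop\|_\Sigma^2$ and solves the resulting linear fixed-point for $(\gamma^{(1)})^2$ to reach \eqref{eq:R1eq}. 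Your route is slightly more elementary for this lemma in isolation; the paper's route has the advantage that the object $X^{(1)}$ can be plugged back in place of $\theta_1$ when iterating, which is exactly how the two-step Lemma~\ref{lemma:2step} is proved. If you intend to also prove the two-step result, it is worth keeping the $X^{(1)}/\gamma^{(1)}$ representation rather than collapsing it immediately.
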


\begin{proof}
Note that we are generating labels using $\thetaperfok:=\thetapop+D\theta_0$ as a vector of regression coefficients. Thus, we can apply Theorem 3 by \cite{ildizhigh} (which utilizes the non-asymptotic characterization of the minimum norm interpolator by \cite{han2023distribution}), replacing $\beta^s$ with $\thetaperfok$ in that statement. This gives that \eqref{eq:det1} holds with $\fixedriskeq^{(1)}\left(\Sigma, \theta_0, \thetapop\right)$ replaced by $\tfixedriskeq^{(1)}\left(\Sigma, \thetapop, \thetaperfok\right)$ defined as 
\begin{equation}\label{eq:tildeR1}    
\tfixedriskeq^{(1)}\left(\Sigma, \thetapop, \thetaperfok\right)= \E_{g^{(1)}}\left[\left\|X^{(1)}\left(\Sigma, \thetaperfok, g^{(1)}\right)-\thetapop\right\|_\Sigma^2 \right] ,
\end{equation}
where
\begin{align}
X^{(1)}\left(\Sigma, \thetaperfok, g^{(1)}\right)
&=  (\Sigma + \tau I_p)^{-1} \Sigma \left[\thetaperfok + \frac{\Sigma^{-1/2} \gamma^{(1)}(\thetaperfok)g^{(1)}}{\sqrt{p}}\right] ,\label{eq:X1} \\
\left(\gamma^{(1)}(\thetaperfok)\right)^2
&= \kappa\left(\sigma^2 + \tfixedriskeq^{(1)}\left(\Sigma, \thetaperfok, \thetaperfok\right)\right),\label{eq:gamma1}
\end{align}
$\tau$ is the unique solution of \eqref{eq:tau} and $g^{(1)}\sim \mathcal N(0, I_p)$. By plugging \eqref{eq:X1} into \eqref{eq:tildeR1} and computing the expectation with respect to $g^{(1)}$, we get
\begin{equation}\label{eq:tildeR2}
    \tfixedriskeq^{(1)}\left(\Sigma, \thetapop, \thetaperfok\right)=\left\| \left(\Sigma + \tau I_p \right)^{-1} \Sigma \thetaperfok - \thetapop  \right\|_\Sigma^2 + \frac{\left(\gamma^{(1)}(\thetaperfok)\right)^2}{p} \tr \left[ \Sigma^2 \left(\Sigma + \tau I_p \right)^{-2} \right].
\end{equation}
Next, we solve the fixed point equation in $\gamma^{(1)}(\thetaperfok)$: 
\[
\begin{aligned}
    \left(\gamma^{(1)}(\thetaperfok)\right)^2
&= \kappa\left(\sigma^2 + \tfixedriskeq^{(1)}\left(\Sigma, \thetaperfok, \thetaperfok\right)\right)  \\&= \kappa\left(\sigma^2 + \left\| \left(\left(\Sigma + \tau I_p \right)^{-1} \Sigma - I_p\right) \thetaperfok  \right\|_\Sigma^2 + \frac{\left(\gamma^{(1)}(\thetaperfok)\right)^2}{p} \tr \left[ \Sigma^2 \left(\Sigma + \tau I_p \right)^{-2} \right]\right) \\
&= \kappa\left(\sigma^2 + \tau^2\left\| \left(\Sigma + \tau I_p \right)^{-1} \thetaperfok  \right\|_\Sigma^2 + \frac{\left(\gamma^{(1)}(\thetaperfok)\right)^2}{p} \tr \left[ \Sigma^2 \left(\Sigma + \tau I_p \right)^{-2} \right]\right).
\end{aligned}
\]
The last equality comes from
\[
I_p - \left(\Sigma + \tau I_p \right)^{-1} \Sigma  =  \left(\Sigma + \tau I_p \right)^{-1} \left( \Sigma + \tau I_p \right) - \left(\Sigma + \tau I_p \right)^{-1}\Sigma = \tau \left(\Sigma + \tau I_p \right)^{-1}.
\]
Rearranging gives that
\begin{equation}\label{eq:gamma1ex}
\left(\gamma^{(1)} (\thetaperfok)\right)^2 = \kappa\frac{ \sigma^2 + \tau^2 \left\| \left( \Sigma + \tau I_p \right)^{-1} \thetaperfok  \right\|_\Sigma^2  }{ 1 - \frac{1}{n} \tr\left[ \Sigma^2 \left( \Sigma + \tau I_p \right)^{-2} \right] },
\end{equation}
which plugged into \eqref{eq:tildeR2} gives the desired result.
\end{proof}

We note that the expression in \eqref{eq:R1eq} depends on $\theta_0$ and, in fact, it keeps depending on it even after neglecting terms of order $O(\|D\|_{\mathrm{op}}^2)$.

\paragraph{Deterministic equivalent for $\mathcal R_{2}(\Sigma, \theta_{2}, \thetapop)$.}  Next, by %
iterating twice the strategy of Lemma \ref{lemma:1step}, we derive
a deterministic equivalent for $\mathcal R_{2}(\Sigma, \theta_{2}, \thetapop)$. %

\begin{lemma}\label{lemma:2step}
\revised{Let Assumption \ref{assum:model} hold.} Let $R>0$ be a constant such that $\thetapop, \theta_0\in B_p(R)$. Assume that $\kappa, 
\sigma, \lambda \in (1/M, M)$ and $\|\Sigma\|_{\mathrm{op}},\ \|\Sigma^{-1}\|_{\mathrm{op}} \le M$ for some constant $M>1$. Then, there exists a constant $C=C\left(M, R\right)$ such that, for any $\delta \in (0, 1/2]$, the following holds %
\begin{equation}\label{eq:det2}    
\sup_{\thetapop, \theta_0\in B_p(R)}
\Pr\left(\left|\mathcal R_{2}(\Sigma, \theta_{2}, \thetapop)-\fixedriskeq^{(2)}\left(\Sigma, \theta_0, \thetapop\right)\right|\ge \delta\right)
\le Cpe^{-p\delta^{4}/C},
\end{equation}
\revised{with probability at least $1-Cpe^{-p\delta^{4}/C}$,} where
\begin{equation}  \label{eq:fixedriskeq2}  
\begin{aligned}
&\fixedriskeq^{(2)}\left(\Sigma, \theta_0, \thetapop\right)
= \left\| \left( \Sigma + \tau I_p \right)^{-1} \Sigma D \left( \Sigma + \tau I_p \right)^{-1} \Sigma (\thetapop+D\theta_0) - \tau(\Sigma+\tau I_p)^{-1}\thetapop \right\|_\Sigma^2  \\
&\quad+ \kappa \tr\left[ \Sigma \left( \Sigma + \tau I_p \right)^{-2} D  \Sigma^3 \left( \Sigma + \tau I_p \right)^{-2} D \right]
\frac{ \sigma^2 + \tau^2 \big\| \left( \Sigma + \tau I_p \right)^{-1} (\thetapop+D\theta_0) \big\|_\Sigma^2 }{p - \kappa \tr\left[ \Sigma^2 \left( \Sigma + \tau I_p \right)^{-2} \right] } \\
&\quad+ \kappa \tr\left[ \Sigma^2 \left( \Sigma + \tau I_p \right)^{-2} \right]
\frac{ \sigma^2 + \tau^2 \big\| \left( \Sigma + \tau I_p \right)^{-1} \left(\thetapop + D \left( \Sigma + \tau I_p \right)^{-1} \Sigma (\thetapop+D\theta_0) \right) \big\|_\Sigma^2 }{p - \kappa \tr\left[ \Sigma^2 \left( \Sigma + \tau I_p \right)^{-2} \right] } \\
&\quad+ \kappa^2 \tau^2  \tr\left[ \Sigma^2 \left( \Sigma + \tau I_p \right)^{-2} \right] 
\tr\left[ \Sigma \left( \Sigma + \tau I_p \right)^{-2} D  \Sigma \left( \Sigma + \tau I_p \right)^{-2} D \right]
\\
&\hspace{15em}\cdot\frac{ \sigma^2 + \tau^2 \big\| \left( \Sigma + \tau I_p \right)^{-1} (\thetapop+D\theta_0) \big\|_\Sigma^2 }
{\big( p - \kappa \tr\left[ \Sigma^2 \left( \Sigma + \tau I_p \right)^{-2} \right] \big)^2 },
\end{aligned}
\end{equation}
and $\tau$ is the unique solution of \eqref{eq:tau}. 
\end{lemma}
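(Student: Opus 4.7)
The plan is to iterate the one-step argument of Lemma \ref{lemma:1step} twice: first condition on $(X^{(0)},y^{(0)})$ to treat $\theta_1$ as a deterministic input and apply Lemma \ref{lemma:1step} to obtain a deterministic equivalent of $\mathcal R_2$ as a function of $\theta_1$; then replace the $\theta_1$-dependent quantities by their averages via the distributional equivalent of $\theta_1$. Since Lemma \ref{lemma:1step} is stated for inputs in a fixed ball, I would first argue that $\|\theta_1\|_2 \le R'$ with high probability for some constant $R'=R'(M,R)$, using the explicit formula \eqref{eq:thetak} and standard concentration of $(X^{(0)})^{\top}X^{(0)}/p$ around $\Sigma$.

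For the first step, conditional on $(X^{(0)},y^{(0)})$, the estimator $\theta_2$ is a ridge regressor trained on $n$ fresh samples from $\D(\theta_1)$ with regression coefficients $\thetapop+D\theta_1$. Applying Lemma \ref{lemma:1step} on $B_p(R')$ with $\theta_0$ replaced by $\theta_1$ gives $\mathcal R_2(\Sigma,\theta_2,\thetapop) = \fixedriskeq^{(1)}(\Sigma,\theta_1,\thetapop)+O(\delta)$ with probability at least $1-Cpe^{-p\delta^4/C}$ over $(X^{(1)},y^{(1)})$. For the second step, the proof of Lemma \ref{lemma:1step} via \cite{han2023distribution,ildizhigh} in fact gives a distributional equivalent $\theta_1 \approx P(\thetapop+D\theta_0)+Q\Sigma^{1/2}\gamma^{(1)}g^{(1)}/\sqrt{p}$, where I abbreviate $P=(\Sigma+\tau I_p)^{-1}\Sigma$ and $Q=(\Sigma+\tau I_p)^{-1}$ (both of which commute with $\Sigma$), with $g^{(1)}\sim\mathcal N(0,I_p)$ and $(\gamma^{(1)})^2$ as in \eqref{eq:gamma1ex}. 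The two quadratic functionals $\|P(\thetapop+D\theta_1)-\thetapop\|_\Sigma^2$ and $\|Q(\thetapop+D\theta_1)\|_\Sigma^2$ inside $\fixedriskeq^{(1)}(\Sigma,\theta_1,\thetapop)$ are then Gaussian quadratic forms in $g^{(1)}$, which concentrate around their means via Hanson-Wright.

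To match the four pieces of \eqref{eq:fixedriskeq2}, one writes
\begin{equation*}
\theta_2-\thetapop \approx -\tau Q\thetapop + PDP(\thetapop+D\theta_0) + PDQ\Sigma^{1/2}\gamma^{(1)}g^{(1)}/\sqrt{p} + Q\Sigma^{1/2}\gamma^{(2)}g^{(2)}/\sqrt{p}.
\end{equation*}
The squared $\Sigma$-norm of the deterministic part is Term 1. The $g^{(1)}$-variance, equal to $(\gamma^{(1)})^2/p \cdot \tr[\Sigma(\Sigma+\tau I_p)^{-2}D\Sigma^3(\Sigma+\tau I_p)^{-2}D]$, produces Term 2 after substituting the explicit form of $(\gamma^{(1)})^2$. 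The $g^{(2)}$-variance equals $\tr[\Sigma^2 Q^2]\cdot (\gamma^{(2)})^2/p$: the deterministic part of $(\gamma^{(2)})^2$ yields Term 3, while the residual Gaussian quadratic expectation $\E\|QDQ\Sigma^{1/2}g^{(1)}\|_\Sigma^2 = \tr[\Sigma(\Sigma+\tau I_p)^{-2}D\Sigma(\Sigma+\tau I_p)^{-2}D]$ appearing inside $(\gamma^{(2)})^2$ produces Term 4 after a further substitution of $(\gamma^{(1)})^2$. Commutativity of $\Sigma$ and $Q$ is used throughout to simplify traces, and the cross terms between $g^{(1)}$ and $g^{(2)}$ and between the deterministic and noise parts average to zero.

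The main obstacle is propagating the $Cpe^{-p\delta^4/C}$ bound cleanly through the conditioning, the boundedness event $\{\|\theta_1\|_2\le R'\}$, and the Hanson-Wright step; the Hanson-Wright step requires operator and Frobenius norm bounds on the matrices whose quadratic forms in $g^{(1)}$ we control, which are uniformly bounded under the standing assumptions on $\kappa, \sigma, \lambda, \Sigma$ (in particular $\tau$ stays bounded away from $0$ and $\infty$ by \eqref{eq:tau}). All remaining computations are direct algebra.
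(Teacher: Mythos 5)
Your proposal is correct and follows essentially the same route as the paper: bound $\|\theta_1\|_2$ with high probability, condition on $\theta_1$ and apply Lemma \ref{lemma:1step} a second time, then replace the $\theta_1$-dependent quadratic functionals by their deterministic equivalents via the Gaussian surrogate $X^{(1)}$ and compute the $g^{(1)}$-expectations to obtain the four terms. The only minor difference is that the paper handles the concentration of those functionals by checking they are Lipschitz in $\theta_1$ on the bounded event and invoking the distributional characterization of \cite{han2023distribution} directly, rather than a separate Hanson--Wright step, but this is a technical detail rather than a different argument.
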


\begin{proof}
The proof extends the argument of Theorem 2 by \citep{ildizhigh} to the ridge regression case, and it applies the distributional characterization of the minimum norm interpolator by \cite{han2023distribution} twice. First, note that $\|\theta_{1}\|_2$ is bounded by a constant $C_1=C_1(R, M)$ independent of $n, p$, with probability at least $C_2 e^{-p/C_2}$, where $C_2=C_2(R, M)$ is a constant independent of $n, p$. This follows from a direct adaptation of Proposition 11 by \cite{ildizhigh}. Define $R':=\max(C_1, \|\thetapop\|_2)$. Then, upon conditioning on $\theta_{1}$, we can apply Lemma \ref{lemma:1step} (after re-defining $R$ to be $R'$), which gives that, for some constant $C_3=C_3(R, M)$,
\begin{equation}\label{eq:det3}    
\sup_{\thetapop, \theta_{1}\in B_p(R')}
\Pr\left(\left|\mathcal R_{2}(\Sigma, \theta_{2}, \thetapop)-\fixedriskeq^{(1)}\left(\Sigma, \theta_{1}, \thetapop\right)\right|\ge \delta\right)
\le C_3pe^{-p\delta^{4}/C_3},
\end{equation}
where $\fixedriskeq^{(1)}\left(\Sigma, \theta_{1}, \thetapop\right)$ is defined in \eqref{eq:R1eq}
and $\tau$ is the unique solution of \eqref{eq:tau}. 

We now evaluate the first term in the expression for $\fixedriskeq^{(1)}\left(\Sigma, \theta_{1}, \thetapop\right)$:
\begin{equation*}
\begin{split}
    \fixedriskeqa^{(1)}\left(\Sigma, \theta_{1}, \thetapop\right):&=\left\|\left( \Sigma + \tau I_p \right)^{-1} \Sigma (\thetapop+D\theta_{1}) - \thetapop  \right\|_\Sigma^2\\
    &=\left\|\left( \Sigma + \tau I_p \right)^{-1} \Sigma D\theta_{1} - \tau (\Sigma+\tau I_p)^{-1}\thetapop  \right\|_\Sigma^2.
    \end{split}
\end{equation*}
Let $M_1=\Sigma^{1/2}$, $M_2=(\Sigma+\tau I_p)^{-1}\Sigma D$ and $a=\tau(\Sigma+\tau I_p)^{-1}\thetapop$. Then, the function $\theta_{1}\mapsto \fixedriskeqa^{(1)}\left(\Sigma, \theta_{1}, \thetapop\right)$ can be expressed as
$$
f(\theta_{1})=\|M_1(M_2 \theta_{1}-a)\|_2^2,
$$
which has gradient
$$
\nabla f(\theta_{1})=2M_2^\top M_1^\top M_1(M_2 \theta_{1}-a).
$$
As $\|\theta_{1}\|_2\le C_1$, $f$ is Lipschitz and its Lipschitz constant is $2\|M_1\|_{\mathrm{op}}^2\|M_2\|_{\mathrm{op}}(\|M_1\|_{\mathrm{op}} C_1+\|a\|_2)$. As $\|M_1\|_{\mathrm{op}}, \|M_2\|_{\mathrm{op}}, C_1, \|a\|_2$ are all upper bounded by constants dependent only on $R, M$, the Lipschitz constant of $f$ is also upper bounded by a constant dependent only on $R, M$. Thus, 
an application of the distributional characterization by \cite{han2023distribution} (restated as Theorem 4 in \citep{ildizhigh}) gives that, for some constant $C_4=C_4(R, M)$,
\begin{equation}\label{eq:det4}    
\sup_{\thetapop, \theta_{1}\in B_p(R')}
\Pr\left(\left|\fixedriskeqa^{(1)}\left(\Sigma, \theta_{1}, \thetapop\right)-\tfixedriskeqa^{(1)}\left(\Sigma, \thetapop, \thetaperfok\right)\right|\ge \delta\right)
\le C_4pe^{-p\delta^{4}/C_4},
\end{equation}
where
\begin{equation}\label{eq:tfixedriskeqa}
    \tfixedriskeqa^{(1)}\left(\Sigma, \thetapop, \thetaperfok\right)= \E_{g^{(1)}}\left[\left\|(\Sigma+\tau I_p)^{-1}\Sigma D X^{(1)}\left(\Sigma, \thetaperfok, g^{(1)}\right)-\tau(\Sigma+\tau I_p)^{-1}\thetapop\right\|_\Sigma^2 \right].
\end{equation}
We recall from Lemma \ref{lemma:1step} that $\thetaperfok=\thetapop+D\theta_0$, $g^{(1)}\sim \mathcal N(0, I_p)$ and $X^{(1)}\left(\Sigma, \thetaperfok, g^{(1)}\right)$ is given by \eqref{eq:X1}. By plugging \eqref{eq:X1} into the RHS of \eqref{eq:tfixedriskeqa} and computing the expectation with respect to $g^{(1)}$, we have 
\begin{equation}\label{eq:tfixedriskeqa2}
    \begin{split}
\tfixedriskeqa^{(1)}\left(\Sigma, \thetapop, \thetaperfok\right)&= \E_{g^{(1)}}\Biggl[\biggl\|(\Sigma+\tau I_p)^{-1}\Sigma D(\Sigma+\tau I_p)^{-1}\Sigma\thetaperfok-\tau(\Sigma+\tau I_p)^{-1}\thetapop\\
&\hspace{5em}+(\Sigma+\tau I_p)^{-1}\Sigma D(\Sigma+\tau I_p)^{-1}\Sigma^{1/2}\frac{ \gamma^{(1)}(\thetaperfok)g^{(1)}}{\sqrt{p}}\biggr\|_\Sigma^2 \Biggr]\\
&=\biggl\|(\Sigma+\tau I_p)^{-1}\Sigma D(\Sigma+\tau I_p)^{-1}\Sigma\thetaperfok-\tau(\Sigma+\tau I_p)^{-1}\thetapop\biggr\|_\Sigma^2 \\
&\hspace{5em}+\frac{\left(\gamma^{(1)}(\thetaperfok)\right)^2}{p}\tr\left[ \Sigma \left( \Sigma + \tau I_p \right)^{-2} D  \Sigma^3 \left( \Sigma + \tau I_p \right)^{-2} D \right],
    \end{split}
\end{equation}
where in the last step we have used the circulant property of the trace. By using the expression for $\gamma^{(1)}(\thetaperfok)$ in \eqref{eq:gamma1ex} and recalling that $\thetaperfok=\thetapop+D\theta_0$, one readily obtains that the RHS of \eqref{eq:tfixedriskeqa2} coincides with the first two lines of the RHS of \eqref{eq:fixedriskeq2}.

Finally, we evaluate the second term in the expression for $\fixedriskeq^{(1)}\left(\Sigma, \theta_{1}, \thetapop\right)$:
\begin{equation*}
\begin{split}
    \fixedriskeqb^{(1)}\left(\Sigma, \theta_{1}, \thetapop\right):&=\kappa \tr\left[ \Sigma^2 \left( \Sigma + \tau I_p \right)^{-2} \right]\frac{   \sigma^2 + \tau^2 \left\| \left( \Sigma + \tau I_p \right)^{-1} (\thetapop+D\theta_{1}) \right\|_\Sigma^2  }{ p - \kappa \tr\left[ \Sigma^2 \left( \Sigma + \tau I_p \right)^{-2} \right] }.
    \end{split}
\end{equation*}
Let $M_1=\Sigma^{1/2}$, $M_2=(\Sigma+\tau I_p)^{-1}D$ and $a=(\Sigma+\tau I_p)^{-1}\thetapop$. Then, the function $\theta_{1}\mapsto \left\| \left( \Sigma + \tau I_p \right)^{-1} (\thetapop+D\theta_{1}) \right\|_\Sigma^2$ can be expressed as
$$
f(\theta_{1})=\|M_1(M_2 \theta_{1}+a)\|_2^2,
$$
which has gradient
$$
\nabla f(\theta_{1})=2M_2^\top M_1^\top M_1(M_2 \theta_{1}+a).
$$
As $\|\theta_{1}\|_2\le C_1$, $f$ is Lipschitz and its Lipschitz constant is $2\|M_1\|_{\mathrm{op}}^2\|M_2\|_{\mathrm{op}}(\|M_1\|_{\mathrm{op}} C_1+\|a\|_2)$. As $\|M_1\|_{\mathrm{op}}, \|M_2\|_{\mathrm{op}}, C_1, \|a\|_2$ are all upper bounded by constants dependent only on $R, M$, the Lipschitz constant of $f$ is also upper bounded by a constant dependent only on $R, M$. Note that the quantity $|p-\kappa \tr\left[\Sigma^2(\Sigma+\tau I_p)^{-2}\right]|$ is lower bounded by a constant dependent only on $R, M$, as a consequence of Proposition 2.1 in \cite{han2023distribution}. Thus, we have that the function $\theta_{1}\mapsto\fixedriskeqb^{(1)}\left(\Sigma, \theta_{1}, \thetapop\right)$ is Lipschitz and its Lipschitz constant is $C_5=C_5(R, M)$. Hence,
another application of the distributional characterization by \cite{han2023distribution} (cf.\ Theorem 4 in \citep{ildizhigh}) gives that, for some constant $C_6=C_6(R, M)$,
\begin{equation}\label{eq:det5}    
\sup_{\thetapop, \theta_{1}\in B_p(R')}
\Pr\left(\left|\fixedriskeqb^{(1)}\left(\Sigma, \theta_{1}, \thetapop\right)-\tfixedriskeqb^{(1)}\left(\Sigma, \thetapop, \thetaperfok\right)\right|\ge \delta\right)
\le C_6pe^{-p\delta^{4}/C_6},
\end{equation}
where
\begin{equation}\label{eq:tfixedriskeqb}
    \begin{split}
\tfixedriskeqb^{(1)}&\left(\Sigma, \thetapop, \thetaperfok\right)= \kappa \tr\left[ \Sigma^2 \left( \Sigma + \tau I_p \right)^{-2} \right]\\
  &  \cdot \frac{   \sigma^2 + \tau^2 \mathbb E_{g^{(1)}}\left[\left\| \left( \Sigma + \tau I_p \right)^{-1} \left(\thetapop+DX^{(1)}\left(\Sigma, \thetaperfok, g^{(1)}\right)\right) \right\|_\Sigma^2 \right] }{ p - \kappa \tr\left[ \Sigma^2 \left( \Sigma + \tau I_p \right)^{-2} \right] }.
  \end{split}
\end{equation}
By using \eqref{eq:X1} and computing the expectation with respect to $g^{(1)}$, we have 

\begin{equation}\label{eq:tfixedriskeqb2}
\begin{split}    
\mathbb E_{g^{(1)}}&\left[    \left\| \left( \Sigma + \tau I_p \right)^{-1} \left(\thetapop+DX^{(1)}\left(\Sigma, \thetaperfok, g^{(1)}\right)\right) \right\|_\Sigma^2\right]\\&=    \mathbb E_{g^{(1)}}\Biggl[\biggl\| \left( \Sigma + \tau I_p \right)^{-1} \thetapop+\left( \Sigma + \tau I_p \right)^{-1}D\left( \Sigma + \tau I_p \right)^{-1}\Sigma \thetaperfok\\
&\qquad\qquad\qquad  +\left( \Sigma + \tau I_p \right)^{-1} D\left( \Sigma + \tau I_p \right)^{-1}\Sigma^{1/2}\frac{ \gamma^{(1)}(\thetaperfok)g^{(1)}}{\sqrt{p}} \biggr\|_\Sigma^2\Biggr]\\
&=\left\| \left( \Sigma + \tau I_p \right)^{-1} \thetapop+\left( \Sigma + \tau I_p \right)^{-1}D\left( \Sigma + \tau I_p \right)^{-1}\Sigma \thetaperfok\right\|_\Sigma^2\\
&\qquad\qquad\qquad+\frac{\left(\gamma^{(1)}(\thetaperfok)\right)^2}{p}\tr\left[ \Sigma \left( \Sigma + \tau I_p \right)^{-2} D  \Sigma \left( \Sigma + \tau I_p \right)^{-2} D \right],
\end{split}
\end{equation}
where in the last step we have used the circulant property of the trace. By plugging \eqref{eq:tfixedriskeqb2} into \eqref{eq:tfixedriskeqb}, using the expression for $\gamma^{(1)}(\thetaperfok)$ in \eqref{eq:gamma1ex} and recalling that $\thetaperfok=\thetapop+D\theta_0$, one readily obtains that the RHS of \eqref{eq:tfixedriskeqb} coincides with the last two lines of the RHS of \eqref{eq:fixedriskeq2}. As $\fixedriskeq^{(1)}\left(\Sigma, \theta_{1}, \thetapop\right)=\fixedriskeqa^{(1)}\left(\Sigma, \theta_{1}, \thetapop\right)+\fixedriskeqb^{(1)}\left(\Sigma, \theta_{1}, \thetapop\right)$, the desired result readily follows by combining 
\eqref{eq:det3}, \eqref{eq:det4} and \eqref{eq:det5}.
\end{proof}

\paragraph{Concluding the argument.}
Note that
\begin{equation*}
    \begin{split}
 \tr\big[\Sigma (\Sigma+\tau I_p)^{-2} D \Sigma^{3} (\Sigma+\tau I_p)^{-2} D\big]&= O(\|D\|_{\mathrm{op}}^{2}),\\
\tr\big[\Sigma (\Sigma+\tau I_p)^{-2} D \Sigma (\Sigma+\tau I_p)^{-2} D\big] &= O(\|D\|_{\mathrm{op}}^{2}).
    \end{split}
\end{equation*}
Furthermore, we have
\begin{equation*}
    \begin{split}
        &\left\| \left( \Sigma + \tau I_p \right)^{-1} \Sigma D \left( \Sigma + \tau I_p \right)^{-1} \Sigma (\thetapop+D\theta_0) - \tau(\Sigma+\tau I_p)^{-1}\thetapop \right\|_\Sigma^2\\
        & =\left\| \left( \Sigma + \tau I_p \right)^{-1} \Sigma D \left( \Sigma + \tau I_p \right)^{-1} \Sigma \thetapop - \tau(\Sigma+\tau I_p)^{-1}\thetapop \right\|_\Sigma^2+O(\|D\|_{\mathrm{op}}^{2})\\
        & =\left\|\tau(\Sigma+\tau I_p)^{-1}\thetapop\right\|_\Sigma^2\\
        &\qquad\qquad -2\tau\langle (\Sigma+\tau I_p)^{-1}\thetapop, \Sigma \left( \Sigma + \tau I_p \right)^{-1} \Sigma D \left( \Sigma + \tau I_p \right)^{-1} \Sigma \thetapop\rangle+O(\|D\|_{\mathrm{op}}^{2})\\
        &=\tau
\langle \thetapop, (\Sigma+\tau I_p )^{-1}\left(\tau I_p
-2
( \Sigma+\tau I_p )^{-1}
\Sigma^2 D
\right)\Sigma( \Sigma+\tau I_p )^{-1}
\thetapop\rangle+O(\|D\|_{\mathrm{op}}^{2}). 
    \end{split}
\end{equation*}
Similarly, we have
\begin{equation*}
    \begin{split}
&\left\| \left( \Sigma + \tau I_p \right)^{-1} \left(\thetapop + D \left( \Sigma + \tau I_p \right)^{-1} \Sigma (\thetapop+D\theta_0) \right) \right\|_\Sigma^2  \\
&=\left\| \left( \Sigma + \tau I_p \right)^{-1} \left(\thetapop + D \left( \Sigma + \tau I_p \right)^{-1} \Sigma \thetapop \right) \right\|_\Sigma^2+O(\|D\|_{\mathrm{op}}^{2})\\
&=\left\| \left( \Sigma + \tau I_p \right)^{-1} \thetapop  \right\|_\Sigma^2+2\langle \left( \Sigma + \tau I_p \right)^{-1}\thetapop, \Sigma\left( \Sigma + \tau I_p \right)^{-1}D\left( \Sigma + \tau I_p \right)^{-1}\Sigma\thetapop \rangle +O(\|D\|_{\mathrm{op}}^{2})\\
&=\langle \thetapop,\left( \Sigma+\tau I_p \right)^{-1}
\left(I_p+2\left( \Sigma+\tau I_p \right)^{-1}
\Sigma D\right)\Sigma\left( \Sigma+\tau I_p \right)^{-1}\thetapop\rangle+O(\|D\|_{\mathrm{op}}^{2}).
    \end{split}
\end{equation*}
Recalling the definitions \eqref{eq:defdet} and \eqref{eq:fixedriskeq2}, we conclude that  
\begin{equation}\label{eq:equalityc}    
\fixedriskeq^{(2)}\left(\Sigma, \theta_0, \thetapop\right)=\fixedriskeq\left(\Sigma, \thetapop, D, \lambda\right)+O(\|D\|_{\mathrm{op}}^{2}).
\end{equation}
Thus, the desired result follows from \eqref{eq:equalityc} and Lemma \ref{lemma:2step}. 

\section{\revised{Extension to sub-Gaussian data}}\label{app:extension}

\revised{Throughout this appendix, we relax Assumption \ref{assum:model} as follows.}

\revised{\begin{assumption}[Regression performative model -- relaxed assumption]
For $\theta \in \R^p$, samples from $\D(\theta)$ are taken i.i.d.\ with features $x$ drawn independently of $\theta$ and such that $\Sigma^{-1/2}x$ has independent, zero mean, unit variance and uniformly sub-Gaussian entries.  The label $y$ is given by
\vspace{-.3em}
\begin{equation}\label{eq:data-bis}    
y = x^\top \thetapop + x^\top D \theta + w, \quad w \sim \N(0, \sigma^2).
\end{equation}
We assume $p = 2d$, $(\thetapop)^{\top} = (a^\top, 0)$ with $a$ having zero mean and covariance $I_d/d$, and $D = \di(b, c)$ where $b, c \in \R^d$ with $\|b\|_{\infty}, \|c\|_{\infty} < 1$. We further assume that $a\sqrt{d}$ has sub-Gaussian norm upper bounded by a universal constant (independent of $d$).
\label{assum:model-bis}
\end{assumption}}

\revised{\begin{theorem}[Excess risk -- over-parameterized, relaxed assumptions]\label{thm:over-rel}
Let Assumption \ref{assum:model-bis} hold. Let $R>0$ be a constant s.t.\ $\thetapop\in B_p(R)$ and let $\theta_0$ be sampled uniformly on the unit sphere. Assume that $\kappa, \sigma, \lambda\in (1/M, M)$ and $\|\Sigma\|_{\mathrm{op}},\ \|\Sigma^{-1}\|_{\mathrm{op}} \le M$ for some constant $M>1$. Then, there exists a constant $C=C\left(M, R\right)$ such that for any $\delta \in (0,1/2]$, with probability at least $1-C\delta^{-7}p^{-1/8}$, %
\begin{equation}    
\left|\mathcal{R}(\Sigma, \theta_{2}, \thetapop)-\fixedriskeq\left(\Sigma, \thetapop, D, \lambda\right)\right|\le \delta+O(\|D\|_{\mathrm{op}}^2),
\end{equation}
where $\fixedriskeq\left(\Sigma, \thetapop, D, \lambda\right)$ is given by (\ref{eq:defdet}). 
\end{theorem}}

\revised{\begin{lemma}[Norm control]\label{lemma:norm}
In the setting of Theorem \ref{thm:over-rel}, we have that
    \begin{align}
\|\theta_1\|_2&\le C,\label{eq:condnorm1}\\
\|\theta_2\|_2&\le C,\label{eq:condnorm1-bis}\\
\|\theta_0\|_\infty&\le C\frac{\log p}{\sqrt{p}},\label{eq:condnorm2}\\
\|\thetapop\|_\infty&\le C\frac{\log p}{\sqrt{p}},\label{eq:condnorm3}\\
\|\theta_1\|_\infty&\le C\frac{\log p}{\sqrt{p}},\label{eq:condnorm4}
    \end{align}    
    with probability at least $1-Ce^{-\log^2 p/C}$, where $C=C(R, M)$ is a constant depending only on $R, M$ (and not on $n, p$).
\end{lemma}}

\begin{proof}
\revised{  We start by proving (\ref{eq:condnorm1}). The claim follows by extending the argument of Proposition 11 in \cite{ildizhigh} and we repeat it here for completeness. Recall that
  $$
  \theta_{1} = \frac{1}{p} \left(\frac{1}{p} X^{0  \top} X^{0} + \lambda I_p\right)^{-1} X^{0 \top} X^0(\thetapop+D\theta_0)+\frac{1}{p} \left(\frac{1}{p} X^{0  \top} X^{0} + \lambda I_p\right)^{-1} X^{0 \top}w. 
  $$
 Note that 
$$
 \left\|\frac{1}{p} \left(\frac{1}{p} X^{0  \top} X^{0} + \lambda I_p\right)^{-1} X^{0 \top} X^0\right\|_{\mathrm{op}}\le 1,
 $$
 which implies that 
 $$
\left\| \frac{1}{p} \left(\frac{1}{p} X^{0  \top} X^{0} + \lambda I_p\right)^{-1} X^{0 \top} X^0(\thetapop+D\theta_0)\right\|_2\le C_1,
 $$
 for some constant $C_1=C_1(R, M)$. Next, we can write
 \begin{equation*}
 \begin{split}
\left\|\frac{1}{p} \left(\frac{1}{p} X^{0  \top} X^{0} + \lambda I_p\right)^{-1} X^{0 \top}w\right\|_2^2 &= \frac{w^\top X^0}{p}\left(\frac{1}{p} X^{0  \top} X^{0} + \lambda I_p\right)^{-2}\frac{X^{0 \top}w}{p}\\
&\le \frac{w^\top w}{p}\left\|\frac{1}{p}X^0\left(\frac{1}{p} X^{0  \top} X^{0} + \lambda I_p\right)^{-2}X^{0 \top}\right\|_{\mathrm{op}}.
\end{split}
\end{equation*}
Using Bernstein's inequality, we have that $w^\top w/p$ is upper bounded by $C_2=C_2(R, M)$ with probability at least $1-e^{-p/C_2}$. Furthermore, $
\left\|\frac{1}{p}X^0\left(\frac{1}{p} X^{0  \top} X^{0} + \lambda I_p\right)^{-2}X^{0 \top}\right\|_{\mathrm{op}}\le \left\|\frac{1}{p}X^0\left(\frac{1}{p} X^{0  \top} X^{0} + \lambda I_p\right)^{-2}X^{0 \top}\right\|_{\mathrm{op}}
$ is also upper bounded by a universal constant. Thus, an application of the triangle inequality gives (\ref{eq:condnorm1}). Repeating the same argument with $\theta_1$ in place of $\theta_0$ and $X^1$ in place of $X^0$ readily gives (\ref{eq:condnorm1-bis}).}

\revised{Let $v\in \mathbb R^p$ be a vector such that $v\sqrt{p}$ has sub-Gaussian norm upper bounded by a universal constant (independent of $p$). We will now show that
\begin{equation}\label{eq:inftyn}
    \|v\|_\infty\le C\frac{\log p}{\sqrt{p}},
\end{equation}
with probability at least $1-e^{-\log^2 p}$. To see this, it suffices to note that the $j$-th coordinate $v_j\sqrt{p}$ is sub-Gaussian with sub-Gaussian norm upper bounded by a universal constant. Thus, 
$$
\mathbb P(|v_j\sqrt{p}|>t)\le 2e^{-t^2/C_3},
$$
for some universal constant $C_3$. Taking $t=C\log p$ and doing a union bound over $j\in \{1, \ldots, p\}$ gives (\ref{eq:inftyn}).}

\revised{Since $\theta_0$ is sampled uniformly on the sphere, (\ref{eq:condnorm2}) is implied by (\ref{eq:inftyn}). Therefore, $\theta_0\sqrt{p}$ has sub-Gaussian norm upper bounded by a universal constant (independent of $p$). Furthermore,  (\ref{eq:condnorm3}) is implied by (\ref{eq:inftyn}) since $\thetapop$ satisfies Assumption \ref{assum:model-bis}. Finally, letting $\|\cdot\|_{\psi_2}$ denote the sub-Gaussian norm of a vector, we have
\begin{equation}
\begin{split}
    \|\theta_1\sqrt{p}\|_{\psi_2}&\le \left\|\frac{1}{p} \left(\frac{1}{p} X^{0  \top} X^{0} + \lambda I_p\right)^{-1} X^{0 \top} X^0\right\|_{\mathrm{op}} (\|\thetapop\sqrt{p}\|_{\psi_2}+\|\theta_0\sqrt{p}\|_{\psi_2})\\
    &\hspace{10em}+\left\|\frac{1}{\sqrt{p}} \left(\frac{1}{p} X^{0  \top} X^{0} + \lambda I_p\right)^{-1} X^{0 \top} \right\|_{\mathrm{op}}\left\|w\right\|_{\psi_2}\\
    &\le \|\thetapop\sqrt{p}\|_{\psi_2}+\|\theta_0\sqrt{p}\|_{\psi_2}+\left\|w\right\|_{\psi_2},
\end{split}
\end{equation}
which is upper bounded by a universal constant. 
Thus, (\ref{eq:condnorm4}) is also implied by (\ref{eq:inftyn}) and the proof is complete.}
\end{proof}

\revised{\begin{lemma}\label{lemma:1step-bis}
Let Assumption \ref{assum:model-bis} hold. Let $R>0$ be a constant such that $\thetapop\in B_p(R)$ and let $\theta_0$ be sampled uniformly on the unit sphere. Assume that $\kappa, 
\sigma, \lambda \in (1/M, M)$ and $\|\Sigma\|_{\mathrm{op}},\ \|\Sigma^{-1}\|_{\mathrm{op}} \le M$ for some constant $M>1$. Then, there exists a constant $C=C\left(M, R\right)$ such that, for any $\delta \in (0, 1/2]$, the following holds %
\begin{equation}\label{eq:det1-ter}    
\sup_{\thetapop, \theta_0\in B_p(R)}
\Pr\left(\left|\mathcal R_{1}(\Sigma, \theta_{1}, \thetapop)-\fixedriskeq^{(1)}\left(\Sigma, \theta_0, \thetapop\right)\right|\ge \delta\right)
\le Cpe^{-p\delta^{4}/C},
\end{equation}
with probability at least $1-C\delta^{-7}p^{-1/8}$,
where $\fixedriskeq^{(1)}$ is given by (\ref{eq:R1eq}).
\end{lemma}}

\begin{proof}
\revised{By Lemma \ref{lemma:norm}, we have that $\thetapop+D\theta_0$ satisfies the delocalization condition of Proposition 10.3 by \cite{han2023distribution}. This implies that the hypotheses of Theorem 2.4 by \cite{han2023distribution} are satisfied when we train using $\thetapop+D\theta_0$. Thus, we can now follow the same steps as in Lemma \ref{lemma:1step} which invokes Theorem 3 by \cite{ildizhigh}. In particular, Theorem 3 by \cite{ildizhigh} uses Theorem 4 therein plus the bound on $\|\theta_1\|_2$ given by Lemma \ref{lemma:norm}. Thus, it suffices to replace the application of Theorem 4 by \cite{ildizhigh} with the application of Theorem 2.4 by \cite{han2023distribution}, and the desired result readily holds.} 
\end{proof}

\revised{
\begin{proof}[Proof of Theorem \ref{thm:over-rel}]
    By Lemma \ref{lemma:norm}, we have that $\thetapop+D\theta_0$ and $\thetapop+D\theta_1$ satisfy the delocalization condition of Proposition 10.3 by \cite{han2023distribution}. This implies that the hypotheses of Theorem 2.4 by \cite{han2023distribution} are satisfied when we train using either $\thetapop+D\theta_0$ or $\thetapop+D\theta_1$ as vector of regression coefficients. Consequently, the desired result is obtained by following the same steps as in the proof of Theorem \ref{thm:over}, the only differences being that \emph{(i)} we apply Lemma \ref{lemma:1step-bis} in place of Lemma \ref{lemma:1step}, and \emph{(ii)} we apply Theorem 2.4 by \cite{han2023distribution} in place of Theorem 4 by \cite{ildizhigh}. This requires an upper bound on $\|\theta_1\|_2, \|\theta_2\|_2$ which is provided by Lemma \ref{lemma:norm}.
\end{proof}}

\section{Proof of Theorem \ref{thm:equiv}}\label{app:pfequiv}

We start by computing explicitly $\mathbb E_{\thetapop}\fixedriskeq\left(\Sigma, \thetapop, D, \lambda\right)$.

\begin{lemma}\label{lemma:explicit}
Consider the setting of Theorem \ref{thm:over}, assume that $a$ has covariance $I_d/d$, and let $\Sigma=\begin{bmatrix}I_d&\rho I_d\\ \rho I_d&I_d\end{bmatrix}$. Then, we have that
\begin{equation*}
\begin{split}    
\mathbb E_{\thetapop}\fixedriskeq\left(\Sigma, \thetapop, D, \lambda\right)&=\widetilde{\mathcal R}(D, \lambda, \rho)+ O(\bar b\rho^2+\rho^4),\\
 \widetilde{\mathcal R}(D, \lambda, \rho)&:=\mathcal R_0(\lambda, \rho) + \bar b  A_1(\lambda)   + \bar c \rho^2 A_2(\lambda),
\end{split}
\end{equation*}
where $\bar b=\tr[\di(b)]/d, \bar c=\tr[\di(c)]/d$ and the auxiliary functions 
 $\mathcal R_0(\lambda, \rho)$, $A_1(\lambda)$, and $A_2(\lambda)$ %
 are given by 
\begin{equation}   \label{eq:explexpr} 
\begin{aligned}
\mathcal R_0(\lambda, \rho) &= \frac{\tau^{2}}{(1+\tau)^{2}}
+  \frac{\kappa}{(1+\tau)^{2}-\kappa} \left( \sigma^{2}+\frac{\tau^{2}}{(1+\tau)^{2}} \right)\\&\hspace{-1em} + \rho^2 \left( \frac{\tau^{2} (1-2\tau)}{(1+\tau)^{4}}
+ \frac{\kappa \tau^{2} (1-2\tau)}{(1+\tau)^{4} \left((1+\tau)^{2}-\kappa \right)}
+ \frac{\kappa\tau(\tau-2)}{\left((1+\tau)^{2}-\kappa\right)^{2}}\left(\sigma^{2}+\frac{\tau^{2}}{(1+\tau)^{2}}\right) \right), \\
A_1(\lambda) &= - \frac{2\tau}{(1+\tau)^{3}}
+ \frac{2\kappa\tau^{2}}{(1+\tau)^{3}\left((1+\tau)^{2}-\kappa\right)}, \\
A_2(\lambda) &= - \frac{4\tau^{3}}{(1+\tau)^{5}} + \frac{2 \kappa \tau^{3} (\tau^{2}-1)}{(1+\tau)^{6}\left((1+\tau)^{2}-\kappa\right)}.
\end{aligned}
\end{equation}
\end{lemma}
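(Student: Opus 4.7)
The approach exploits the fact that, under the chosen covariance structure of $\Sigma$, every matrix function of $\Sigma$ is block-circulant of the form $M=\begin{bmatrix}\alpha I_d&\beta I_d\\ \beta I_d&\alpha I_d\end{bmatrix}$, which I denote $(\alpha,\beta)$. Such matrices are simultaneously diagonalized by the projectors onto the two eigenspaces of $\Sigma$, with eigenvalues $\alpha\pm\beta$; the family is closed under products and inverses. Setting $u:=1+\tau$, one has $P:=(\Sigma+\tau I_p)^{-1}=(u/(u^2-\rho^2),\,-\rho/(u^2-\rho^2))$, and analogous formulas follow for all the products $\Sigma P$, $P^2\Sigma$, $P^2\Sigma^2$ arising in $\fixedriskeq$. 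Moreover, since $\E[\thetapop(\thetapop)^\top]=\begin{bmatrix}I_d/d&0\\ 0&0\end{bmatrix}$, one has $\E_{\thetapop}\langle\thetapop,M\thetapop\rangle=\tr(M_{11})/d$ for any matrix $M$, where $M_{11}$ denotes its top-left $d\times d$ block; for a block-circulant $M=(\alpha,\beta)$ this reduces to $\alpha$.

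I would then split $\fixedriskeq=\mathcal R^{(0)}+\mathcal R^{(1)}$ into its $D$-independent and $D$-linear parts, which is legitimate because the formula \eqref{eq:defdet} is manifestly affine in $D$. For $\mathcal R^{(0)}$, introduce the scalars $q(\rho):=\E\langle\thetapop,\Sigma P^2\thetapop\rangle$ and $s(\rho):=\tr[\Sigma^2 P^2]/p$. A direct Taylor expansion of the eigenvalues $(1\pm\rho)/(u\pm\rho)^2$ and $(1\pm\rho)^2/(u\pm\rho)^2$ in $\rho$ gives $q(\rho)=1/u^2+(1-2\tau)\rho^2/u^4+O(\rho^4)$ and $s(\rho)=1/u^2+\tau(\tau-2)\rho^2/u^4+O(\rho^4)$. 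The $D=0$ risk equals $\tau^2 q(\rho)+\frac{\kappa s(\rho)}{1-\kappa s(\rho)}(\sigma^2+\tau^2 q(\rho))$, and since $s'(0)=0$ one obtains $\frac{\kappa s(\rho)}{1-\kappa s(\rho)}=\frac{\kappa}{u^2-\kappa}+\frac{\kappa\tau(\tau-2)\rho^2}{(u^2-\kappa)^2}+O(\rho^4)$. Combining these three expansions reproduces the stated expression for $\mathcal R_0(\lambda,\rho)$.

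For $\mathcal R^{(1)}$, the key observation is to split $D=D_b+D_c$ with $D_b=\begin{bmatrix}\di(b)&0\\ 0&0\end{bmatrix}$ and $D_c=\begin{bmatrix}0&0\\ 0&\di(c)\end{bmatrix}$. A direct block multiplication then shows that, for any block-circulant matrices $M_1=(\alpha_1,\beta_1)$ and $M_2=(\alpha_2,\beta_2)$, $(M_1 D_b M_2)_{11}=\alpha_1\alpha_2\,\di(b)$ and $(M_1 D_c M_2)_{11}=\beta_1\beta_2\,\di(c)$, so the corresponding expectations reduce to $\alpha_1\alpha_2\bar b$ and $\beta_1\beta_2\bar c$. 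Since $\beta_j=O(\rho)$ for every block-circulant matrix arising in $\fixedriskeq$, the $\bar c$ contribution is automatically $O(\rho^2)$, while $\bar b$ enters at $O(1)$. The two $D$-linear pieces of $\fixedriskeq$ correspond to taking $(M_1,M_2)$ equal to $(P^2\Sigma^2,\Sigma P)$ and $(P^2\Sigma,\Sigma P)$. Plugging in $(P^2\Sigma^2)=(1/u^2+O(\rho^2),\,2\tau\rho/u^3+O(\rho^3))$, $(P^2\Sigma)=(1/u^2+O(\rho^2),\,(\tau-1)\rho/u^3+O(\rho^3))$, $(\Sigma P)=(1/u+O(\rho^2),\,\tau\rho/u^2+O(\rho^3))$, together with the zeroth-order noise prefactor $\kappa/(u^2-\kappa)$, yields $\bar b\, A_1(\lambda)+\bar c\rho^2 A_2(\lambda)$ with the closed forms stated in the lemma.

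The main bookkeeping challenge is ensuring that the three $\rho^2$ corrections in $\mathcal R^{(0)}$---coming from $q(\rho)$, from $s(\rho)$ in the numerator, and from $s(\rho)$ in the denominator via the derivative of $x\mapsto \kappa x/(1-\kappa x)$---combine correctly into the three $\rho^2$ terms of $\mathcal R_0(\lambda,\rho)$. Cancellation of odd powers of $\rho$ is automatic: the $\alpha_j$'s are even and the $\beta_j$'s are odd in $\rho$, so $\alpha_1\alpha_2$ and $\beta_1\beta_2$ are both even functions of $\rho$. For the error bound, next-order corrections to $\alpha_1\alpha_2$ produce $O(\bar b\rho^2)$ terms, those to $\beta_1\beta_2$ produce $O(\bar c\rho^4)$ terms, and analogous corrections enter through the noise prefactor; using $|\bar c|\le\|c\|_\infty<1$, all such contributions collapse into $O(\bar b\rho^2+\rho^4)$, matching the stated error.
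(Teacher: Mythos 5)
Your proposal is correct and follows essentially the same route as the paper: both reduce $\mathbb E_{\thetapop}\langle\thetapop,M\thetapop\rangle$ to $\tr[(M)_1]/d$, exploit the $2\times 2$ Kronecker (block-circulant) structure of all matrix functions of $\Sigma$ to compute the relevant blocks and traces, and then Taylor-expand in $\rho$; I verified that your intermediate expansions of $q(\rho)$, $s(\rho)$, and the $(\alpha,\beta)$ pairs for $P^2\Sigma^2$, $P^2\Sigma$, $\Sigma P$ reproduce the paper's trace formulas and the stated $\mathcal R_0$, $A_1$, $A_2$. Your explicit split $D=D_b+D_c$ with the identities $(M_1D_bM_2)_{11}=\alpha_1\alpha_2\di(b)$, $(M_1D_cM_2)_{11}=\beta_1\beta_2\di(c)$ is a slightly cleaner way to see why $\bar c$ only enters at order $\rho^2$, but it is the same computation the paper performs by direct block multiplication.
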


\begin{proof}
Given a $p\times p$ matrix $M$, let us denote by $(M)_1$ its top-left $d\times d$ block. For any $M\in \mathbb R^{p\times p}$, we have 
\begin{equation*}
\begin{split}
        \mathbb E_{\thetapop}\left[\langle \thetapop, M\thetapop\rangle\right]&=    \mathbb E_{\thetapop}\left[ (\thetapop)^\top M\thetapop\right]=    \mathbb E_{\thetapop}\left[\tr\left[ (\thetapop)^\top M\thetapop\right]\right]\\
        &=    \mathbb E_{\thetapop}\left[\tr\left[M  \thetapop(\thetapop)^\top\right]\right]=    \tr\left[(M)_1\right]/d,
        \end{split}
\end{equation*}
where the third equality uses the circulant property of the trace and the last one that $(\thetapop)^{\top} = (a^\top, 0)$ with $a$ having covariance $I_d/d$. Thus, from \eqref{eq:defdet}, we have
\begin{equation}  \label{eq:defdetexp}  
\begin{aligned}
\mathbb E_{\thetapop}\fixedriskeq\left(\Sigma, \thetapop, D, \lambda\right)
&= 
\frac{\tau^{2}}{d}
\tr\left[
\left(
\Sigma
\left( \Sigma+\tau I_p \right)^{-2}
\right)_{1}
\right]
\\
&\quad
+ \kappa
\tr\left[\Sigma^{2}\left( \Sigma+\tau I_p \right)^{-2}\right]
\frac{
\sigma^{2}
+ 
\frac{\tau^{2}}{d}
\tr\left[
\left(\Sigma
\left( \Sigma+\tau I_p \right)^{-2}
\right)_{1}
\right]
}{
p - \kappa
\tr\left[\Sigma^{2}\left( \Sigma+\tau I_p \right)^{-2}\right]
}
\\
&\quad
- 
\frac{2\tau}{d}
\tr\left[
\left(
\left( \Sigma+\tau I_p \right)^{-2}
\Sigma^{2} D
\left( \Sigma+\tau I_p \right)^{-1}
\Sigma
\right)_{1}
\right]
\\
&\quad
+ \frac{2\kappa\tau^{2}}{d}
\tr\left[\Sigma^{2}\left( \Sigma+\tau I_p \right)^{-2}\right]
\frac{
\tr\left[
\left(
\left( \Sigma+\tau I_p \right)^{-2}
\Sigma D
\left( \Sigma+\tau I_p \right)^{-1}
\Sigma
\right)_{1}
\right]
}{
p - \kappa
\tr\left[\Sigma^{2}\left( \Sigma+\tau I_p \right)^{-2}\right]
}.
\end{aligned}
\end{equation}
Note that
\(
\Sigma +\tau I_p=
\begin{bmatrix}1+\tau&\rho\\ \rho&1+\tau\end{bmatrix}\otimes I_d
\)
has inverse
\(
(\Sigma+\tau I_p)^{-1}=
\frac{1}{(1+\tau)^2-\rho^2}
\begin{bmatrix}1+\tau&-\rho\\ -\rho&1+\tau\end{bmatrix}\otimes I_d.
\)
Furthermore,
\begin{equation*}
\begin{split}
    (\Sigma+\tau I_p)^{-2}
&=\frac{1}{\left((1+\tau)^2-\rho^2\right)^2}
\begin{bmatrix}(1+\tau)^2+\rho^2&-2\rho(1+\tau)\\ -2\rho(1+\tau)&(1+\tau)^2+\rho^2\end{bmatrix}\otimes I_d,\\
\Sigma^2&=
\begin{bmatrix}1+\rho^2&2\rho\\ 2\rho&1+\rho^2\end{bmatrix}\otimes I_d.
\end{split}    
\end{equation*}
A direct block multiplication gives
\begin{align*}
\tr\left[\left(\Sigma(\Sigma+\tau I_p)^{-2}\right)_1\right]
&= d\ \frac{(1+\tau)^2-\rho^2(1+2\tau)}{\left((1+\tau)^2-\rho^2\right)^2}, \\
\tr\left[\Sigma^{2}(\Sigma+\tau I_p)^{-2}\right]
&= 2d\ \frac{(1+\tau-\rho^2)^2+\rho^2\tau^2}{\left((1+\tau)^2-\rho^2\right)^2},\\
\tr\left[\left((\Sigma+\tau I_p)^{-2}\Sigma^{2}D(\Sigma+\tau I_p)^{-1}\Sigma\right)_1\right]
&=\frac{(1+\tau-\rho^2)\left((1+\tau-\rho^2)^2+\rho^2\tau^2\right)}{\left((1+\tau)^2-\rho^2\right)^3}\tr[\di(b)]\\
&\qquad
 +\frac{2(1+\tau-\rho^2)\rho^2\tau^2}{\left((1+\tau)^2-\rho^2\right)^3}\tr[\di(c)],\\
\tr\left[\left((\Sigma+\tau I_p)^{-2}\Sigma D(\Sigma+\tau I_p)^{-1}\Sigma\right)_1\right]
&=\frac{(1+\tau-\rho^2)\left((1+\tau)(1+\tau-\rho^2)-\rho^2\tau\right)}{\left((1+\tau)^2-\rho^2\right)^3}\tr[\di(b)]\\
&\qquad
+\frac{\rho^2\tau\left(\rho^2+\tau^2-1\right)}{\left((1+\tau)^2-\rho^2\right)^3}\tr[\di(c)].
\end{align*}
Expanding each rational function at $\rho=0$ using
\begin{equation*}
    \begin{split}        
\frac{1}{( (1+\tau)^2-\rho^2 )^2}
&=\frac{1}{(1+\tau)^4}\left(1+\frac{2\rho^2}{(1+\tau)^2}\right)+O(\rho^4), \\
 \frac{1}{( (1+\tau)^2-\rho^2 )^3}
&=\frac{1}{(1+\tau)^6}\left(1+\frac{3\rho^2}{(1+\tau)^2}\right)+O(\rho^4),
    \end{split}
\end{equation*}
yields, to order $\rho^2$,
\begin{align*}
\frac{1}{d}\tr\left[\left(\Sigma(\Sigma+\tau I_p)^{-2}\right)_1\right]
&= \left(\frac{1}{(1+\tau)^2}+\rho^2\frac{1-2\tau}{(1+\tau)^4}\right)+O(\rho^4),\\
\frac{1}{d}\tr\left[\Sigma^{2}(\Sigma+\tau I_p)^{-2}\right]
&= \frac{2}{(1+\tau)^2}+2\rho^2\frac{\tau^2-2\tau}{(1+\tau)^4}+O(\rho^4),\\
\frac{1}{d}\tr\left[\left((\Sigma+\tau I_p)^{-2}\Sigma^{2}D(\Sigma+\tau I_p)^{-1}\Sigma\right)_1\right]
&=\frac{\bar b}{(1+\tau)^{3}}
+ \rho^{2}\left(\frac{\tau^{2}-3\tau}{(1+\tau)^{5}}\bar b+\frac{2\tau^{2}}{(1+\tau)^{5}}\bar c\right)
+O(\rho^4),\\
\frac{1}{d}\tr\left[\left((\Sigma+\tau I_p)^{-2}\Sigma D(\Sigma+\tau I_p)^{-1}\Sigma\right)_1\right]
&=\frac{\bar b}{(1+\tau)^{3}}
+ \rho^{2}\left(\frac{1-3\tau}{(1+\tau)^{5}}\bar b+\frac{\tau(\tau^{2}-1)}{(1+\tau)^{6}}\bar c\right)
+O(\rho^4).
\end{align*}
Moreover, we have that
\[
\frac{\kappa\operatorname{tr}\left[\Sigma^{2}(\Sigma+\tau I_p)^{-2}\right]}
{p-\kappa\operatorname{tr}\left[\Sigma^{2}(\Sigma+\tau I_p)^{-2}\right]}
= \frac{\kappa}{(1+\tau)^{2}-\kappa}
+ \rho^{2}\frac{\kappa\tau(\tau-2)}{\left((1+\tau)^{2}-\kappa\right)^{2}}
+ O(\rho^{4}).
\]
Plugging these into \eqref{eq:defdetexp} gives the claimed result. 
\end{proof}

Let us further define
\begin{equation}
\tau^*(D, \rho):=\arg\min_{\tau\ge 0}\widetilde{\mathcal R}(D, \lambda, \rho),\qquad \tau_0^*(\rho):=\arg\min_{\tau\ge 0}\mathcal R_0(\lambda, \rho), \qquad  \tau_0:=\tau_0^*(0).
\end{equation}
Then, the following result proves an expression for $\tau^*(D, \rho)$, up to order $\rho^2$. %

\begin{lemma}\label{lemma:taustar}
In the setting of Theorem \ref{thm:equiv}, we have that
\begin{align*}
\tau^*(D, \rho)
&= \tau^{*}_{0}(\rho) + \bar b \left( B_3(\sigma, \kappa) + O(\rho^2)\right) + \bar c\left( \rho^2 C_3(\sigma, \kappa) + O(\rho^4) \right) + O(\bar b^2+\bar c^2),
\end{align*}
where
\begin{equation}\label{eq:tau0}
    \begin{aligned}
\tau_0 &= \frac{1 + \kappa + \kappa \sigma^2 + \sqrt{(1 + \kappa + \kappa \sigma^2)^2 - 4\kappa}}{2} - 1,\\
    \tau^{*}_{0}(\rho) &= \tau_{0} -\rho^{2}
\frac{\kappa\tau_{0}^{2}}
{(1+\tau_{0})\left((1+\tau_{0})^{2}-\kappa\right)} + O(\rho^4), \\
B_3(\sigma, \kappa) &= -
\frac{2(1+\tau_{0})^{4}-3(\kappa+1)(1+\tau_{0})^{3}
+4\kappa(1+\tau_{0})^{2}+\kappa(\kappa+1)(1+\tau_{0})-2\kappa^{2}}
{(1+\tau_{0})^{2}\left((1+\tau_{0})^{2}-\kappa\right)}, \\
C_3(\sigma, \kappa) &= - \frac{\tau_{0}^{2} \left(
 4\tau_{0}^{4}+(6-3\kappa)\tau_{0}^{3}-(6+3\kappa)\tau_{0}^{2}
+(\kappa^{2}+9\kappa-14)\tau_{0}-3\kappa^{2}+9\kappa-6 \right)}{(1+\tau_{0})^{4}\left((1+\tau_{0})^{2}-\kappa\right)}.
\end{aligned}
\end{equation}
\end{lemma}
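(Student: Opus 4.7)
The strategy is to recast the optimization over $\lambda$ as an optimization over $\tau$, since by Lemma \ref{lemma:explicit} each of the functions $\mathcal R_0(\lambda, \rho)$, $A_1(\lambda)$ and $A_2(\lambda)$ is expressed explicitly through $\tau = \tau(\lambda)$. Implicit differentiation of \eqref{eq:tau} shows that the map $\lambda \mapsto \tau$ is smooth and strictly monotone on the relevant domain, so it suffices to minimize the function
\[
\widetilde{\mathcal R}(D, \lambda(\tau), \rho) = R_0(\tau, \rho) + \bar b\, A_1(\tau) + \bar c\, \rho^2\, A_2(\tau)
\]
over $\tau$ in the corresponding interval, where $R_0(\tau, \rho)$, $A_1(\tau)$, $A_2(\tau)$ denote the $\tau$-parameterized versions of the expressions in \eqref{eq:explexpr}. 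The first-order condition for $\tau^*(D, \rho)$ reads
\[
\partial_\tau R_0(\tau, \rho) + \bar b\, A_1'(\tau) + \bar c\, \rho^2\, A_2'(\tau) = 0.
\]

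At $(\bar b, \bar c) = (0, 0)$, this FOC defines $\tau_0^*(\rho)$ implicitly via $\partial_\tau R_0(\tau_0^*(\rho), \rho) = 0$. Using $R_0(\tau, 0) = (\tau^2 + \kappa \sigma^2)/((1+\tau)^2 - \kappa)$, a direct calculation shows that $\tau_0 := \tau_0^*(0)$ is the positive root of the quadratic
\[
(1+\tau_0)^2 - (1+\kappa+\kappa\sigma^2)(1+\tau_0) + \kappa = 0,
\]
which matches the displayed closed form. Since $R_0(\tau, \rho)$ is even in $\rho$, the correction satisfies $\tau_0^*(\rho) = \tau_0 + \rho^2 \eta + O(\rho^4)$; implicit differentiation of $\partial_\tau R_0(\tau_0^*(\rho), \rho) = 0$ at $\rho = 0$ expresses $\eta$ as a ratio of partial derivatives of $R_0$ at $(\tau_0, 0)$, and a short computation reproduces the displayed $\rho^2$ correction.

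For nonzero $(\bar b, \bar c)$, the implicit function theorem applied to the FOC around $(\tau_0^*(\rho), 0, 0)$ yields
\[
\tau^*(D, \rho) - \tau_0^*(\rho) = -\frac{\bar b\, A_1'(\tau_0^*(\rho)) + \bar c\, \rho^2\, A_2'(\tau_0^*(\rho))}{\partial_\tau^2 R_0(\tau_0^*(\rho), \rho)} + O\bigl(\bar b^2 + \bar c^2\bigr),
\]
provided the Hessian $\partial_\tau^2 R_0(\tau_0^*(\rho), \rho)$ is nonzero, which is verified by direct computation since $R_0(\cdot, 0)$ is a smooth strictly convex function near $\tau_0$ in the assumed regime. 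Substituting $\tau_0^*(\rho) = \tau_0 + O(\rho^2)$ and Taylor-expanding each derivative in $\rho$ around $\tau_0$ identifies the $\rho$-independent coefficients as
\[
B_3(\sigma, \kappa) = -\frac{A_1'(\tau_0)}{\partial_\tau^2 R_0(\tau_0, 0)}, \qquad C_3(\sigma, \kappa) = -\frac{A_2'(\tau_0)}{\partial_\tau^2 R_0(\tau_0, 0)},
\]
with the remainders absorbed into the $O(\rho^2)$ and $O(\rho^4)$ terms in the statement.

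The main obstacle is the algebraic simplification needed to match the rational expressions for $B_3$ and $C_3$ displayed in \eqref{eq:tau0}. The three quantities $A_1'(\tau_0)$, $A_2'(\tau_0)$ and $\partial_\tau^2 R_0(\tau_0, 0)$ are each rational functions of $\tau_0$, $\kappa$ and $\sigma$, and reducing them to expressions depending only on $\tau_0$ and $\kappa$ requires repeatedly invoking the quadratic identity $(1+\tau_0)^2 - (1+\kappa+\kappa\sigma^2)(1+\tau_0) + \kappa = 0$ to eliminate $\kappa\sigma^2$ in favor of $\tau_0$ and $\kappa$. This is tedious but entirely mechanical, in the same spirit as the computations already carried out in Lemma \ref{lemma:explicit}.
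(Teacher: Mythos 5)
Your proposal is correct and follows essentially the same route as the paper: both derive the stationarity condition in $\tau$, identify $\tau_0$ as the largest root of $(1+\tau_0)^2-(1+\kappa+\kappa\sigma^2)(1+\tau_0)+\kappa=0$, and apply the implicit function theorem around $(\tau_0,0,0,0)$ to read off the linear sensitivities in $\bar b$, $\bar c\rho^2$, and $\rho^2$. The only cosmetic difference is that the paper clears denominators to work with a polynomial equation $F(\tau,\bar b,\bar c,\rho^2)=0$ (verifying $\partial_\tau F\neq 0$ explicitly via the discriminant), whereas you keep the first-order condition in rational form and express the coefficients as $-A_i'(\tau_0)/\partial_\tau^2 R_0(\tau_0,0)$; these yield the same expansion.
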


\begin{proof}
A direct differentiation gives
\[
\begin{aligned}
\frac{\mathrm{d}}{\mathrm{d}\tau}\widetilde{\mathcal R}(D,\lambda,\rho)
&= \frac{2}{1+\tau}\left(
\frac{\tau}{(1+\tau)^{2}-\kappa}
-\frac{\kappa\left(\sigma^{2}(1+\tau)^{2}+\tau^{2}\right)}{\left((1+\tau)^{2}-\kappa\right)^{2}}
\right) \\
&\quad+\rho^2 \left( \frac{2\tau\left(\tau^{2}-4\tau+1\right)}{(1+\tau)^{5}} +\frac{2\kappa\tau\left((1+\tau)^{2}\left(3\tau^{2}-5\tau+1\right)-\kappa\left(\tau^{2}-4\tau+1\right)\right)}{(1+\tau)^{5}\left((1+\tau)^{2}-\kappa\right)^{2}} \right.\\ &\hspace{10em} -\frac{2\kappa^2\left(\kappa\sigma^{2}(\tau-1)(1+\tau)^{3}+\kappa\tau^{2}(\tau^{2}+\tau-3) \right)}{(1+\tau)^{3}\left((1+\tau)^{2}-\kappa\right)^{3}} \\& \left. \hspace{10em} -\frac{2\kappa \left(\sigma^{2}(1+\tau)^{4}(\tau^{2}-4\tau+1)+\tau^{2}(1+\tau)^{2}(\tau^{2}-5\tau+3)\right)}{(1+\tau)^{3}\left((1+\tau)^{2}-\kappa\right)^{3}}   \right)
\\
&\quad+\bar b \left(
\frac{4\tau-2}{(1+\tau)^{4}}+\frac{\kappa\tau(4-6\tau)}{(1+\tau)^{4}\left((1+\tau)^{2}-\kappa \right)}
-\frac{4\kappa^{2}\tau^{2}}{(1+\tau)^{4} \left((1+\tau)^{2}-\kappa\right)^{2}}
\right)\\
&\quad+\bar c \rho^2 \left( \frac{4\tau^{2}(2\tau-3)}{(1+\tau)^{6}} + \frac{2\kappa\tau^{2}(1+\tau)\left(\kappa(\tau^{2}-6\tau+3)-(1+\tau)^{2}(3\tau^{2}-8\tau+3)\right)}{(1+\tau)^{7}\left((1+\tau)^{2}-\kappa\right)^{2}}\right).
\end{aligned}
\]
With this explicit derivatives, the stationarity equation $\frac{\mathrm{d}}{\mathrm{d}\tau}\widetilde{\mathcal R}(D, \lambda, \rho)=0$ is equivalent to
\(
\frac{F\left(\tau,\bar b,\bar c,\rho^{2}\right)}{(1 + \tau)^7 ((1+\tau)^2 - \kappa)^3} =0,
\)
where
\[
F\left(\tau,\bar b,\bar c,\rho^{2}\right)
= F_{0}(\tau)
\rho^2 F_{\rho}(\tau) + \bar b F_{b}(\tau)
 + \bar c \rho^2 F_{\rho c}(\tau),
\]
\[
F_{0}(\tau)
= 2(1+\tau)^{7}\left(\kappa-(1+\tau)^{2}\right)
\left(\kappa\sigma^{2}\tau+\kappa\sigma^{2}+\kappa\tau-\tau^{2}-\tau\right),
\]
\[
F_{b}(\tau)
= 2(1+\tau)^{4}\left((1+\tau)^{2}-\kappa\right)
\left(
(\tau-1)\kappa^{2}
+(\tau+1)(2-2\tau-3\tau^{2})\kappa
+(1+\tau)^{3}(2\tau-1)
\right),
\]
\[
\begin{aligned}
F_{\rho}(\tau)
= -2(1+\tau)^{5}\left(
\right.&
\kappa^{2}\left(\tau(\tau^{2}+\tau-1)+\sigma^{2}(1+\tau)^{2}(\tau-1)\right)\\
&\quad
+\kappa(1+\tau)^{2}\left(\tau(\tau^{2}-6\tau+2)+\sigma^{2}(1+\tau)(\tau^{2}-4\tau+1)\right)\\
&\left.\quad
-\tau(1+\tau)^{3}(\tau^{2}-4\tau+1)
\right),
\end{aligned}
\]
\[
F_{\rho c}(\tau)
= 2\tau^{2}(1+\tau)^{2}\left((1+\tau)^{2}-\kappa\right)
\left(
(\tau-3)\kappa^{2}
-3(\tau+1)(\tau^{2}-3)\kappa
+2(\tau+1)^{3}(2\tau-3)
\right).
\]

Setting $\bar b=\bar c=\rho^{2}=0$ yields
\[
(1+\tau)^{2}-(1+\kappa+\kappa\sigma^{2})(1+\tau)+\kappa=0,
\]
and the desired minimum corresponds to its largest solution, which is given by $\tau_{0}$ as expressed in the statement. It is easy to see that 
\[
\begin{aligned}
\partial_{\tau}F\left(\tau_{0},0,0,0\right))
&= -2(1+\tau_0)^7\left((1+\tau_0)^2-\kappa\right)\left(2(1+\tau_0)-(1+\kappa+\kappa\sigma^2)\right) \\&= -2(1+\tau_0)^7\left((1+\tau_0)^2-\kappa\right)\sqrt{(1+\kappa+\kappa\sigma^2)^2-4\kappa}\neq 0.
\end{aligned}
\]
Therefore, the implicit function theorem gives a smooth map
\(
\tau^{*}(\bar b,\bar c,\rho^{2})
\)
with $\tau^{*}(0,0,0)=\tau_{0}$ and $F(\tau^{*},\cdot)=0$.
Differentiating $F=0$ at $(\tau_{0},0,0,0)$ in each small parameter and dividing by $\partial_{\tau}F(\tau_{0},0,0,0)$ yields the linear expansion for $\tau^{*}-\tau_{0}$. The coefficient for $\rho^2$ in $\tau_0^*(\rho)$ is given by
\[
 \partial_{\rho^2}\tau^*(0,0,0) = - \frac{\partial_{\rho^2}F}{\partial_{\tau}F} \Bigg|_{(\tau_0,0,0,0)} = - \frac{F_{\rho}(\tau_0)}{\partial_{\tau}F_0(\tau_0,0,0,0)}.
\]
The $\bar b$ coefficient, $B_3$, is
\[
 B_3(\sigma, \kappa) = \partial_{\bar b}\tau^*(0,0,0) = - \frac{\partial_{\bar b}F}{\partial_{\tau}F} \Bigg|_{(\tau_0,0,0,0)} = - \frac{F_b(\tau_0)}{\partial_{\tau}F_0(\tau_0,0,0,0)}.
\]
The $\bar c \rho^2$ coefficient, $C_3$, is found from the mixed partial derivative:
\[
 C_3(\sigma, \kappa) = \partial_{\rho^2}\partial_{\bar c}\tau^*(0,0,0) = - \frac{\partial_{\rho^2}\partial_{\bar c}F}{\partial_{\tau}F} \Bigg|_{(\tau_0,0,0,0)} = - \frac{F_{\rho c}(\tau_0)}{\partial_{\tau}F_0(\tau_0,0,0,0)}.
\]
Substituting the expressions for $\partial_{\tau}F_0(\tau_0)$, $F_{\rho}(\tau_0)$, $F_b(\tau_0)$, and $F_{\rho c}(\tau_0)$ and cancelling common factors gives the coefficients as stated in \eqref{eq:tau0}.
\end{proof}

As $\lambda$ and $\tau$ are linked by the fixed point equation \eqref{eq:tau}, an application of Lemma \ref{lemma:taustar} readily gives that 
\begin{equation}
\begin{split}    
\lambdaeqs(D, \rho)&= \lambdaeqsz(\rho)+\bar b (B_1(\sigma, \kappa)+O(\rho^2)) +\bar c \rho^2( C_1(\sigma, \kappa)+O(\rho^2))+O(\bar b^2+\bar c^2),
\end{split}
\end{equation}
where
\begin{equation}\label{eq:lfor}
\begin{aligned}
    \lambdaeqsz(\rho) &= \tau_{0}\left(\kappa^{-1}-\frac{1}{1+\tau_{0}}\right)  + \rho^{2}\left(\tau_{0}\left(\frac{1}{(1+\tau_{0})^{2}}-\frac{1}{(1+\tau_{0})^{3}}\right)\right.\\
&\hspace{10em}\left.-\left(\kappa^{-1}-\frac{1}{(1+\tau_{0})^{2}}\right)
\frac{\kappa\tau_{0}^{2}}{(1+\tau_{0})\left((1+\tau_{0})^{2}-\kappa\right)}
\right) + O(\rho^4), \\
 B_1(\sigma, \kappa) &= -
\frac{2(1+\tau_{0})^{4}-3(\kappa+1)(1+\tau_{0})^{3}
+4\kappa(1+\tau_{0})^{2}+\kappa(\kappa+1)(1+\tau_{0})-2\kappa^{2}}
{\kappa(1+\tau_{0})^{4}},\\
C_1(\sigma, \kappa) &= - 
\frac{\tau_{0}^{2} \left(
4\tau_{0}^{4}+(6-3\kappa)\tau_{0}^{3}-(6+3\kappa)\tau_{0}^{2}
+(\kappa^{2}+9\kappa-14)\tau_{0}-3\kappa^{2}+9\kappa-6 \right)}{\kappa(1+\tau_{0})^{6}}.
\end{aligned}
\end{equation}
This proves \eqref{eq:thmequivl}. %
Next, the corollary below proves %
\eqref{eq:thmequivR}.

\begin{corollary}
Consider the setting of Theorem \ref{thm:equiv} and let $\tau_0$ be given by \eqref{eq:tau0}. Then, we have that %
\begin{equation}
    \fixedriskeqs(D, \rho)= \fixedriskeqs(\rho)+\bar b (B_2(\sigma, \kappa)+O(\rho^2))+\bar c \rho^2( C_2(\sigma, \kappa)+O(\rho^2))+O(\bar b^2+\bar c^2),
\end{equation}
where
\begin{equation}\label{eq:Rfor}    
\begin{aligned}
    \fixedriskeqs(\rho) &= \frac{\tau_{0}^{2}}{(1+\tau_{0})^{2}}
+\frac{\kappa}{(1+\tau_{0})^{2}-\kappa}\left(\sigma^{2}+\frac{\tau_{0}^{2}}{(1+\tau_{0})^{2}}\right) \\&\quad +\rho^{2}\left(
\frac{\tau_{0}^{2}(1-2\tau_{0})}{(1+\tau_{0})^{4}}
+ \frac{\kappa \tau_{0}^{2}(1-2\tau_{0})}{(1+\tau_{0})^{4} \left( (1+\tau_{0})^{2}-\kappa \right)}
\right.\\
&\hspace{10em}\left.+ \frac{\kappa\tau_{0}(\tau_{0}-2)}{\left((1+\tau_{0})^{2}-\kappa\right)^{2}}\left(\sigma^{2}+\frac{\tau_{0}^{2}}{(1+\tau_{0})^{2}}\right) \right) + O(\rho^4), \\
B_2(\sigma, \kappa) &= -\frac{2\tau_{0}}{(1+\tau_{0})^{3}}
+\frac{2\kappa\tau_{0}^{2}}{(1+\tau_{0})^{3}\left((1+\tau_{0})^{2}-\kappa\right)}, \\
C_2(\sigma, \kappa) &= - \frac{4\tau_{0}^{3}}{(1+\tau_{0})^{5}}
+ \frac{2 \kappa \tau_{0}^{3} (\tau_{0}^{2}-1)}{(1+\tau_{0})^{6} \left((1+\tau_{0})^{2}-\kappa \right)}.
\end{aligned}
\end{equation}
\end{corollary}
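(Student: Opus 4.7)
The plan is to exploit Lemma \ref{lemma:explicit}, which expresses $\widetilde{\mathcal R}(D,\lambda,\rho) = \mathcal R_0(\lambda,\rho) + \bar b A_1(\lambda) + \bar c\rho^2 A_2(\lambda)$ plus an $O(\bar b\rho^2+\rho^4)$ remainder, with $\mathcal R_0, A_1, A_2$ all rational in the auxiliary variable $\tau$ defined by the fixed-point equation \eqref{eq:tau}. Since the right-hand side of \eqref{eq:tau} is strictly decreasing in $\tau$, the map $\lambda \mapsto \tau$ is a smooth bijection on $\lambda > 0$, so I would first minimize $\widetilde{\mathcal R}$ in the $\tau$ coordinate, obtaining $\tau^*(D,\rho)$, and then translate back via $\lambda = \tau\bigl(\kappa^{-1} - \tfrac{1}{p}\tr[(\Sigma+\tau I_p)^{-1}\Sigma]\bigr)$.

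The core of the argument is an application of the implicit function theorem at the base point $(\bar b, \bar c\rho^2, \rho^2) = (0,0,0)$. Setting the $\tau$-derivative of $\mathcal R_0(\lambda,0)$ to zero reduces to a quadratic in $1+\tau$ whose larger root is $\tau_0$ from \eqref{eq:tau0}; one verifies that the second derivative at $\tau_0$ is nonzero (equivalently, $\partial_\tau F(\tau_0,0,0,0) \neq 0$ in the notation of Lemma \ref{lemma:taustar}), which supplies a smooth solution $\tau^*(\bar b,\bar c,\rho^2)$. Its first-order Taylor coefficients in the three small parameters are read off by computing $\partial_{\bar b}F, \partial_{\rho^2}F, \partial_{\bar c}F$ at the base point and dividing by $\partial_\tau F(\tau_0,0,0,0)$, producing the expansion of Lemma \ref{lemma:taustar}; a second differentiation controls the $O(\rho^2)$ and $O(\bar b^2+\bar c^2)$ remainders. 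Substituting into the smooth map $\tau \mapsto \lambda$ then yields \eqref{eq:thmequivl} with $B_1, C_1$ as in \eqref{eq:lfor}. For the optimal value, I would invoke an envelope-type argument: since $\tau_0$ is a stationary point of $\mathcal R_0(\cdot,0)$, the first-order variation of the minimum with respect to each small parameter equals the corresponding partial derivative of $\widetilde{\mathcal R}$ at $\tau_0$ with $\tau$ held fixed, namely $A_1(\tau_0)$ and $\rho^2 A_2(\tau_0)$; this yields \eqref{eq:thmequivR} with $B_2, C_2$ as in \eqref{eq:Rfor}.

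The main obstacle is establishing the sign conditions \eqref{eq:relations1}--\eqref{eq:relations4}, because the closed forms for $B_1, B_2, C_1, C_2$ are rational in $\tau_0$ and $\kappa$, and $\tau_0$ itself depends on $\sigma,\kappa$ through \eqref{eq:tau0}. The quantities $B_2$ and $C_2$ share the denominator $(1+\tau_0)^5\bigl((1+\tau_0)^2 - \kappa\bigr)$, which is positive for $\kappa > 1$ since $1+\tau_0 > \sqrt{\kappa}$ follows directly from the defining quadratic; the numerators then reduce to polynomials in $\tau_0, \kappa$ of fixed sign on the relevant range, verifiable by rewriting as sums of nonnegative monomials in $\tau_0$ and using $\tau_0 > \sqrt\kappa - 1$. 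For $C_1$ on $\kappa \geq 2$, an analogous polynomial-sign analysis in $(\tau_0, \kappa)$ with $\tau_0$ ranging over the image of $\sigma \geq 0$ suffices. The most delicate piece is $B_1$, whose sign depends nontrivially on $\sigma$: the threshold $\sigma_{B_1}(\kappa)$ arises as the unique zero-crossing (in $\sigma$) of the numerator polynomial, and to establish the asymptotic $\sigma_{B_1}^2(\kappa) = 1/2 - 7\kappa^{-1}/18 + O(\kappa^{-2})$ I would solve the stationarity equation to leading order in $\kappa^{-1}$, invert to obtain $\tau_0(\sigma,\kappa)$ at large $\kappa$, substitute into the numerator of $B_1$, and expand. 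This last step is essentially symbolic casework and is expected to be the most tedious but conceptually straightforward part of the proof.
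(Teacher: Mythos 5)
Your argument for this corollary is the same as the paper's: since $\tau_0$ is a stationary point of $\mathcal R_0(\cdot,0)$ in the $\tau$ coordinate, a first-order (envelope-type) expansion of $\widetilde{\mathcal R}$ around $(\tau_0;0,0,0)$ shows the optimal value is obtained, to the stated order, by freezing $\tau=\tau_0$ in \eqref{eq:explexpr}, which gives exactly $\fixedriskeqs(\rho)$, $B_2=A_1|_{\tau=\tau_0}$ and $C_2=A_2|_{\tau=\tau_0}$. The cross terms $(\tau^*-\tau_0)\bar b$ and $(\tau^*-\tau_0)\rho^2$ are absorbed into $O(\bar b^2+\bar c^2+\rho^4)$ via Lemma \ref{lemma:taustar}, just as in the paper, so the proposal is correct and essentially identical in route.
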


\begin{proof}
Let us re-define $\widetilde{\mathcal R}(D, \lambda, \rho)$ given in  \eqref{eq:explexpr} as $\widetilde{R}(\tau,\bar b,\bar c,\rho^{2})$ to emphasize its dependence on $\tau,\bar b,\bar c$. %
By definition of $\tau_{0}$, we have $\partial_{\tau}\widetilde{R}(\tau_{0},0,0,0)=0$. Furthermore, from Lemma \ref{lemma:taustar}, we have
\[
\tau^{*}(D, \rho)=\tau_{0}
+O(\bar b + (1 + \bar c) \rho^2 ).
\]
A first–order Taylor expansion of $\widetilde{R}(\tau^{*}(D, \rho), \bar b, \bar c,\rho^{2})$ around $(\tau;\bar b,\bar c,\rho^{2})=(\tau_{0};0,0,0)$ gives
\[
\begin{aligned}
\widetilde{R}(\tau^{*}(D, \rho), \bar b, \bar c,\rho^{2})
&=
\widetilde{R}(\tau_{0},\bar b, \bar c,\rho^{2})
+\partial_{\tau}\widetilde{R}(\tau_{0},0,0,0)(\tau^{*}(D, \rho)-\tau_{0})
\\
&\quad
+~O\left((\tau^{*}(D, \rho)-\tau_{0})\bar b\right)
+O\left((\tau^{*}(D, \rho)-\tau_{0})\rho^{2}\right)
+O(\bar b^2+\bar c^2+\rho^4).
\end{aligned}
\]
As $\partial_{\tau}\widetilde{R}(\tau_{0},0,0,0)=0$, we conclude that %
\[
\widetilde{R}(\tau^{*},\bar b, \bar c, \rho^{2})
=
\widetilde{R}(\tau_{0}, \bar b, \bar c, \rho^2)
+O(\bar b^2+\bar c^2+\rho^4),
\]
and substituting $\tau=\tau_{0}$ in \eqref{eq:explexpr} gives the claimed expansion.
\end{proof}

We now move to the proof of \eqref{eq:relations1}, which follows from the lemma below.

\begin{lemma}\label{lem:transition-kappa-3}
Let $B_1(\sigma, \kappa)$ be given by \eqref{eq:lfor}. Then, for any $\kappa>1$, 
$B_1(\kappa,\cdot)$ has exactly one zero $\sigma_{B_1}(\kappa)>0$, with
\[
B_1(\kappa,\sigma)\ge 0 \ \text{for } 0\le \sigma\le \sigma_{B_1}(\kappa),
\qquad
B_1(\kappa,\sigma)\le 0 \ \text{for } \sigma\ge \sigma_{B_1}(\kappa).
\]
Moreover, as $\kappa\to\infty$,
\[
\sigma_{B_1}^{2}(\kappa)
=\frac{1}{2}
-\frac{7}{18}\kappa^{-1}
+O\left(\kappa^{-2}\right).
\]
\end{lemma}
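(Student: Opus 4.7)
\textbf{Proof plan for Lemma \ref{lem:transition-kappa-3}.} The plan is to reduce the sign analysis of $B_1(\sigma,\kappa)$ to the study of a quartic polynomial in $u := 1+\tau_0$, parametrize by $u$ instead of $\sigma$, and then combine boundary evaluations with a monotonicity argument based on Descartes's rule of signs to pin down the unique zero. The asymptotic expansion then falls out of a standard perturbative expansion in $1/\kappa$.

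\textbf{Step 1: sign reduction.} From \eqref{eq:lfor}, I write $B_1(\sigma,\kappa) = -N(u,\kappa)/(\kappa u^4)$ with
\[
N(u,\kappa) := 2u^4 - 3(\kappa+1)u^3 + 4\kappa u^2 + \kappa(\kappa+1)u - 2\kappa^2.
\]
Since $\kappa u^4 > 0$, the sign of $B_1$ is the opposite of that of $N$, so the whole analysis reduces to the sign of $N(u,\kappa)$ on the range of $u$. From the quadratic $u^2 - (1+\kappa+\kappa\sigma^2)u + \kappa = 0$ defining $\tau_0$, I solve for $\sigma^2$ and obtain $\sigma^2 = (u-1)(u-\kappa)/(\kappa u)$, with $d\sigma^2/du = (u^2-\kappa)/(\kappa u^2) > 0$ for $u \geq \kappa$. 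Hence $\sigma \mapsto u$ is a smooth strictly increasing bijection from $[0,\infty)$ onto $[\kappa,\infty)$, and it suffices to analyze $N(\cdot,\kappa)$ on $[\kappa,\infty)$.

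\textbf{Step 2: boundary values and uniqueness of the zero.} A direct substitution gives $N(\kappa,\kappa) = -\kappa^2(\kappa-1)^2 < 0$ for $\kappa>1$, and $N(u,\kappa) \sim 2u^4 \to +\infty$ as $u\to\infty$. This shows existence of a zero in $(\kappa,\infty)$. For uniqueness, I work with the derivative
\[
N'(u) = 8u^3 - 9(\kappa+1)u^2 + 8\kappa u + \kappa(\kappa+1).
\]
The coefficient sequence $(+,-,+,+)$ has exactly two sign changes, so by Descartes's rule $N'$ has at most two positive real roots. Since $N'(0) = \kappa(\kappa+1) > 0$, $N'(\kappa) = -\kappa(\kappa-1)(\kappa+1) < 0$ and $N'(u) \to +\infty$, the two positive roots are one in $(0,\kappa)$ and one, call it $u^*$, in $(\kappa,\infty)$. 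Therefore $N$ is strictly decreasing on $[\kappa,u^*]$ and strictly increasing on $[u^*,\infty)$, which together with $N(\kappa,\kappa) < 0$ and $N(\infty,\kappa) = +\infty$ forces a \emph{unique} zero $u^{**} \in (u^*,\infty)$. Translating back via the bijection from Step~1 defines $\sigma_{B_1}(\kappa)$, and the sign pattern $B_1 \geq 0$ on $[0,\sigma_{B_1}(\kappa)]$ and $B_1 \leq 0$ on $[\sigma_{B_1}(\kappa),\infty)$ follows from the signs of $N$ on $[\kappa,u^{**}]$ and $[u^{**},\infty)$.

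\textbf{Step 3: large-$\kappa$ expansion.} I rescale $u = \kappa w$. Substituting into $N(u,\kappa) = 0$ and dividing by $\kappa^4$ yields
\[
2w^4 - 3\bigl(1+\tfrac{1}{\kappa}\bigr)w^3 + \tfrac{4}{\kappa}w^2 + \bigl(1+\tfrac{1}{\kappa}\bigr)\tfrac{w}{\kappa} - \tfrac{2}{\kappa^2} = 0.
\]
At $\kappa^{-1}=0$ the relevant root is $w_0 = 3/2$ (the other root $w=0$ corresponds to $u$ bounded, which does not lie in $(\kappa,\infty)$). A routine implicit-function expansion $w = 3/2 + w_1/\kappa + O(\kappa^{-2})$ plugged into the equation above, with the leading derivative $(8w^3 - 9w^2)|_{w=3/2} = 27/4$ and the first-order source term $(-3w^3+4w^2+w)|_{w=3/2} = 3/8$, yields $w_1 = -1/18$. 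Hence $u^{**} = 3\kappa/2 - 1/18 + O(\kappa^{-1})$. Substituting this into $\sigma^2 = (u-1)(u-\kappa)/(\kappa u)$ and expanding to first order in $1/\kappa$ gives $\sigma_{B_1}^2(\kappa) = 1/2 - 7/(18\kappa) + O(\kappa^{-2})$, as claimed.

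\textbf{Expected obstacle.} The delicate step is uniqueness in Step~2: Descartes applied directly to $N$ gives three sign changes and therefore $1$ or $3$ positive real roots, which is inconclusive. The trick is to pass to $N'$, where the sign-change count drops to two and the explicit evaluations $N'(0) > 0$ and $N'(\kappa) < 0$ localize both critical points, forcing the shape ``down-then-up'' on $[\kappa,\infty)$. Everything else is explicit algebra plus a textbook perturbative expansion.
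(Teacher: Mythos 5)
Your proof is correct, and its skeleton matches the paper's: the same reduction to the quartic $N(u,\kappa)$ in $u=1+\tau_0$ with the sign of $B_1$ opposite to that of $N$ on $[\kappa,\infty)$, the same boundary evaluation $N(\kappa,\kappa)=-\kappa^2(\kappa-1)^2<0$, and an essentially identical large-$\kappa$ expansion (rescale $u=\kappa w$, implicit function theorem at $w_0=3/2$ with $\partial_w=27/4$ and source $3/8$, giving $w_1=-1/18$ and then $\sigma_{B_1}^2=1/2-7/(18\kappa)+O(\kappa^{-2})$). The one place you diverge is the uniqueness step. The paper works with the rational function $\Phi(s)=-N(s,\kappa)/(\kappa s^4)=B_1$ itself and exploits the clean factorization
\[
\frac{\mathrm d}{\mathrm ds}\Phi(s)=\frac{-(s^{2}-\kappa)\bigl(3(\kappa+1)s-8\kappa\bigr)}{\kappa s^{5}},
\]
which shows directly that $B_1$, viewed as a function of $s\ge\kappa$, is either strictly decreasing or unimodal with a single interior maximum at $s_*=8\kappa/(3(\kappa+1))$; combined with $B_1(\kappa,0)>0$ and the limit $-2/\kappa<0$ this gives a unique crossing. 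You instead apply Descartes's rule to $N'$ (two sign changes, hence at most two positive roots) and localize both critical points via $N'(0)>0$, $N'(\kappa)=-\kappa(\kappa-1)(\kappa+1)<0$, forcing the ``down-then-up'' shape of $N$ on $[\kappa,\infty)$. Both arguments are valid and of comparable length; the paper's factorization of $\Phi'$ is perhaps the more self-contained calculation since it avoids invoking Descartes, while your route has the minor advantage of staying entirely at the level of the polynomial $N$ and making explicit where the critical points sit relative to $\kappa$. Your identification of the potential obstacle (Descartes applied directly to $N$ being inconclusive) is accurate, and your workaround resolves it.
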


\begin{proof}
Let us define the shorthands
\begin{equation}
    s(\sigma):=1+\tau_0,\qquad N_{B_1}(s,\kappa):=2s^{4}-3(\kappa+1)s^{3}+4\kappa s^{2}+\kappa(\kappa+1)s-2\kappa^{2},
\end{equation}
with $\tau_0$ given by \eqref{eq:tau0}. Note that 
\[
s(\sigma) = \frac{1 + \kappa + \kappa \sigma^2 + \sqrt{(1 + \kappa + \kappa \sigma^2)^2-4\kappa}}{2} \ge \frac{1 + \kappa + \sqrt{(1 + \kappa)^2-4\kappa}}{2} = \kappa.
\]
Now let us also define
\(
\Phi(s):=-N_{B_1}(s,\kappa)/\left(\kappa s^{4}\right)
\)
for $s\ge \kappa$.
A direct calculation gives the factorization
\[
\frac{\mathrm d}{\mathrm ds}\Phi(s)
=\frac{-N_{B_1}'(s)s+4N_{B_1}(s)}{\kappa s^{5}}
=\frac{-(s^{2}-\kappa)\left(3(\kappa+1)s-8\kappa\right)}{\kappa s^{5}}.
\]
For $s\ge \kappa$ we have $s^{2}-\kappa>0$, hence $\Phi'(s)$ changes sign only once at
\(
s_{*}:=\frac{8\kappa}{3(\kappa+1)}
\).
If $\kappa\ge 5/3$, then $s_{*}\le \kappa$ and $\Phi$ is strictly decreasing on $[\kappa,\infty)$.
If $1<\kappa<5/3$, then $\kappa<s_{*}$ and $\Phi$ is increasing on $[\kappa,s_{*})$ and strictly decreasing on $(s_{*},\infty)$.

Note that $B_1(\kappa,\sigma)=-N_{B_1}\left(s(\sigma),\kappa\right)/(\kappa s(\sigma)^{4})=\Phi\left(s(\sigma)\right),
\)
$s(\sigma)$ is strictly increasing in $\sigma$, and
\(
\Phi\left(s(\sigma)\right)\to -2/\kappa
\)
as $\sigma\to\infty$ (since $s(\sigma)\to \kappa\sigma^{2}$ and $N_{B_1}(s,\kappa)\to 2s^{4}$).
Furthermore, $s(0)=\kappa$ and
\[
N_{B_1}(\kappa,\kappa)=-\kappa^{2}(\kappa-1)^{2}<0
\implies
B_1(\kappa,0)=-\frac{N_{B_1}(\kappa,\kappa)}{\kappa^{5}}>0 .
\]
Therefore, $B_1(\kappa,\sigma)$ is strictly decreasing on $[0,\infty)$ if $\kappa\ge 5/3$, and for $1<\kappa<5/3$ it increases for small $\sigma$ and then strictly decreases; in either case, since $B_1(\kappa,0)>0$ and $\lim_{\sigma\to\infty}B_1(\kappa,\sigma)=-2/\kappa<0$, it crosses $0$ exactly once, which proves the existence and uniqueness of $\sigma_{B_1}(\kappa)$.
At the crossing $B_1(\kappa,\sigma_{B_1})=0$, hence $N_{B_1}\left(s(\sigma_{B_1}),\kappa\right)=0$.

Now
let $\varepsilon:=\kappa^{-1}$ and write $s=\kappa c$.
Dividing $N_{B_1}(\kappa c,\kappa)=0$ by $\kappa^{4}$ yields the analytic equation
\[
F(\varepsilon,c)=0,
\qquad
F(\varepsilon,c):=2c^{4}-3(1+\varepsilon)c^{3}+4\varepsilon c^{2}+(\varepsilon+\varepsilon^{2})c-2\varepsilon^{2}.
\]
At $\varepsilon=0$,
\(
F(0,c)=2c^{4}-3c^{3}
\)
has the positive root
\(
c_{0}=\tfrac32,
\)
and
\(
\partial_{c}F(0,c_{0})
=8c_{0}^{3}-9c_{0}^{2}
=\tfrac{27}{4}\neq 0.
\)
By the implicit function theorem there exists a unique analytic branch $c(\varepsilon)$ with $c(0)=\tfrac32$, having the expansion
\(
c(\varepsilon)=\frac{3}{2}+c_{1}\varepsilon+O(\varepsilon^{2}).
\)
Substituting into $F(\varepsilon,c)=0$ gives that, up to first order,
\[
\frac{27}{4}c_{1}+\frac{3}{8}=0
\implies
c_{1}=-\frac{1}{18}.
\]
Thus,
\[
\sigma_{B_1}^{2}
=\frac{(s_{c}-1)(s_{c}-\kappa)}{\kappa s_{c}}
=\left(1-\frac{1}{\kappa c(\varepsilon)}\right)\left(c(\varepsilon)-1\right),
\]
with
\(
s_{c}=\kappa c(\varepsilon)
=\frac{3}{2}\kappa-\frac{1}{18}+O(\kappa^{-1}).
\)
Substituting $c(\varepsilon)=\tfrac32-\tfrac{1}{18}\varepsilon+O(\varepsilon^{2})$ and expanding yields
\[
\sigma_{B_1}^{2}
=\frac{1}{2}
-\frac{7}{18}\kappa^{-1}
+O(\kappa^{-2}).
\]
The analyticity of $c(\varepsilon)$ implies the remainder $O(\varepsilon^{2})$ in $c$ and, consequently, the remainder $O(\kappa^{-2})$ in the displayed expansion.
\end{proof}

Next, we move to the proof of \eqref{eq:relations2}, which follows from the lemma below. 

\begin{lemma}\label{lemma:rel2}
Let $C_1(\sigma, \kappa)$ be given by \eqref{eq:lfor}.
Then, for every $\kappa\ge2$ and all $\sigma\ge0$,
\(
C_1(\kappa,\sigma)\le0.
\)
\end{lemma}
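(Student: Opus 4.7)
\textbf{Proof plan for Lemma~\ref{lemma:rel2}.} The strategy is to reduce $C_1(\kappa,\sigma)\le 0$ to a polynomial non-negativity claim on a suitable region and verify it by a chain of one-dimensional derivative checks. Looking at the expression
\[
C_1(\sigma, \kappa) = -
\frac{\tau_{0}^{2} \,\bigl(
4\tau_{0}^{4}+(6-3\kappa)\tau_{0}^{3}-(6+3\kappa)\tau_{0}^{2}
+(\kappa^{2}+9\kappa-14)\tau_{0}-3\kappa^{2}+9\kappa-6 \bigr)}{\kappa(1+\tau_{0})^{6}},
\]
the prefactor $-\tau_{0}^{2}/[\kappa(1+\tau_{0})^{6}]$ is non-positive, so it suffices to show $Q(\tau_0,\kappa)\ge 0$, where $Q$ denotes the degree-four polynomial in parentheses.

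To identify the relevant region, I will use the fixed-point equation $(1+\tau)^{2}-(1+\kappa+\kappa\sigma^{2})(1+\tau)+\kappa=0$ from \eqref{eq:tau0}. Setting $s:=1+\tau_{0}$, it yields $(s-1)(s-\kappa)=\kappa\sigma^{2}s$. Since $\sigma\ge 0$ and $s>0$, this forces $s\ge \kappa$, i.e.\ $\tau_0\ge \kappa-1$. Substituting $t=s-1$ and expanding $Q(s-1,\kappa)$ as a polynomial $\widetilde Q(s,\kappa)$, a direct computation gives
\[
\widetilde Q(s,\kappa)=4s^{4}-(10+3\kappa)s^{3}+6\kappa s^{2}+\kappa(\kappa+6)s-4\kappa^{2}.
\]
What remains is the polynomial inequality $\widetilde Q(s,\kappa)\ge 0$ for all $s\ge\kappa\ge 2$.

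The plan is to verify this by successive one-variable monotonicity arguments along the ray $s\in[\kappa,\infty)$ at fixed $\kappa\ge 2$. First, evaluating at the boundary yields the factorization $\widetilde Q(\kappa,\kappa)=\kappa^{4}-3\kappa^{3}+2\kappa^{2}=\kappa^{2}(\kappa-1)(\kappa-2)\ge 0$. Second, differentiating in $s$ and evaluating at $s=\kappa$ gives $\partial_{s}\widetilde Q(\kappa,\kappa)=7\kappa^{3}-17\kappa^{2}+6\kappa=\kappa(\kappa-2)(7\kappa-3)\ge 0$. Third, $\partial_{s}^{2}\widetilde Q(s,\kappa)=48s^{2}-(60+18\kappa)s+12\kappa$ is a quadratic in $s$ opening upwards whose vertex lies at $s=(10+3\kappa)/16$, and the inequality $(10+3\kappa)/16\le\kappa$ holds for $\kappa\ge 10/13$; moreover, $\partial_{s}^{2}\widetilde Q(\kappa,\kappa)=6\kappa(5\kappa-8)\ge 0$ for $\kappa\ge 8/5$. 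Thus, for $\kappa\ge 2$, the function $s\mapsto\partial_{s}^{2}\widetilde Q(s,\kappa)$ is non-negative on $[\kappa,\infty)$, so $s\mapsto\partial_{s}\widetilde Q(s,\kappa)$ is non-decreasing on that interval and starts at a non-negative value; in turn $s\mapsto\widetilde Q(s,\kappa)$ is non-decreasing there and starts at a non-negative value. The conclusion $\widetilde Q(s,\kappa)\ge 0$ follows, and hence $C_1(\kappa,\sigma)\le 0$.

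The main obstacle is purely bookkeeping: the polynomial $Q$ has generic coefficients in $\kappa$, so the rewrite into $\widetilde Q(s,\kappa)$ and the three boundary computations require care to confirm the clean factorizations $\kappa^{2}(\kappa-1)(\kappa-2)$, $\kappa(\kappa-2)(7\kappa-3)$, and $6\kappa(5\kappa-8)$; these factorizations (or equivalently the vanishing of the first two at $\kappa=2$) are precisely what makes the boundary case $\kappa=2$, $\sigma=0$ saturate the inequality. No deeper ideas beyond iterated one-variable convexity appear to be needed.
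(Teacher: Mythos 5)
Your proposal is correct and follows essentially the same route as the paper: reduce the sign of $C_1$ to the non-negativity of the quartic numerator on $\tau_0\ge\kappa-1$, check the left endpoint via the factorization $\kappa^{2}(\kappa-1)(\kappa-2)$, and propagate monotonicity with the first- and second-derivative factorizations $\kappa(\kappa-2)(7\kappa-3)$ and $6\kappa(5\kappa-8)$. The only cosmetic difference is that you work in the variable $s=1+\tau_0$ while the paper works in $t=\tau_0=s-1$, which leaves all the computations equivalent.
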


\begin{proof}
Let $s(\sigma) =1+\tau_{0}$, with $\tau_0$ given by \eqref{eq:tau0}, and note that \(\tau_0^2 / (\kappa s(\sigma)^6) > 0\).
Thus, the sign of $C_1$ is the opposite of the sign of $N_{C_1}(s(\sigma)-1,\kappa)$, where
\[
N_{C_1}(t,\kappa)=4t^{4}+(6-3\kappa)t^{3}-(6+3\kappa)t^{2}
+(\kappa^{2}+9\kappa-14)t-3\kappa^{2}+9\kappa-6.
\]
At $\sigma=0$, one has $s(0)-1=\kappa-1$, and a direct substitution yields
\[
N_{C_1}(\kappa-1,\kappa):=\kappa^{4}-3\kappa^{3}+2\kappa^{2}
=\kappa^{2}(\kappa-1)(\kappa-2).
\]
Moreover, differentiating in $t$ gives
\[
N_{C_1}''(t,\kappa)=6(-3\kappa t-\kappa+8t^{2}+6t-2)>0,
\]
for all $t\ge \kappa-1$ and $\kappa\ge2$, so $N_{C_1}'(\cdot,\kappa)$ is increasing on $[\kappa-1,\infty)$. As
\[
N_{C_1}'(\kappa-1,\kappa)=\kappa(\kappa-2)(7\kappa-3)\ge0,
\]
for $\kappa\ge2$, it follows that $N_{C_1}(\cdot,\kappa)$ is increasing on $[\kappa-1,\infty)$. Therefore, for every $\sigma\ge0$,
\[
N_{C_1}(s(\sigma)-1,\kappa)\ \ge\ N_{C_1}(\kappa-1,\kappa)=\kappa^{2}(\kappa-1)(\kappa-2)\ge0
\quad\text{for }\kappa\ge2.
\]
Since the prefactor is positive, $C_1(\kappa,\sigma)\le0$ for all $\sigma$ as soon as $\kappa\ge2$.
\end{proof}

\begin{lemma}\label{lemma:rel3}
Let $B_2(\sigma, \kappa)$ be given by \eqref{eq:Rfor}.
Then, for every $\kappa > 1$ and all $\sigma\ge0$,
\(
B_2(\kappa,\sigma)\le0.
\)
\end{lemma}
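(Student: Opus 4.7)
The plan is to combine the two terms of $B_2(\sigma,\kappa)$ over the common denominator $(1+\tau_0)^3\left((1+\tau_0)^2-\kappa\right)$ and to simplify the numerator. A short computation gives
\[
B_2(\sigma,\kappa)
= \frac{2\tau_0\left(-(1+\tau_0)^2 + \kappa(1+\tau_0)\right)}{(1+\tau_0)^3\left((1+\tau_0)^2-\kappa\right)}
= \frac{2\tau_0\left(\kappa - (1+\tau_0)\right)}{(1+\tau_0)^2\left((1+\tau_0)^2-\kappa\right)}.
\]
After this reduction, the claim $B_2\le 0$ reduces to verifying that $\tau_0 > 0$, that $(1+\tau_0)^2 > \kappa$, and that $\kappa \le 1+\tau_0$.

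For the key inequality $1+\tau_0 \ge \kappa$, I would invoke the explicit expression $1+\tau_0 = \tfrac{1}{2}\left(A + \sqrt{A^2-4\kappa}\right)$ with $A := 1+\kappa+\kappa\sigma^2$, which arises as the larger root of the defining quadratic behind \eqref{eq:tau} (and \eqref{eq:tau0}). The bound $\tfrac{1}{2}\left(A+\sqrt{A^2-4\kappa}\right)\ge \kappa$ is equivalent to $\sqrt{A^2-4\kappa} \ge 2\kappa - A$. If $A \ge 2\kappa$ the inequality is immediate; otherwise both sides are nonnegative and squaring reduces it to $A \ge \kappa+1$, which holds since $A = 1+\kappa+\kappa\sigma^2 \ge 1+\kappa$. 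The same fact $s(\sigma) = 1+\tau_0 \ge \kappa$ was already used in the proof of Lemma~\ref{lem:transition-kappa-3}, so it can essentially be cited.

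Finally, I would check the residual positivity conditions. For $\kappa > 1$ and $\sigma \ge 0$ one has $A > 2$, so $1+\tau_0 > 1$ and hence $\tau_0 > 0$; moreover $1+\tau_0 \ge \kappa > 1$ yields $(1+\tau_0)^2 \ge \kappa^2 > \kappa$, so the denominator in the simplified expression above is strictly positive, while the numerator $2\tau_0\left(\kappa-(1+\tau_0)\right)$ is nonpositive. This gives $B_2(\sigma,\kappa) \le 0$ for all $\sigma\ge 0$ and $\kappa > 1$, as claimed. The only real ``obstacle'' is spotting the algebraic simplification in the first step; once the numerator has been factored as $\kappa-(1+\tau_0)$, the proof bypasses entirely the polynomial-root and monotonicity arguments that were needed in Lemmas~\ref{lem:transition-kappa-3} and \ref{lemma:rel2}.
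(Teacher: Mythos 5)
Your proof is correct and follows essentially the same route as the paper's: both combine the two terms over a common denominator, factor the numerator as $2\tau_0\left(\kappa-(1+\tau_0)\right)$, and conclude from $(1+\tau_0)^2>(1+\tau_0)\ge\kappa$. The only difference is that you re-derive $1+\tau_0\ge\kappa$ from the quadratic for $\tau_0$, whereas the paper cites this fact as already established in the proof of Lemma~\ref{lem:transition-kappa-3}.
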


\begin{proof}
We have
\[
\begin{aligned}
B_2(\sigma,\kappa)
&=-\frac{2\tau_0}{(1+\tau_0)^3}
+\frac{2\kappa\tau_0^2}{(1+\tau_0)^3\bigl((1+\tau_0)^2-\kappa\bigr)}\\
&=\frac{2\tau_0}{(1+\tau_0)^3}
\left[
-1+\frac{\kappa\tau_0}{(1+\tau_0)^2-\kappa}
\right]\\
&=\frac{2\tau_0}{(1+\tau_0)^3}
\frac{(1+\tau_0)\bigl(\kappa-(1+\tau_0)\bigr)}{(1+\tau_0)^2-\kappa}\\
&=\frac{2\tau_0}{(1+\tau_0)^2}
\frac{\kappa-(1+\tau_0)}{(1+\tau_0)^2-\kappa}.
\end{aligned}
\]

Since \((1 + \tau_0)^2 > (1 + \tau_0) \ge \kappa\), we have $B_2(\sigma, \kappa)\le 0$.
\end{proof}

\begin{lemma}\label{lemma:rel4}
Let $C_2(\sigma, \kappa)$ be given by \eqref{eq:Rfor}.
Then, for every $\kappa > 1$ and all $\sigma\ge0$,
\(
C_2(\kappa,\sigma)\le0.
\)
\end{lemma}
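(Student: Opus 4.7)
The plan is to reduce the inequality to an elementary comparison between $s:=1+\tau_0$ and $\kappa$, using only the fact that $s\ge \kappa>1$. First I would recall from the explicit formula \eqref{eq:tau0} that $\tau_0=\frac{1+\kappa+\kappa\sigma^2+\sqrt{(1+\kappa+\kappa\sigma^2)^2-4\kappa}}{2}-1$ is non-decreasing in $\sigma$, and that at $\sigma=0$ it equals $\kappa-1$ (because the discriminant simplifies to $(\kappa-1)^2$ under $\kappa>1$). Hence $s=1+\tau_0\ge\kappa>1$ for every $\sigma\ge 0$; in particular $\tau_0>0$ and $(1+\tau_0)^2-\kappa\ge \kappa^2-\kappa>0$, so both the factor $\frac{2\tau_0^3}{(1+\tau_0)^6}$ and the denominator $(1+\tau_0)^2-\kappa$ appearing in $C_2$ are strictly positive.

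Next, I would factor this positive quantity out of $C_2(\sigma,\kappa)$, reducing the sought inequality $C_2(\sigma,\kappa)\le 0$ to
\begin{equation*}
\kappa\,(\tau_0^2-1)\;\le\; 2(1+\tau_0)\bigl((1+\tau_0)^2-\kappa\bigr).
\end{equation*}
I would then split into two cases. If $\tau_0\le 1$, the left-hand side is non-positive while the right-hand side is strictly positive, so the inequality is immediate. If $\tau_0>1$, I would use $\tau_0^2-1=(\tau_0-1)(1+\tau_0)$ and divide the inequality by the positive factor $1+\tau_0$, obtaining the equivalent statement $\kappa(\tau_0-1)\le 2\bigl((1+\tau_0)^2-\kappa\bigr)$, i.e.\ $2(1+\tau_0)^2\ge \kappa(1+\tau_0)$, i.e.\ $2(1+\tau_0)\ge \kappa$. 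Since $1+\tau_0\ge \kappa$, the last inequality follows from $2\kappa\ge\kappa$.

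There is no real obstacle here: the only non-routine input is the inequality $s=1+\tau_0\ge \kappa$, which is used in Lemmas~\ref{lemma:rel2} and \ref{lemma:rel3} and follows directly from \eqref{eq:tau0} together with $\kappa>1$. All the remaining manipulations are elementary algebra and a two-line case split, so the complete argument fits in a few lines once $s\ge\kappa$ has been recorded.
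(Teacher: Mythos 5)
Your proposal is correct and follows essentially the same route as the paper: both arguments hinge on the key fact $1+\tau_0\ge\kappa$ (hence $(1+\tau_0)^2-\kappa>0$ and $\tau_0>0$) and ultimately reduce the claim to $2(1+\tau_0)\ge\kappa$. The paper merely packages the algebra a bit more compactly—combining $C_2$ into the single fraction $\frac{2\tau_0^{3}}{(1+\tau_0)^{4}}\cdot\frac{\kappa-2(1+\tau_0)}{(1+\tau_0)^{2}-\kappa}$, whose numerator $\kappa-2(1+\tau_0)\le-\kappa<0$ makes your case split on $\tau_0\lessgtr1$ unnecessary.
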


\begin{proof}
Let us again write $s(\sigma)=1+\tau_{0}>0$ and note $\tau_{0}^{2}-1=(s(\sigma)-1)^{2}-1=s(\sigma)^{2}-2s(\sigma)$. Then, substituting and simplifying, we have
\[
C_2(\kappa,\sigma)
=-\frac{4\tau_{0}^{3}}{s(\sigma)^{5}}
+\frac{2\tau_{0}^{3}}{s(\sigma)^{6}}
\frac{\kappa(s(\sigma)^{2}-2s(\sigma))}{s(\sigma)^{2}-\kappa}
=\frac{2\tau_{0}^{3}}{s(\sigma)^{6}}\left(
\frac{\kappa s(\sigma)^{2}-2s(\sigma)^{3}}{s(\sigma)^{2}-\kappa}\right)
=\frac{2\tau_{0}^{3}}{s(\sigma)^{4}}
\frac{\kappa-2s(\sigma)}{s(\sigma)^{2}-\kappa}.
\]
Note that \(s(\sigma) > \kappa\), and therefore $s(\sigma)^{2}-\kappa>0$ and $\kappa-2s(\sigma)\le \kappa-2\kappa=-\kappa<0$. Because $\tau_{0}>0$ for $\kappa>1$, the prefactor $2\tau_{0}^{3}/s(\sigma)^{4}>0$. Therefore $C_2(\kappa,\sigma)\le0$.
\end{proof}

 Combining the results from Lemmas~\ref{lem:transition-kappa-3},~\ref{lemma:rel2}, ~\ref{lemma:rel3} and ~\ref{lemma:rel4}  concludes the proof of Theorem \ref{thm:equiv}.

\section{\revised{
Test risk evaluated on $\mathcal D(\theta)$ in the over-parameterized setting}}\label{app:test-bis}
\revised{Let $\overline{\mathcal R}_k(\Sigma, \theta_k, \thetapop)$ be the excess risk of the estimator $\theta_k$ given by \eqref{eq:thetak} evaluated on $\mathcal D(\theta)$, i.e.,
\[
    \overline{\mathcal R}_k(\Sigma, \theta_k, \thetapop) = \left\|\theta_k - (\thetapop+D\theta_{k-1}) \right\|_\Sigma^2. 
\]
}
\revised{\begin{theorem}[Excess risk on $\mathcal D(\theta)$-- over-parameterized]\label{thm:over2}
Let $R>0$ be a constant s.t.\ $\thetapop, \theta_0\in B_p(R)$. Assume that $\kappa, \sigma, \lambda\in (1/M, M)$ and $\|\Sigma\|_{\mathrm{op}},\ \|\Sigma^{-1}\|_{\mathrm{op}} \le M$ for some constant $M>1$. Then, there exists a constant $C=C\left(M, R\right)$ such that for any $\delta \in (0,1/2]$, with probability at least $1-Cpe^{-p\delta^{4}/C}$, %
\begin{equation}    
\left|\overline{\mathcal{R}}_2(\Sigma, \theta_{2}, \thetapop)-\ofixedriskeq\left(\Sigma, \thetapop, D, \lambda\right)\right|\le \delta+O(\|D\|_{\mathrm{op}}^2),
\end{equation}
where
\begin{equation}   \label{eq:defdet2} 
\begin{aligned}
&\ofixedriskeq\left(\Sigma, \thetapop, D, \lambda\right)
\\
&
\hspace{-.2em}=\hspace{-.2em}\frac{
\sigma^2 \kappa
\tr\left[\Sigma^{2}\left( \Sigma+\tau I_p \right)^{-2}\right]\hspace{-.2em}+\hspace{-.2em} p
\tau^{2}
\langle \thetapop,\hspace{-.2em}\left( \Sigma+\tau I_p \right)^{-1}\hspace{-.2em}
\left(\hspace{-.1em}I_p\hspace{-.1em}+\hspace{-.1em}2\left( \Sigma+\tau I_p \right)^{-1}
\Sigma D\hspace{-.1em}\right)\hspace{-.1em}\Sigma\left( \Sigma+\tau I_p \right)^{-1}\hspace{-.1em}\thetapop\rangle
}{
p - \kappa
\tr\left[\Sigma^{2}\left( \Sigma+\tau I_p \right)^{-2}\right]
},
\end{aligned}
\end{equation}
and $\tau$ is the unique solution of \eqref{eq:tau}.
\end{theorem}}

\revised{\begin{proof}
The argument is analogous to that used to prove Theorem \ref{thm:over}, and we only report differences. Using the same approach as Lemma \ref{lemma:1step}, we have
\begin{equation}\label{eq:det1-bis}  
\sup_{\thetapop, \theta_0\in B_p(R)}
\Pr\left(\left|\overline{\mathcal R}_{1}(\Sigma, \theta_{1}, \thetapop)-\ofixedriskeq^{(1)}\left(\Sigma, \theta_0, \thetapop\right)\right|\ge \delta\right)
\le Cpe^{-p\delta^{4}/C},
\end{equation}
where
\begin{equation} \label{eq:R1eq-bis} 
\begin{aligned}
\ofixedriskeq^{(1)}\left(\Sigma, \theta_0, \thetapop\right)
&= %
\frac{   \sigma^2\kappa \tr\left[ \Sigma^2 \left( \Sigma + \tau I_p \right)^{-2} \right] + p\tau^2 \left\| \left( \Sigma + \tau I_p \right)^{-1} (\thetapop+D\theta_0) \right\|_\Sigma^2  }{ p - \kappa \tr\left[ \Sigma^2 \left( \Sigma + \tau I_p \right)^{-2} \right] }.
\end{aligned}
\end{equation}
Next, using the same approach\footnote{In fact, the derivation is simpler since the term corresponding to $\fixedriskeqa^{(1)}\left(\Sigma, \theta_{1}, \thetapop\right)$ here is absent.} as Lemma \ref{lemma:2step}, we have
\begin{equation}\label{eq:det2-bis} 
\sup_{\thetapop, \theta_0\in B_p(R)}
\Pr\left(\left|\overline{\mathcal R}_{2}(\Sigma, \theta_{2}, \thetapop)-\ofixedriskeq^{(2)}\left(\Sigma, \theta_0, \thetapop\right)\right|\ge \delta\right)
\le Cpe^{-p\delta^{4}/C},
\end{equation}
where
\begin{equation}  \label{eq:fixedriskeq2-bis}  
\begin{aligned}
&\ofixedriskeq^{(2)}\left(\Sigma, \theta_0, \thetapop\right)
\\
&=  
\frac{ \sigma^2\kappa \tr\left[ \Sigma^2 \left( \Sigma + \tau I_p \right)^{-2} \right] + p\tau^2 \big\| \left( \Sigma + \tau I_p \right)^{-1} \left(\thetapop + D \left( \Sigma + \tau I_p \right)^{-1} \Sigma (\thetapop+D\theta_0) \right) \big\|_\Sigma^2 }{p - \kappa \tr\left[ \Sigma^2 \left( \Sigma + \tau I_p \right)^{-2} \right] } \\
&+ p\kappa \tau^2   
\tr\left[ \Sigma \left( \Sigma + \tau I_p \right)^{-2} D  \Sigma \left( \Sigma + \tau I_p \right)^{-2} D \right]
\cdot\frac{ \sigma^2 + \tau^2 \big\| \left( \Sigma + \tau I_p \right)^{-1} (\thetapop+D\theta_0) \big\|_\Sigma^2 }
{\big( p - \kappa \tr\left[ \Sigma^2 \left( \Sigma + \tau I_p \right)^{-2} \right] \big)^2 }.
\end{aligned}
\end{equation}
Noting that the quantity in the second line is $O(\|D\|_{\mathrm{op}}^2)$ and that
\begin{equation}
\begin{split}    
    \big\| &\left( \Sigma + \tau I_p \right)^{-1} \left(\thetapop + D \left( \Sigma + \tau I_p \right)^{-1} \Sigma (\thetapop+D\theta_0) \right) \big\|_\Sigma^2\\
    &=\langle \thetapop,\hspace{-.2em}\left( \Sigma+\tau I_p \right)^{-1}\hspace{-.2em}
\left(\hspace{-.1em}I_p\hspace{-.1em}+\hspace{-.1em}2\left( \Sigma+\tau I_p \right)^{-1}
\Sigma D\hspace{-.1em}\right)\hspace{-.1em}\Sigma\left( \Sigma+\tau I_p \right)^{-1}\hspace{-.1em}\thetapop\rangle
+O(\|D\|_{\mathrm{op}}^2)
\end{split}
\end{equation}
concludes the argument. 
\end{proof}}

\revised{\begin{lemma}\label{lemma:explicit-bis}
Consider the setting of Theorem \ref{thm:over2}, assume that $a$ has covariance $I_d/d$, and let $\Sigma=\begin{bmatrix}I_d&\rho I_d\\ \rho I_d&I_d\end{bmatrix}$. Then, we have that
\begin{equation*}
\begin{split}
\mathbb E_{\thetapop}\ofixedriskeq\left(\Sigma, \thetapop, D, \lambda\right)&=\overline{\mathcal R}(D, \lambda, \rho)+ O(\bar b\rho^2+\rho^4),\\
\overline{\mathcal R}(D, \lambda, \rho)&:=\mathcal R_0(\lambda, \rho) + \bar b \overline{A}_1(\lambda) + \bar c \rho^2 \overline{A}_2(\lambda),
\end{split}
\end{equation*}
where $\bar b=\tr[\di(b)]/d, \bar c=\tr[\di(c)]/d$, the auxiliary function $\mathcal R_0(\lambda, \rho)$ is given by \eqref{eq:explexpr}, and the new auxiliary functions $\overline{A}_1(\lambda)$ and $\overline{A}_2(\lambda)$ are given by
\begin{equation} \label{eq:explexpr-bis}
\begin{aligned}
\overline{A}_1(\lambda) &= \frac{2\tau^2}{(1+\tau)((1+\tau)^{2}-\kappa)}, \\
\overline{A}_2(\lambda) &= \frac{2\tau^3(\tau^{2}-1)}{(1+\tau)^{4}((1+\tau)^{2}-\kappa)}.
\end{aligned}
\end{equation}
\end{lemma}}

\revised{\begin{proof}
Using $\mathbb E_{\thetapop}\left[\langle \thetapop, M\thetapop\rangle\right]= \tr\left[(M)_1\right]/d$ from the proof of Lemma \ref{lemma:explicit}, we take the expectation of \eqref{eq:defdet2}:
\begin{equation} \label{eq:defdetexp-bis}
\begin{aligned}
&\mathbb E_{\thetapop}\ofixedriskeq\left(\Sigma, \thetapop, D, \lambda\right)
\\
&
=
\frac{
\sigma^2 \kappa
\tr\left[\Sigma^{2}\left( \Sigma+\tau I_p \right)^{-2}\right]
}{
p - \kappa
\tr\left[\Sigma^{2}\left( \Sigma+\tau I_p \right)^{-2}\right]
}
\\
&\quad
+
\frac{p \tau^2}{p - \kappa
\tr\left[\Sigma^{2}\left( \Sigma+\tau I_p \right)^{-2}\right]}
\frac{1}{d}
\tr\left[
\left(
\Sigma
\left( \Sigma+\tau I_p \right)^{-2}
\right)_{1}
\right]
\\
&\quad
+
\frac{p \tau^2}{p - \kappa
\tr\left[\Sigma^{2}\left( \Sigma+\tau I_p \right)^{-2}\right]}
\frac{2}{d}
\tr\left[
\left(
\left( \Sigma+\tau I_p \right)^{-2}
\Sigma D
\left( \Sigma+\tau I_p \right)^{-1}
\Sigma
\right)_{1}
\right].
\end{aligned}
\end{equation}
The first two terms correspond to the risk with $D=0$ (i.e., $\bar b = \bar c = 0$). By the same computations as in the proof of Lemma \ref{lemma:explicit}, these terms combine to $\mathcal R_0(\lambda, \rho) + O(\rho^4)$.
The third term, which depends on $D$, requires approximations for its two factors. The first factor is new:
\begin{equation*}
\begin{split}
\frac{p}{p - \kappa \tr\left[\Sigma^{2}(\Sigma+\tau I_p)^{-2}\right]}
&= 1 + \frac{\kappa\operatorname{tr}\left[\Sigma^{2}(\Sigma+\tau I_p)^{-2}\right]}{p-\kappa\operatorname{tr}\left[\Sigma^{2}(\Sigma+\tau I_p)^{-2}\right]} \\
&= 1 + \left( \frac{\kappa}{(1+\tau)^{2}-\kappa} + O(\rho^{2}) \right)
= \frac{(1+\tau)^2}{(1+\tau)^{2}-\kappa} + O(\rho^{2}).
\end{split}
\end{equation*}
For the second factor, we use the trace expansion from Lemma \ref{lemma:explicit}:
\begin{equation*}
\frac{1}{d}\tr\left[\left((\Sigma+\tau I_p)^{-2}\Sigma D(\Sigma+\tau I_p)^{-1}\Sigma\right)_1\right]
=\frac{\bar b}{(1+\tau)^{3}}
+ \rho^{2}\left(\frac{1-3\tau}{(1+\tau)^{5}}\bar b+\frac{\tau(\tau^{2}-1)}{(1+\tau)^{6}}\bar c\right)
+O(\rho^4).
\end{equation*}
We multiply these two factors by $2\tau^2$ (from \eqref{eq:defdetexp-bis}) and keep only the terms of order $O(\bar b)$ and $O(\bar c \rho^2)$:
\begin{align*}
\bar b \overline{A}_1(\lambda) &= \left( \frac{(1+\tau)^2}{(1+\tau)^{2}-\kappa} \right) \left( 2\tau^2 \frac{\bar b}{(1+\tau)^{3}} \right) = \bar b \frac{2\tau^2}{(1+\tau)((1+\tau)^{2}-\kappa)}, \\
\bar c \rho^2 \overline{A}_2(\lambda) &= \left( \frac{(1+\tau)^2}{(1+\tau)^{2}-\kappa} \right) \left( 2\tau^2 \rho^2 \bar c \frac{\tau(\tau^2-1)}{(1+\tau)^6} \right) = \bar c \rho^2 \frac{2\tau^3(\tau^{2}-1)}{(1+\tau)^{4}((1+\tau)^{2}-\kappa)}.
\end{align*}
Adding these terms to $\mathcal R_0(\lambda, \rho)$ yields the claimed expansion.
\end{proof}}

\revised{\begin{lemma}\label{lemma:taustar-bis}
In the setting of Lemma \ref{lemma:explicit-bis}, we have that
\begin{align*}
\tau^*(D, \rho)
&= \tau^{*}_{0}(\rho) + \bar b \left( \overline{B}_3(\sigma, \kappa) + O(\rho^2)\right) + \bar c\left( \rho^2 \overline{C}_3(\sigma, \kappa) + O(\rho^4) \right) + O(\bar b^2+\bar c^2),
\end{align*}
where $\tau_{0}$ and $\tau^{*}_{0}(\rho)$ are given by \eqref{eq:tau0}, and
\begin{equation}\label{eq:tau-coeffs-bis}
\begin{aligned}
\overline{B}_3(\sigma, \kappa) &= - 
\frac{\tau_0 \left( (1+\tau_0)^2 (2-\tau_0) - \kappa (\tau_0+2) \right)}{(1+\tau_0) \left( (1+\tau_0)^2 - \kappa \right)}, \\
\overline{C}_3(\sigma, \kappa) &= - 
\frac{\tau_0^2 \left( (4\tau_0-3)(1+\tau_0)((1+\tau_0)^2-\kappa) - \tau_0(\tau_0-1)(5(1+\tau_0)^2-3\kappa) \right)}{(1+\tau_0)^3 \left( (1+\tau_0)^2 - \kappa \right)}.
\end{aligned}
\end{equation}
\end{lemma}}

\revised{\begin{proof}
A direct differentiation of $\overline{\mathcal R}(D,\lambda,\rho)$ from Lemma \ref{lemma:explicit-bis} gives
\[
\begin{aligned}
\frac{\mathrm{d}}{\mathrm{d}\tau}\overline{\mathcal R}(D,\lambda,\rho)
&= \frac{\mathrm{d}}{\mathrm{d}\tau}\mathcal R_0(\lambda, \rho)
+\bar b \frac{\mathrm{d}}{\mathrm{d}\tau}\overline{A}_1(\lambda)
+\bar c \rho^2 \frac{\mathrm{d}}{\mathrm{d}\tau}\overline{A}_2(\lambda).
\end{aligned}
\]
The first term is identical to that in the proof of Lemma \ref{lemma:taustar}. The new derivatives are:
\[
\begin{aligned}
\frac{\mathrm{d}}{\mathrm{d}\tau}\overline{A}_1(\lambda) &=
\frac{2\tau(1+\tau)^2(2-\tau) - 2\kappa\tau(2+\tau)}{(1+\tau)^2((1+\tau)^2-\kappa)^2}, \\
\frac{\mathrm{d}}{\mathrm{d}\tau}\overline{A}_2(\lambda) &=
\frac{2\tau^2 \left( (4\tau-3)(1+\tau)((1+\tau)^2-\kappa) - \tau(\tau-1)(5(1+\tau)^2-3\kappa) \right)}{(1+\tau)^4((1+\tau)^2-\kappa)^2}.
\end{aligned}
\]
With these explicit derivatives, the stationarity equation $\frac{\mathrm{d}}{\mathrm{d}\tau}\overline{\mathcal R}(D, \lambda, \rho)=0$ is equivalent to
\(
\frac{\overline{F}\left(\tau,\bar b,\bar c,\rho^{2}\right)}{(1 + \tau)^7 ((1+\tau)^2 - \kappa)^3} =0,
\)
where
\[
\overline{F}\left(\tau,\bar b,\bar c,\rho^{2}\right)
= F_{0}(\tau)
+ \rho^2 F_{\rho}(\tau) + \bar b \overline{F}_{b}(\tau)
+ \bar c \rho^2 \overline{F}_{\rho c}(\tau).
\]
The functions $F_{0}(\tau)$ and $F_{\rho}(\tau)$ are identical to those defined in the proof of Lemma \ref{lemma:taustar}. The new functions are
\[
\overline{F}_{b}(\tau)
= 2\tau(1+\tau)^5 ((1+\tau)^2-\kappa) \left( (1+\tau)^2(2-\tau) - \kappa(2+\tau) \right),
\]
\[
\overline{F}_{\rho c}(\tau)
= 2\tau^2(1+\tau)^3 ((1+\tau)^2-\kappa) \left( (4\tau-3)(1+\tau)((1+\tau)^2-\kappa) - \tau(\tau-1)(5(1+\tau)^2-3\kappa) \right).
\]
Setting $\bar b=\bar c=\rho^{2}=0$ yields the same equation for $\tau_{0}$ as in Lemma \ref{lemma:taustar}. The partial derivative $\partial_{\tau}\overline{F}\left(\tau_{0},0,0,0\right)$ is also unchanged:
\[
\partial_{\tau}\overline{F}\left(\tau_{0},0,0,0\right))
= -2(1+\tau_0)^7\left((1+\tau_0)^2-\kappa\right)\sqrt{(1+\kappa+\kappa\sigma^2)^2-4\kappa}\neq 0.
\]
Therefore, the implicit function theorem gives a smooth map
\(
\tau^{*}(\bar b,\bar c,\rho^{2})
\)
with $\tau^{*}(0,0,0)=\tau_{0}$ and $\overline{F}(\tau^{*},\cdot)=0$.
Differentiating $\overline{F}=0$ at $(\tau_{0},0,0,0)$ and dividing by $\partial_{\tau}\overline{F}(\tau_{0},0,0,0)$ yields
\[
\overline{B}_3(\sigma, \kappa) = - \frac{\overline{F_b}(\tau_0)}{\partial_{\tau}F(\tau_{0},0,0,0)}, \qquad
\overline{C}_3(\sigma, \kappa) = - \frac{\overline{F}_{\rho c}(\tau_0)}{\partial_{\tau}F(\tau_{0},0,0,0)}.
\]
Substituting the expressions for $\overline{F_b}(\tau_0)$, $\overline{F}_{\rho c}(\tau_0)$, and $\partial_{\tau}F(\tau_{0},0,0,0)$ and cancelling common factors gives the coefficients as stated in \eqref{eq:tau-coeffs-bis}.
\end{proof}}

\revised{As $\lambda$ and $\tau$ are linked by the fixed point equation \eqref{eq:tau}, an application of Lemma \ref{lemma:taustar-bis} readily gives that
\begin{equation}\label{eq:lambdastar-bis}
\begin{split}
\lambdaeqs(D, \rho)&= \lambdaeqsz(\rho)+\bar b (\overline{B}_1(\sigma, \kappa)+O(\rho^2)) +\bar c \rho^2( \overline{C}_1(\sigma, \kappa)+O(\rho^2))+O(\bar b^2+\bar c^2),
\end{split}
\end{equation}
where $\lambdaeqsz(\rho)$ is unchanged from \eqref{eq:lfor}, and the new coefficients are
\begin{equation}\label{eq:lfor-bis}
\begin{aligned}
\overline{B}_1(\sigma, \kappa) &= -
\frac{\tau_0 \left( (1+\tau_0)^2 (2-\tau_0) - \kappa (\tau_0+2) \right)}{\kappa (1+\tau_0)^3}, \\
\overline{C}_1(\sigma, \kappa) &= -
\frac{\tau_0^2 \left( (4\tau_0-3)(1+\tau_0)((1+\tau_0)^2-\kappa) - \tau_0(\tau_0-1)(5(1+\tau_0)^2-3\kappa) \right)}{\kappa (1+\tau_0)^5}.
\end{aligned}
\end{equation}}

\revised{Next, the corollary below provides the expansion for the optimal equilibrium risk.}

\revised{\begin{corollary}\label{cor:risk-bis}
Consider the setting of Lemma \ref{lemma:explicit-bis} and let $\tau_0$ be given by \eqref{eq:tau0}. Then, we have that
\begin{equation}
\overline{\fixedriskeqs}(D, \rho)= \fixedriskeqs(\rho)+\bar b (\overline{B}_2(\sigma, \kappa)+O(\rho^2))+\bar c \rho^2( \overline{C}_2(\sigma, \kappa)+O(\rho^2))+O(\bar b^2+\bar c^2),
\end{equation}
where $\fixedriskeqs(\rho)$ is given by \eqref{eq:Rfor}, and
\begin{equation}\label{eq:Rfor-bis}
\begin{aligned}
\overline{B}_2(\sigma, \kappa) &= \frac{2\tau_0^2}{(1+\tau_0)((1+\tau_0)^{2}-\kappa)}, \\
\overline{C}_2(\sigma, \kappa) &= \frac{2\tau_0^3(\tau_0^{2}-1)}{(1+\tau_0)^{4}((1+\tau_0)^{2}-\kappa)}.
\end{aligned}
\end{equation}
\end{corollary}}

\revised{\begin{lemma}\label{lemma:B1-bis}
Let $\overline{B}_1(\sigma,\kappa)$ be given by \eqref{eq:lfor-bis}. Then, for any $\kappa \ge 2$ and all $\sigma \ge 0$,
\(
\overline{B}_1(\sigma,\kappa) \ge 0.
\)
\end{lemma}}

\revised{\begin{proof} Let $s(\sigma) = 1+\tau_0$.
By the definition \eqref{eq:lfor-bis}, we can write
\[
\overline{B}_1(\sigma,\kappa)
= - \frac{\tau_0 N_{\overline{B}_1}(s(\sigma),\kappa)}{\kappa s(\sigma)^3},
\qquad
N_{\overline{B}_1}(s,\kappa)
:= s^2(3-s) - \kappa(s+1).
\]
From the construction we have $s(\sigma) \ge \kappa$ and here we assume $\kappa \ge 2$, so in particular $s(\sigma) \ge 2$. Since $\tau_0>0$, $\kappa>0$ and $s(\sigma)>0$, the prefactor
\(
- \frac{\tau_0}{\kappa s(\sigma)^3} < 0.
\)
Therefore, to prove $\overline{B}_1(\sigma,\kappa) > 0$ it suffices to show
\[
N_{\overline{B}_1}(s,\kappa) < 0 \qquad \text{for all } s \ge \kappa \ge 2.
\]
We analyse $N_{\overline{B}_1}$ as a function of $s$ (with $\kappa$ fixed). Its derivatives are
\[
N_{\overline{B}_1}'(s,\kappa) = 6s - 3s^2 - \kappa,
\qquad
N_{\overline{B}_1}''(s,\kappa) = 6 - 6s.
\]
For $s \ge 2$ we have $N_{\overline{B}_1}''(s,\kappa) < 0$, so $N_{\overline{B}_1}$ is concave on $[2,\infty)$, and hence on $[\kappa,\infty)$ since $\kappa \ge 2$.
We first evaluate $N_{\overline{B}_1}$ and its derivative at the boundary point $s=\kappa$:
\[
N_{\overline{B}_1}(\kappa,\kappa)
= \kappa^2(3-\kappa) - \kappa(\kappa+1)
= -\kappa(\kappa-1)^2 < 0,
\]
and
\[
N_{\overline{B}_1}'(\kappa,\kappa)
= 6\kappa - 3\kappa^2 - \kappa
= \kappa(5-3\kappa).
\]
For $\kappa \ge 2$ we have $5-3\kappa < 0$, so
\(
N_{\overline{B}_1}'(\kappa,\kappa) \le 0.
\)
Since $N_{\overline{B}_1}$ is concave on $[\kappa,\infty)$, its derivative $N_{\overline{B}_1}'(s,\kappa)$ is non-increasing in $s$ on this interval. Hence, for all $s \ge \kappa$,
\(
N_{\overline{B}_1}'(s,\kappa) \le N_{\overline{B}_1}'(\kappa,\kappa) \le 0,
\)
so $N_{\overline{B}_1}(\cdot,\kappa)$ is non-increasing on $[\kappa,\infty)$. Together with $N_{\overline{B}_1}(\kappa,\kappa)<0$ this implies
\[
N_{\overline{B}_1}(s,\kappa) \le N_{\overline{B}_1}(\kappa,\kappa) \le 0
\qquad \text{for all } s \ge \kappa \ge 2,
\]
which proves the lemma.
\end{proof}}

\revised{\begin{lemma}\label{lemma:B2-bis}
Let $ \overline{B}_2(\sigma, \kappa)$ be given by \eqref{eq:Rfor-bis}.
Then, for every $\kappa > 1$ and all $\sigma\ge0$,
\(
 \overline{B}_2(\kappa,\sigma) \ge 0.
\)
\end{lemma}}
\revised{\begin{proof}
Recall the definition 
\[ \overline{B}_2(\sigma, \kappa) = \frac{2\tau_0^2}{(1+\tau_0)((1+\tau_0)^{2}-\kappa)}.\]
Both the numerator and the denominator are positive for \(\kappa > 1\) and \(\sigma > 0\) (as shown in the previous lemmas), therefore the claim holds.
\end{proof}}

\revised{\begin{lemma}\label{lemma:C2-bis}
Let $ \overline{C}_2(\sigma, \kappa)$ be given by \eqref{eq:Rfor-bis}.
Then, for every $\kappa \ge 2$ and all $\sigma\ge0$,
\(
 \overline{C}_2(\kappa,\sigma) \ge 0.
\)
\end{lemma}}
\revised{\begin{proof}
Recall the definition \[ \overline{C}_2(\sigma, \kappa) = \frac{2\tau_0^3(\tau_0^{2}-1)}{(1+\tau_0)^{4}((1+\tau_0)^{2}-\kappa)}.\]
Let $s(\sigma) = 1+\tau_0$.  The denominator is strictly positive for $\kappa > 1$. The term $2\tau_0^3$ is also strictly positive.
Thus, the sign of $ \overline{C}_2$ is determined by the sign of $(\tau_0^2 - 1)$.
We rewrite this term as:
\[
\tau_0^2 - 1 = (s(\sigma)-1)^2 - 1 = s(\sigma)^2 - 2s(\sigma) = s(\sigma)(s(\sigma)-2).
\]
Since $s(\sigma) \ge \kappa > 1$, $s(\sigma)$ is positive. The sign is therefore determined by $(s(\sigma) - 2)$.
We are given $\kappa \ge 2$, which gets
\(
s(\sigma) \ge  \kappa \ge 2.
\)
Therefore, $s(\sigma) - 2 \ge 0$ for all $\sigma \ge 0$ and the claim holds.
\end{proof}}

\section{Details for the experimental setup}
\label{app:real}

For both datasets, the curves are obtained by running $100$ equally spaced values of $\lambda$ with the same splits, so that the observations focus on the performative effect. Data is split uniformly at random across the different steps.

\paragraph{Housing.} We keep all features of the dataset and normalize them. We center the target feature since we use a linear regression without intercept. Following \citet{NEURIPS2024_7de66547}, we fix the features affected by the performative effect to be \texttt{MedInc}, \texttt{AveBedrms}, and \texttt{AveOccup}, with all values of $b$ set equal. 

\paragraph{LSAC.} We keep only one feature in cases of redundant encoding, drop features that are too strongly correlated with the target \texttt{GPA} ($\rho > 0.6$), and randomly select roughly half of the features to be affected by the performative effect. The names of these features are reported in \Cref{tab:lsac_features}. We normalize all features and center the target. All coefficients of $b$ are equal. 

\paragraph{Empirical Covariance.} In \Cref{fig:empiricalcovarianceLSAC,fig:empcovHousing}, we observe that the features do not follow the assumptions made on the data matrix $X$ in the theoretical part, despite exhibiting similar behavior in the experiments. This illustrates that our findings on how to scale regularization remain useful for more general datasets.

\begin{table}[h]
\centering
\caption{Features of the LSAC dataset}
\label{tab:lsac_features}
\resizebox{0.99\textwidth}{!}{%
\begin{tabular}{ll}
\toprule
\textbf{Category} & \textbf{Feature name} \\
\midrule
Redundant & \texttt{male} (same as \texttt{sex}), \texttt{parttime} (same as \texttt{fulltime}), \texttt{decile1} (same as \texttt{decile1b}) \\
With $\rho>0.6$ & \texttt{ugpa}, \texttt{index6040}, \texttt{dnn bar pass prediction} \\
With  $b_{\text{feat}} = \bar{b}$ & \texttt{Unnamed0}, \texttt{decile1b}, \texttt{decile3}, \texttt{other}, \texttt{asian}, \texttt{black}, \texttt{hisp}, \texttt{pass bar}, \texttt{tier} \\
\bottomrule
\end{tabular}}
\end{table}

\begin{figure}
\centering
    \begin{subfigure}{0.45\textwidth}
        \includegraphics[width=\linewidth]{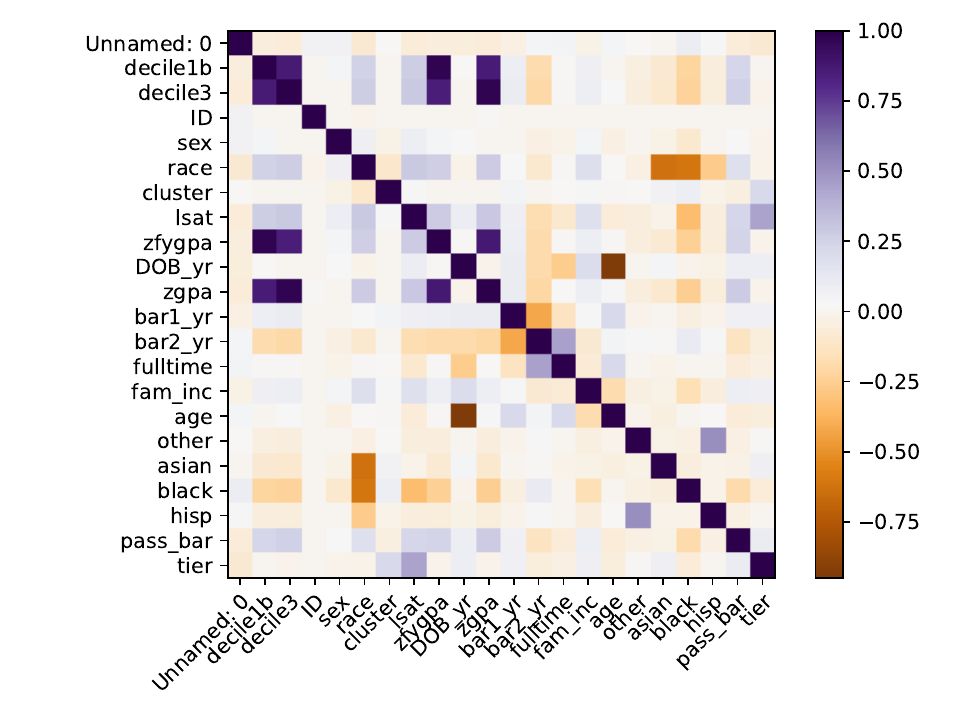}
    \caption{\revised{Empirical covariance of LSAC dataset.}}
    \label{fig:empiricalcovarianceLSAC}
    \end{subfigure}
    \hfill
    \begin{subfigure}{0.45\textwidth}
        \includegraphics[width=\linewidth]{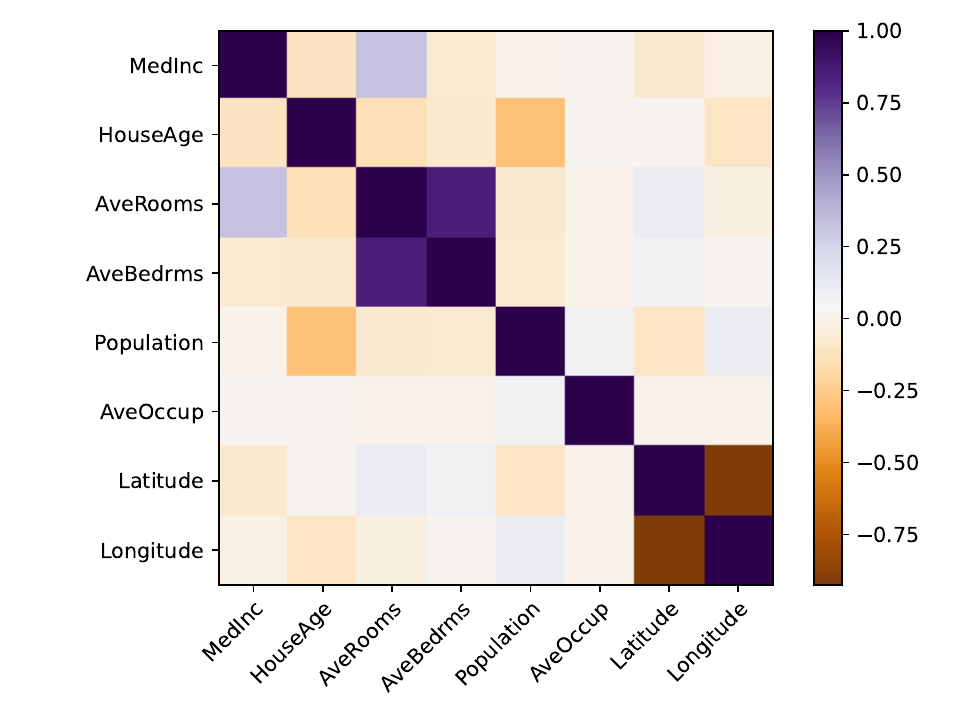}
        \caption{Empirical covariance of Housing dataset.}
        \label{fig:empcovHousing}
    \end{subfigure}
\end{figure}

\end{document}